\documentclass{article}
\usepackage{iclr2027_conference,times}

\usepackage{amsmath,amsfonts,bm}
\usepackage{amssymb}

\def\1{\bm{1}}

\DeclareMathAlphabet{\mathsfit}{\encodingdefault}{\sfdefault}{m}{sl}
\SetMathAlphabet{\mathsfit}{bold}{\encodingdefault}{\sfdefault}{bx}{n}

\usepackage{booktabs}
\usepackage{hyperref}
\usepackage{mathrsfs}
\usepackage{url}
\usepackage{pgfplots}
\usepgfplotslibrary{groupplots}
\usetikzlibrary{decorations.markings}
\pgfplotsset{compat=1.18}

\usetikzlibrary{intersections}
\usepgfplotslibrary{groupplots,fillbetween}

\usepackage{xcolor} 

\usepackage{amsthm}
\newtheorem{theorem}{Theorem}[section]
\newtheorem{proposition}[theorem]{Proposition}

\title{Verifier Errors in RLVR: Reward Hacking, Limits of Feedback, and Selective Control}

\author{Christian Moya$^1$\thanks{Corresponding author: \texttt{cmoyacal@purdue.edu}.}, Elliott Thornley$^2$, and Guang Lin$^1$  \\
$^1$ Purdue University, $^2$ National University of Singapore
}

\iclrfinalcopy 

\begin{document}
\maketitle
\lhead{Preprint. Under review.}  

\begin{abstract}
In reinforcement learning with verifiable rewards (RLVR), imperfect verifiers can reward incorrect responses, creating opportunities for reward hacking. Using gradient flow with a fixed verifier, we characterize the conditions under which reward rises while correctness falls. We then show that the observations available during RLVR are, in general, insufficient to detect or identify accepted errors, or to guarantee their reduction without sacrificing correct responses. To address this limit, we construct a correction using additional feedback about correctness from audits. This correction achieves \emph{selective control}: at the current policy, it lowers the probability of accepted errors and raises that of correct responses, provided it outweighs the pressure toward errors from verifier reward. Experiments with log linear and neural contextual bandits and with a language model support the analysis and show that selective control under partial auditing reduces accepted errors while increasing correctness.
\end{abstract}

\section{Introduction} \label{sec:introduction}
Reinforcement learning with verifiable rewards (RLVR)~\citep{lambert2025tulu} fine-tunes pretrained language models using rewards from automated checks. These checks come from a verifier, which compares a model's response with a known solution or runs the code in the response against tests. Rewarding responses that pass the verifier has improved reasoning performance on mathematics and coding tasks~\citep{DBLP:journals/corr/abs-2501-12948}. These gains, however, depend on the verifier: higher reward reflects progress on the intended task only when the verifier reliably judges correctness.

A verifier can accept a response even when the response fails the intended task. For example, the code in a response may pass the available tests yet fail on inputs those tests omit. Rewarding such responses reinforces accepted errors: responses that satisfy the verifier but fail the intended task. As the model produces accepted errors more often, average reward can rise even while performance on the intended task declines, the signature of reward hacking~\citep{NEURIPS2022_3d719fee}.

Empirical studies document reward hacking in RLVR. Reward hacking appears, for example, in inductive reasoning, where a model must infer a general rule from labeled examples. Models trained with RLVR often skip this rule and instead list the label of each example. The verifier accepts these responses because it checks only whether a response labels the given examples correctly~\citep{helff2026llms}. Beyond eliciting such new hacking behaviors, reinforcement learning can amplify those that models acquired during earlier fine-tuning~\citep{DBLP:journals/corr/abs-2603-07084}. These findings motivate methods that reduce reward hacking while preserving progress on the intended task.

Existing methods address reward hacking by constraining optimization~\citep{laidlaw2025correlated}, improving feedback~\citep{coste2024reward,lightman2024lets}, or detecting and correcting hacking~\citep{baker2025monitoring,wang2026detecting}. Yet because these methods rely on assumptions about correctness~\citep{everitt2017reinforcement}, they can leave verifier errors unresolved~\citep{eisenstein2024helping}. Training therefore continues with these verifier errors, rewarding correct responses and accepted errors alike. This raises a broader question: how do these verifier errors shape what RLVR learns about the intended task as training proceeds?

To answer this question, we hold an imperfect verifier fixed and study how the policy learns from its rewards. We model this learning as gradient flow because it makes the analysis tractable. Within this setting, we study when reward hacking grows, what the information available during RLVR training reveals about it, and when an intervention reduces hacking while preserving correctness. Our contributions are:

\textit{(i)} We derive a condition under which the share of accepted errors among accepted responses grows at the current policy, and we express it through two mechanisms: hack bias and correctness-to-hack leakage (Proposition~\ref{prop:hacking_growth}). We also characterize when the gradient flow increases verifier reward while reducing correctness, the signature of reward hacking (Proposition~\ref{prop:reward_hacking_flow}).

\textit{(ii)} We establish limits on detection, identification, and selective control from verifier feedback alone. We show that the complete RLVR training record provides no advantage in detecting accepted errors and can leave correctness unidentifiable (Propositions~\ref{prop:verifier_information_limit} and~\ref{prop:verifier_identification_limit}). We further prove that no controller using this record alone can guarantee fewer accepted errors while preserving correct responses (Proposition~\ref{prop:verifier_control_limit}).

\textit{(iii)} We derive conditions under which a correction to RLVR training reduces reward hacking while improving correctness, and we design projected audit correction, which satisfies them using correctness labels from audits of some responses (Theorem~\ref{thm:selective_control}).

We test our theory in settings of increasing complexity. In contextual bandits, reward hacking emerges and projected audit correction reduces hacks while increasing correctness. In a language model, verifier reward rises while correctness falls, and the same correction reverses this trend.

\section{Problem Formulation} \label{sec:problem_formulation}
\vspace{-.5em}
\subsection{Reinforcement Learning with Verifier Rewards~(RLVR)} \label{subsec:rlvr}

\textbf{Policy.}  Given a prompt $x$ drawn from a prompt distribution $\mathcal{D}$, the language policy $\pi_\theta(\cdot \mid x)$ samples a response $y$ from the response set $\mathcal{Y}_x$. This policy has a trainable parameter vector $\theta  \in \mathbb{R}^d$.

\textbf{RLVR.} We analyze RLVR~\citep{lambert2025tulu}, where the verifier assigns a reward $R(x,y) \in \{0,1\}$. A reward of~$1$ indicates acceptance. RLVR training maximizes the scalar objective: $J_R(\theta) = \mathbb{E}_{x \sim \mathcal{D},\; y \sim \pi_\theta(\cdot \mid x)}[R(x,y)].$ This objective represents the probability of acceptance averaged over prompts and sampled responses, yielding a value in $[0,1]$.
\vspace{-.5em}
\subsection{Correctness and Accepted Errors} \label{subsec:correctness_and_errors}
We define a fixed correctness indicator~$c(x,y) \in \{0,1\}$ for each pair~$(x,y)$, where $c=1$ denotes a correct response. To focus our analysis on the consequences of accepting incorrect responses, we assume the verifier produces no false negatives. This assumption implies that $c(x,y) \le R(x,y)$ for all pairs, meaning the verifier accepts all correct responses. We call the event of an incorrect response being accepted, i.e., $\{R(x,y)=1,c(x,y)=0\}$, a \textit{hack}.

For any given prompt~$x$,  we can partition the set of all accepted responses, $A_x = \{y \in \mathcal{Y}_x : R(x,y) = 1\}$. This set consists of two disjoint subsets. The first is the set of correct responses, $G_x = \{y \in \mathcal{Y}_x : c(x,y) = 1\}$. The second is the set of hacks, $H_x = A_x \setminus G_x$. These sets remain fixed throughout training. The policy, however, can learn to sample from them with different frequencies.

\textbf{Acceptance and hacking.} For any given prompt $x$, we define two key probabilities. The first is the acceptance probability, $p_x(\theta) = \Pr_{\pi_\theta}(Y \in A_x \mid x)$. The second probability, defined only when $p_x(\theta) > 0,$ is the hacked share, $q_x(\theta) = \Pr_{\pi_\theta}(Y \in H_x \mid Y \in A_x, x).$ Both probabilities depend on the policy parameters~$\theta$ and change throughout RLVR training. Using the acceptance probability $p(\theta)$, we can now rewrite the RLVR objective as the following expectation over prompts: $J_R(\theta) = \mathbb{E}_{x \sim \mathcal{D}}[p_x(\theta)]$. Because this objective depends only on~$p_x(\theta)$, it makes no distinction between correct responses and hacks (see Appendix~\ref{app:acceptance_and_hacking}).
\vspace{-.5em}
\subsection{Gradient Flow Dynamics} \label{subsec:gradient_flow_dynamics}
To analyze how the policy evolves, we use gradient ascent in continuous time. This model isolates the effects of the verifier signal by removing the noise inherent in stochastic optimization. When the objective~$J_R$ is differentiable, the parameters evolve according to the gradient flow:
\begin{equation}
\label{eq:verifier_flow}
\dot\theta(t) = g_R(\theta(t)) := \nabla_\theta J_R(\theta(t)),
\end{equation}
where $g_R(\theta) \in \mathbb{R}^d$ is the exact gradient. A direct consequence is that the objective can only improve, as $\dot J_R(\theta(t)) = \|g_R(\theta(t))\|^2 \geq 0$ (see Appendix~\ref{app:rlvr_dynamics}). This guarantee, however, applies only to the overall acceptance probability. It reveals nothing about the hacked share, $q_x(\theta)$ which may increase, decrease, or remain unchanged.

\textbf{Research questions.} We now define our two central research questions within our gradient flow framework. First, under what conditions does the hacked share, $q_x(\theta)$, grow or persist during training? Answering this question requires distinguishing increases in acceptance due to correct responses from shifts in the policy's sampling toward hacks. Second, since the verifier's feedback is blind to correctness, is it possible to reduce the hacked share using verifier's information alone? If not, what additional information would allow it to do so?

\section{When Hacking Grows: A Population Analysis} \label{sec:hacking}
This section analyzes the population dynamics of hacked responses. We show that the accepted population and the hacked share can increase together, and characterize when verifier flow increases reward while reducing correctness. We then describe how hack bias and correctness-to-hack leakage drive hacking growth. Ultimately, these drivers create a latent vulnerability that on-policy training can reinforce.
\vspace{-.5em}
\subsection{Verifier Acceptance and Hacked Share Can Increase Together} \label{subsec:acceptance_hack_together}
We partition responses into three populations: correct responses $G=\{(x,y):y\in G_x\}$, hacks $H=\{(x,y):y\in H_x\}$, and rejected responses $N=\{(x,y):y\in\mathcal{Y}_x,\;R(x,y)=0\}$. Let $p_G(\theta):=\Pr_\theta(G)\in[0,1]$ and $p_H(\theta):=\Pr_\theta(H)\in[0,1]$ denote the probabilities of correct responses and hacks, respectively. To track both how often the verifier accepts and how often accepted responses are hacks, we define $p(\theta):=\Pr_\theta(G\cup H)=J_R(\theta)$ and  $q(\theta):=\Pr_\theta(H\mid G\cup H),$ where $p(\theta)\in[0,1]$ measures overall acceptance and $q(\theta)\in[0,1]$ measures the hacked share among accepted pairs. The latter requires $p(\theta)>0$.

Expected reward depends on total acceptance $p=p_G+p_H$, regardless of how it splits between correct responses and hacks. Writing $p_H(\theta) = p(\theta)q(\theta)$ and $p_G(\theta) = p(\theta)(1-q(\theta))$ and differentiating with respect to time gives
$$
\dot p_H(\theta) = q(\theta)\dot p(\theta) + p(\theta)\dot q(\theta), \qquad \dot p_G(\theta) = (1-q(\theta))\dot p(\theta) - p(\theta)\dot q(\theta),
$$
where the $\dot p$ terms reflect changes in total acceptance and the $\dot q$ terms reflect shifts between correct responses and hacks. These identities impose no trade-off between acceptance and hacked share; both can increase together.

\textbf{Reward hacking.} \citet{NEURIPS2022_3d719fee} call a proxy reward hackable if some pair of policies has higher expected proxy return but lower expected true return. In our setting, the verifier reward $J_R=p$ serves as the proxy, and correctness $J_C=p_G$ serves as the true return. We characterize when gradient flow on $J_R$, which we call verifier flow, generates such a pair.

\begin{proposition}[Reward hacking along the flow]
\label{prop:reward_hacking_flow}
Suppose $p_G$ and $p_H$ are continuously differentiable and $p>0$ along verifier flow on $[0,T]$. Then there exist $t_1<t_2$ with $J_R(\theta(t_2))>J_R(\theta(t_1))$ and $J_C(\theta(t_2))<J_C(\theta(t_1))$ if and only if $p(\theta(t))\dot q(\theta(t))>(1-q(\theta(t)))\dot p(\theta(t))$ at some $t\in(0,T)$.
\end{proposition}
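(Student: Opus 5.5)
The plan is to reduce the statement to a sign condition on $\dot p_G$ along the trajectory and then combine two elementary facts: the mean value theorem, and the monotonicity of $J_R$ under verifier flow recorded in Section~\ref{subsec:gradient_flow_dynamics}. Throughout I take $p_G$ and $p_H$ to be continuously differentiable as functions of $\theta$, and $\theta(\cdot)$ to be a $C^1$ solution of \eqref{eq:verifier_flow}.

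\textbf{Reduction.} Since $p=p_G+p_H>0$ on $[0,T]$, the quotient $q=p_H/p$ is $C^1$ along the flow. The decomposition $\dot p_G=(1-q)\dot p-p\dot q$ derived above therefore holds pointwise. Because $J_C=p_G$, the condition $p\dot q>(1-q)\dot p$ at time $t$ is exactly $\tfrac{d}{dt}J_C(\theta(t))<0$. Both directions are thus statements about the sign of $\dot p_G$.

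\textbf{Only if.} Suppose $t_1<t_2$ satisfy $J_C(\theta(t_2))<J_C(\theta(t_1))$. The function $t\mapsto p_G(\theta(t))$ is differentiable, so the mean value theorem gives some $s\in(t_1,t_2)\subset(0,T)$ with $\dot p_G(s)<0$. By the reduction, $p\dot q>(1-q)\dot p$ at $s$. The reward condition is not even needed in this direction.

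\textbf{If.} Suppose $\dot p_G(t^*)<0$ for some $t^*\in(0,T)$.
\begin{itemize}
\item By continuity of $\dot p_G$ along the flow, $\dot p_G<0$ on an open interval $I\ni t^*$ with $I\subset(0,T)$. Any $t_1<t_2$ in $I$ then give $J_C(\theta(t_2))<J_C(\theta(t_1))$.
\item It remains to get a \emph{strict} increase of $J_R$, and I expect this to be the only delicate point. The flow only guarantees $\dot J_R=\|g_R\|^2\ge 0$, so I must rule out $\|g_R\|=0$ on $I$.
\item By the chain rule, $\dot p_G(t)=\nabla_\theta p_G(\theta(t))^\top g_R(\theta(t))$. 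If $g_R(\theta(t))=0$ then $\dot p_G(t)=0$, which contradicts $\dot p_G<0$ on $I$. Hence $g_R(\theta(t))\neq 0$ for all $t\in I$.
\item Consequently $\dot J_R(\theta(t))=\|g_R(\theta(t))\|^2>0$ on $I$. Integrating from $t_1$ to $t_2$ gives $J_R(\theta(t_2))>J_R(\theta(t_1))$.
\end{itemize}
This argument avoids any appeal to uniqueness of solutions or stationarity of equilibria; it uses only the pointwise chain rule, so I do not anticipate further obstacles.
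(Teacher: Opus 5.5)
Your proof is correct and follows the paper's own argument: the same reduction of the condition to $\dot p_G<0$, the mean value theorem for necessity, and for sufficiency the chain-rule observation that $\dot p_G<0$ forces $g_R\neq 0$, so $\dot J_R=\|g_R\|^2>0$ on a short interval. The only difference is cosmetic: you first extend $\dot p_G<0$ to an interval and then deduce $g_R\neq 0$ at each point of it, whereas the paper deduces $g_R\neq 0$ at $t_\star$ and then uses continuity to keep both strict inequalities on $[t_\star,t_\star+\varepsilon]$.
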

\vspace{-.5em}
\begin{proof}
See Appendix~\ref{app:proof_reward_hacking_flow}.
\end{proof}
\vspace{-.5em}
\textbf{Interpretation.} The two sides of the inequality compete. The term $p\dot q$ is the correctness lost as the accepted population shifts toward hacked responses, and $(1-q)\dot p$ is the correctness gained from rising acceptance. Training produces reward hacking once the loss outweighs the gain. The condition needs to hold at only a single time to produce a pair of policies that witnesses hackability. Because both terms depend on the policy, the condition can hold at one time and fail at another.
\vspace{-.5em}
\subsection{The Dynamics of the Hacked Share} \label{subsec:dynamics_hacks}
While the hacked share $q$ remains the quantity of interest, log-odds coordinates simplify its dynamics. When both $p_H(\theta)>0$ and $p_G(\theta)>0$, we can define the log odds $z(\theta)\in\mathbb{R}$ as
$$
z(\theta) := \log\frac{p_H(\theta)}{p_G(\theta)} = \log\frac{q(\theta)}{1-q(\theta)},
$$
Because $z$ is strictly increasing in $q$, the sign of~$\dot z$ determines whether the hacked share grows, shrinks, or remains constant.

\textbf{The log odds dynamics.} Since $z=\log p_H - \log p_G$, the rate~$\dot z$ depends on the gradient of each group's log-probability. For each group $S \in \{G,H,N\}$ with positive, continuously differentiable probability, we denote this gradient by $\bar s_S(\theta):=\nabla_\theta\log\Pr_\theta(S)\in\mathbb{R}^d,$ the group's average score. Along the verifier flow~\eqref{eq:verifier_flow}, differentiating~$z$ yields
\begin{equation}
\label{eq:log_odds_dynamics}
\dot z(\theta) = \bigl(\bar s_H(\theta) - \bar s_G(\theta) \bigr)^\top g_R.
\end{equation}
The log odds grow when the reward gradient~$g_R$ aligns with the score difference between hacks and correct responses. Differentiating total acceptance $p=p_G+p_H=1-p_N$ shows that $g_R$ is a weighted combination of the three group scores: $g_R=p\bigl[(1-q)\bar s_G+q\bar s_H\bigr]=-(1-p)\bar s_N.$ This decomposition, combined with~\eqref{eq:log_odds_dynamics}, resolves $\dot z$ into two mechanisms:
\begin{equation}  \label{eq:hack_mechanisms}
\dot z = p(1-p)[
\underbrace{(\bar s_H-\bar s_G)^\top(\bar s_G-\bar s_N)}_{\text{correctness-to-hack leakage}}
+
\underbrace{q\|\bar s_H-\bar s_G\|^2}_{\text{hack bias}}].
\end{equation}
The full derivation is provided in Appendix~\ref{app:proof_of_mechanism}.
\vspace{-.5em}
\subsection{Main Result: The Mechanism of Hacking Growth} \label{subsec:hack_mechanism}
We now state our main result characterizing the mechanism of hacking growth.

\begin{proposition}[Population growth of the hacked share]
\label{prop:hacking_growth}
Along verifier gradient flow, with $p_G,p_H,p_N>0$, the population hacked share grows if and only if
$$
(\bar s_H - \bar s_G)^\top(\bar s_G - \bar s_N)
+
q\|\bar s_H - \bar s_G\|^2
>0.
$$
\end{proposition}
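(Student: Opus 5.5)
The plan is to reduce the growth of the hacked share to the sign of $\dot z$ and then compute $\dot z$ explicitly from the group scores. Since $p_G,p_H>0$, the accepted mass $p=p_G+p_H>0$ and $q\in(0,1)$, so $z=\log(q/(1-q))$ is well defined and strictly increasing in $q$. Thus $\dot q = q(1-q)\dot z$ has the same sign as $\dot z$, and "the hacked share grows" is equivalent to $\dot z>0$. Because $p_N>0$, we also have $p<1$, so the prefactor $p(1-p)$ in \eqref{eq:hack_mechanisms} is strictly positive. The proposition then follows once \eqref{eq:hack_mechanisms} is established: $\dot z>0$ if and only if the bracketed expression is positive.

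The derivation of \eqref{eq:hack_mechanisms} has three steps.

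\emph{Step 1: the log-odds rate.} Writing $z=\log p_H-\log p_G$ and applying the chain rule along $\dot\theta=g_R$ gives \eqref{eq:log_odds_dynamics}.

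\emph{Step 2: two expressions for the reward gradient.} Since $\nabla p_S = p_S\bar s_S$, we have
\[
g_R=\nabla(p_G+p_H)=p\bigl[(1-q)\bar s_G+q\bar s_H\bigr].
\]
Since $p=1-p_N$, we also have $g_R=-\nabla p_N=-(1-p)\bar s_N$. Taking $(1-p)$ times the first expression plus $p$ times the second, both of which equal $g_R$, yields
\[
g_R = p(1-p)\bigl[(1-q)\bar s_G+q\bar s_H-\bar s_N\bigr].
\]
This rearranges to
\[
g_R = p(1-p)\bigl[(\bar s_G-\bar s_N)+q(\bar s_H-\bar s_G)\bigr].
\]

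\emph{Step 3: substitute.} Taking the inner product with $\bar s_H-\bar s_G$ gives exactly the leakage term plus $q\|\bar s_H-\bar s_G\|^2$, each multiplied by $p(1-p)$.

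The only delicate step is the convex-combination trick in Step 2. Using either expression for $g_R$ alone does not produce the symmetric two-mechanism form, so I would verify that identity carefully. The remaining steps are routine: the regularity needed for the scores to exist is supplied by the standing assumption that $p_G,p_H,p_N$ are positive and continuously differentiable.
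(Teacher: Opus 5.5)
Your proposal is correct and follows the paper's proof essentially step for step: you reduce growth of $q$ to the sign of $\dot z$ via $\dot q=q(1-q)\dot z$, write $g_R$ both as $p[(1-q)\bar s_G+q\bar s_H]$ and as $-(1-p)\bar s_N$, combine them into $g_R=p(1-p)[(1-q)\bar s_G+q\bar s_H-\bar s_N]$, and substitute into $\dot z=(\bar s_H-\bar s_G)^\top g_R$. Your explicit weighting, $(1-p)$ times the first expression plus $p$ times the second, is exactly the ``combining'' step the paper leaves implicit, and your observation that $p_N>0$ forces $p(1-p)>0$ is what makes the condition an equivalence.
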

When the inequality holds, $\dot z > 0$. Since $\dot q = q(1-q)\,\dot z$ and $\dot p = \|g_R\|^2 \geq 0$, both the hacked share and total acceptance grow strictly, while the correct share among accepted responses decreases.

The two mechanisms of~\eqref{eq:hack_mechanisms} play different roles:

\emph{(i) Hack bias}, $q\|\bar s_H-\bar s_G\|^2$, arises because hacks contribute to the reward gradient in proportion to their share among accepted responses. It is always nonnegative and grows with the hacked share at fixed group scores.

\emph{(ii) Correctness-to-hack leakage}, $(\bar s_H-\bar s_G)^\top(\bar s_G-\bar s_N)$, arises when the direction that favors correct responses over rejected ones also favors hacks over correct responses. It can be positive or negative: positive leakage reinforces hack bias, negative leakage opposes it and can reverse growth when its magnitude exceeds the bias.

\textbf{Interpretation.} This result is local: the group scores and hacked share depend on the current parameters, so the growth condition can change during training. The group scores evolve, so continued growth is not guaranteed. Yet a feedback loop is present: a growing hacked share strengthens hack bias at fixed group scores, which favors further growth. This result exposes a latent vulnerability: hacking can reinforce the conditions that favor its own growth, because the policy generates its own training samples. Moreover, RLVR training is not designed to oppose this reinforcement. We must therefore build any hacking defense on top of RLVR. Can the information generated during RLVR training support such a defense?

\section{What RLVR Training Reveals: Limits of Verifier Feedback} \label{sec:verifier_feedback}
Section~\ref{sec:hacking} showed that rising rewards can mask a growing share of hacks. We now study whether the observations collected during RLVR training can expose them. We show that they cannot: training observations confer no detection advantage, leaving correct responses unidentifiable, and making selective control impossible.
\vspace{-.5em}
\subsection{RLVR Training Observations, Monitors, and Compatible Correctness} \label{subsec:RLVR_observations}
\textbf{RLVR training observations.} RLVR training produces prompts, sampled responses, and verifier labels. Let $L_t$ collect these observations through time $t$, along with policy parameters, probabilities, gradients, and other derived quantities. This record is deliberately generous: the limitations below do not arise from incomplete logging. We assume that $L_t$ contains no additional correctness feedback beyond the verifier. We write $\mathcal{F}_t^R := \sigma(L_t)$ for the information contained in the record.

\textbf{Monitors.} A monitor is an algorithm that uses $L_t$ to assess hacking. A detection monitor raises a binary alarm about whether hacks exist; an identification monitor predicts the correctness label $\widehat{c}_t(x,y)$ of an accepted response. We consider deterministic monitors here and defer randomized extensions to Appendix~\ref{app:verifier_feedback}.

\textbf{Compatible correctness assignments.} A fixed assignment $c$ determines correctness. To study what the training record reveals about $c$, we consider alternative assignments consistent with the verifier. Assuming no false negatives, we define 
$$
\mathcal{C}_R := \{c' : c'(x,y) \in \{0,1\},c'(x,y) \le R(x,y) \text{ for all } (x,y)\}.
$$
 This class models uncertainty about the true correctness assignment; $c$ itself remains fixed during training. All members agree on rejected responses, but they may disagree on accepted ones. To isolate the effect of this uncertainty, we compare these alternative assignments while holding the prompt distribution, verifier, initialization, and training algorithm fixed.
 \vspace{-.5em}
\subsection{Limits of Detecting Hacking} \label{subsec:limits_of_detection}
Verifier acceptance can increase while the hacked share grows. Can the training record reveal even the presence of hacks? Under $c = R$, every accepted response is correct. A compatible alternative $c_1 \in \mathcal{C}_R$ with $\Pr_\theta(R=1, c_1=0) > 0$ admits hacks. \emph{Detection} requires distinguishing these two cases using only~$L_t$.
\begin{proposition}[Limits of detection]
\label{prop:verifier_information_limit}
Under the comparison setup above, fix $\theta$, $t$, and $c_1 \in \mathcal{C}_R$ with $\Pr_\theta(R=1, c_1=0) > 0$. The training record has the same distribution under both assignments: $\operatorname{Law}_{c=R}(L_t) = \operatorname{Law}_{c=c_1}(L_t).$ Thus, no monitor can distinguish $c = c_1$, which admits hacks, from $c = R$, which has none.
\end{proposition}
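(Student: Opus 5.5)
The plan is to show that the correctness assignment $c$ never enters the data-generating process of RLVR training. The record $L_t$ is then a measurable function of primitives that are identical under $c=R$ and $c=c_1$, so its law is the same under both. I will make this explicit by writing the training process as a map. Fix the prompt distribution $\mathcal{D}$, the verifier $R$, the initialization $\theta(0)$, the training algorithm, and an exogenous source of randomness $\omega$ (prompt draws, response sampling, any algorithmic noise). The comparison setup in Section~\ref{subsec:RLVR_observations} holds all of these fixed across assignments. The goal is to show $L_t = \Phi_t(\omega; \mathcal{D}, R, \theta(0))$ for a map $\Phi_t$ that does not take $c$ as an argument.

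First, the dynamics do not depend on $c$. The objective $J_R(\theta)=\mathbb{E}_{x,y}[R(x,y)]$ and its gradient $g_R$ are defined from $\mathcal{D}$, $\pi_\theta$ and $R$ alone. Hence the verifier flow~\eqref{eq:verifier_flow}, or its sampled discrete counterpart, produces the same parameter path $\theta(s)$, $s\le t$, under both assignments. This holds pathwise in $\omega$, and by uniqueness of the ODE solution in the deterministic case.

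Second, the observations do not depend on $c$. Prompts are drawn from $\mathcal{D}$, responses from $\pi_{\theta(s)}(\cdot\mid x)$, and labels are $R(x,y)$. The derived quantities in $L_t$ (probabilities, gradients) are functions of $\theta(s)$ and these samples. By the standing assumption that $L_t$ contains no correctness feedback beyond the verifier, no coordinate of $L_t$ is a function of $c$. I would argue this by induction over the steps (or a pathwise argument in continuous time): if the history up to step $k$ coincides under both assignments for each $\omega$, then the next prompt, response, label and update coincide as well. It follows that $\Phi_t$ is the same map under both assignments, and pushing forward the common law of $\omega$ gives $\operatorname{Law}_{c=R}(L_t)=\operatorname{Law}_{c=c_1}(L_t)$.

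Finally, I derive the monitor statement. A deterministic monitor is a measurable function $M(L_t)$, so its output has the same law under both hypotheses. Any test therefore has power equal to its size, and no monitor can distinguish the two cases. The hypothesis $\Pr_\theta(R=1,c_1=0)>0$ is used only to confirm that $c_1$ genuinely admits hacks. It requires $c_1\in\mathcal{C}_R$, and $R\in\mathcal{C}_R$ holds trivially.

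The main obstacle is making precise that the record is ``$c$-free.'' This is partly a modeling assumption, and I would state it as such: formally, $L_t$ is $\sigma(\omega)$-measurable through a map determined by $(\mathcal{D},R,\theta(0),\text{algorithm})$ alone. A subtle point is that derived quantities such as $p_G$ or $q$ must be excluded from $L_t$, since they depend on $c$. The proof should note that ``probabilities'' in $L_t$ means quantities computable from $\pi_\theta$ and $R$, such as $p$, and not $q$.
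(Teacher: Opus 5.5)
Your proposal is correct and follows the paper's argument: both couple the two runs through shared randomness, show by induction on training steps (or by uniqueness of the continuous-time trajectory) that $L_t$ is a $c$-free function of the same primitives, and then average over the common randomness to obtain equal laws and identical monitor behavior. Your caveat that ``probabilities'' in $L_t$ must mean $c$-free quantities such as $p$, and not $p_G$ or $q$, is exactly what the paper's standing assumption encodes: $L_t$ contains no correctness feedback beyond the verifier.
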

\vspace{-.5em}
\begin{proof}
See Appendix~\ref{app:verifier_only_detection}.
\end{proof}
\vspace{-.5em}
\textbf{Interpretation.} Because alternative correctness assignments do not affect verifier feedback, they leave the training record's distribution unchanged. Thus, any monitor's detection rate under $c_1$ equals its false-alarm rate under $c = R$. Even knowing that hacks exist does not reveal which accepted responses are wrong.
\vspace{-.5em}
\subsection{Limits of Identifying Hacks} \label{subsec:limits_of_id}
Suppose we know that hacks exist. The remaining task is to determine which accepted responses are wrong. \emph{Identification} requires a monitor to recover the true label $c(x,y)$ of an accepted response from $L_t$. To succeed, the prediction $\widehat{c}_t(x,y)$ must distinguish compatible assignments that disagree on this response, even when we restrict $\mathcal{C}_R$ to assignments that admit hacks.

\begin{proposition}[Limits of identification] \label{prop:verifier_identification_limit}
Under the comparison setup above, fix $\theta$ and $t$. Suppose $\theta$ assigns positive probability to both correct responses and hacks under some assignment in $\mathcal{C}_R$. For every monitor and every accepted pair $(x,y)$, there is an assignment $c_1 \in \mathcal{C}_R$ with $\Pr_\theta(R=1, c_1=0) > 0$ such that $\Pr_{c_1}\!\left(\widehat{c}_t(x,y) \ne c_1(x,y)\right) \ge \tfrac{1}{2}.$ Thus, no monitor can guarantee the correct label of an accepted response, even when we know hacks exist.
\end{proposition}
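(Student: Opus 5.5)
The approach is a two-point (Le Cam–style) argument that reuses the invariance in Proposition~\ref{prop:verifier_information_limit}. Fix $\theta$, $t$, a monitor, and an accepted pair $(x,y)$, so $R(x,y)=1$. The plan is to build two assignments $c_0,c_1\in\mathcal{C}_R$ with three properties: each admits hacks with positive probability under $\theta$, they agree everywhere except at $(x,y)$, and $c_0(x,y)=1$ while $c_1(x,y)=0$.

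By the hypothesis, some $c^\ast\in\mathcal{C}_R$ gives positive probability to both $G$ and $H$ under $\theta$. So there exist accepted pairs $(x_h,y_h)$ with $c^\ast=0$ and positive total hack mass. Start from $c^\ast$ and set its value at $(x,y)$ to $1$ for $c_0$ and to $0$ for $c_1$; both stay in $\mathcal{C}_R$ because $R(x,y)=1$. The hack mass under $c_1$ is at least that under $c^\ast$, so it is positive.

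The delicate case is $c_0$, when $(x,y)$ carries all of $c^\ast$'s hack mass. Here I use the correct mass of $c^\ast$ instead. Since $\Pr_\theta(G)>0$ under $c^\ast$, there is an accepted pair $(x',y')\neq(x,y)$ with positive probability. Relabel it as $0$ in both $c_0$ and $c_1$. Then $c_0$ has positive hack mass at $(x',y')$, and $c_1$ still has hack mass at $(x,y)$.

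One case remains: no accepted pair other than $(x,y)$ has positive probability. Then $c^\ast$ would put all accepted mass on the single pair $(x,y)$, which cannot be both correct and a hack. This contradicts $\Pr_\theta(G),\Pr_\theta(H)>0$ under $c^\ast$, so the case cannot occur. If $(x,y)$ has zero probability, the argument simplifies, since we can keep the positive-mass pairs fixed.

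Next, I invoke the mechanism behind Proposition~\ref{prop:verifier_information_limit}. The record $L_t$ is generated from $\mathcal{D}$, the initialization, the training algorithm, and $R$ alone, so its law does not depend on which compatible assignment is true. Hence $\operatorname{Law}_{c_0}(L_t)=\operatorname{Law}_{c_1}(L_t)$. It follows that the prediction $\widehat{c}_t(x,y)$, a deterministic function of $L_t$ (randomized monitors are handled by also integrating over the monitor's internal randomness), has the same distribution $\mu$ under both assignments.

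Let $a=\mu(\widehat{c}_t(x,y)=1)$. Then
\[
\Pr_{c_0}\bigl(\widehat{c}_t\ne c_0(x,y)\bigr)+\Pr_{c_1}\bigl(\widehat{c}_t\ne c_1(x,y)\bigr)=(1-a)+a=1,
\]
so one of the two error probabilities is at least $\tfrac12$. Choosing that assignment as the $c_1$ in the statement finishes the proof; both candidates admit hacks, so either choice is valid.

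The main obstacle is the construction step: making sure both competing assignments admit hacks with positive $\theta$-probability while differing at $(x,y)$. That requires the case analysis above using the positive-mass pairs supplied by the hypothesis. The invariance of the record's law is inherited directly from the earlier proposition.
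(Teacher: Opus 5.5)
Your proof is correct and has the same skeleton as the paper's. The record has the same law under any two assignments in $\mathcal{C}_R$, so two hack-admitting assignments that disagree at $(x,y)$ force the two error probabilities to sum to one.

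Where you differ is how you build that pair. The paper takes the premise's assignment $c_1$ and its complement $c_2=R-c_1$ on the accepted set. These disagree at every accepted pair, and both admit hacks immediately because $p_{H,c_2}=p_{G,c_1}>0$ and $p_{G,c_2}=p_{H,c_1}>0$. That removes your case analysis entirely: in your delicate case, $(c^\ast,R-c^\ast)$ already works with no relabeling. It also means one pair serves every queried $(x,y)$.

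Your local perturbation costs a case split but gives a sharper conclusion. Your two assignments agree everywhere except at $(x,y)$. So the label there stays unidentifiable even if every other label were revealed, for instance by auditing other responses.

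One small caveat concerns two of your steps: ``there is an accepted pair $(x',y')\neq(x,y)$ with positive probability'' and ``otherwise all accepted mass sits on $(x,y)$.'' Both assume the relevant mass is carried by atoms. That holds for the discrete prompts and responses in the paper's experiments. However, the appendix allows a general $\mathcal{D}$ (it integrates against $\mathcal{D}(dx)$). In that case, relabel a positive-probability set of $c^\ast$-correct pairs (for example, all of them) as $0$ in both assignments. This keeps both assignments in $\mathcal{C}_R$, keeps them equal off $(x,y)$, and gives each positive hack mass.
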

\begin{proof}
See Appendix~\ref{app:verifier_only_identification}.
\end{proof}
\textbf{Interpretation.} Two compatible assignments can label the same accepted response differently while producing identical records. Both can admit hacks, so knowing that hacks exist does not resolve the disagreement. A prediction correct under one assignment is wrong under the other. Thus, the monitor incurs an error probability of at least $\frac{1}{2}$ under at least one assignment. Because reducing hacks need not require identifying every hack, this result alone does not rule out selective control.
\vspace{-.5em}
\subsection{Limits on Selective Control from Verifier Feedback Alone} \label{subsec:limits_of_control}
Even if we cannot identify every hack, can we reduce hack probability without reducing the probability of correct responses? Suppose a controller uses $L_t$ to apply a correction $u(t) \in \mathbb{R}^d$ to the verifier flow, yielding $\dot{\theta} = g_R + u(t)$. Along this corrected flow, \emph{selective control} requires $\dot{p}_H < 0$ and $\dot{p}_G \geq 0$ whenever $p_H > 0$. We target $p_H$ itself, not the hacked share $q$, because $q$ can fall even while $p_H$ grows. For a uniform guarantee, this same controller must satisfy these conditions under every compatible assignment in~$\mathcal{C}_R$.

\begin{proposition}[Limits of selective control]
\label{prop:verifier_control_limit}
Under the comparison setup above, suppose the initial policy assigns positive probability to both correct responses and hacks under some assignment in~$\mathcal{C}_R$. For corrected flows with differentiable group probabilities, no controller using only $L_t$ can guarantee $\dot{p}_H < 0$ and $\dot{p}_G \geq 0$ whenever $p_H > 0$, under every assignment $c' \in \mathcal{C}_R$. Here, each assignment $c'$ defines its own $p_H$ and $p_G$.
\end{proposition}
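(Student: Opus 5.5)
The argument is by contradiction, via an indistinguishability construction in the spirit of Propositions~\ref{prop:verifier_information_limit} and~\ref{prop:verifier_identification_limit}. Suppose a controller $u(t)$, a function of $L_t$ alone, achieved $\dot p_H<0$ and $\dot p_G\ge 0$ whenever $p_H>0$, under every $c'\in\mathcal{C}_R$. The record $L_t$ is generated by the prompts, the policy, the verifier and the controller, and none of these depends on $c'$. So, exactly as in the proof of Proposition~\ref{prop:verifier_information_limit}, the record has the same law under every compatible assignment. In the deterministic gradient-flow setting the correction $u(t)$, and hence the whole trajectory $\theta(t)$, is then identical across all $c'\in\mathcal{C}_R$. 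In particular, at $t=0$ the velocity $v:=\dot\theta(0)=g_R(\theta_0)+u(0)$ is a single vector, independent of the assignment. For randomized controllers I would condition on the realized record and apply the same argument pathwise.

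Next I would translate the requirements into conditions on $v$. For an assignment $c'$ with hack set $H'$ and correct set $G'$, we have $p_{H'}(\theta)=\sum_{(x,y)\in H'}\mathcal{D}(x)\pi_\theta(y\mid x)$, so $\dot p_{H'}=\sum_{H'} w_{xy}$, where $w_{xy}:=\mathcal{D}(x)\,\nabla_\theta\pi_\theta(y\mid x)^\top v$ is the instantaneous probability velocity of the accepted pair $(x,y)$. The same holds for $G'$. The accepted pairs split between $G'$ and $H'$, so $\dot p_{G'}+\dot p_{H'}=\sum_{A}w_{xy}=:W$ is independent of $c'$.

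By hypothesis, some assignment gives positive probability to both correct responses and hacks at $\theta_0$. Hence there are at least two accepted pairs with positive probability, or at least an accepted set with positive mass that can be split into two nonempty parts. I would then choose two adversarial assignments, both with $p_H>0$.

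\emph{Case 1.} Some accepted pair $a$ with $\pi_{\theta_0}(a)>0$ has $w_a\ge 0$. Let $c_1$ label only $a$ as a hack and every other accepted pair as correct. Then $\dot p_{H_1}=w_a\ge 0$, which violates $\dot p_H<0$.

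\emph{Case 2.} Every positive-probability accepted pair has $w_a<0$. Take any such $a$, and let $c_1$ label $a$ as correct and all other accepted pairs as hacks. Then $\dot p_{G_1}=w_a<0$, which violates $\dot p_G\ge 0$, while $p_{H_1}>0$ because another accepted pair with positive probability exists.

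The main obstacle is the zero-probability bookkeeping. If a pair $a$ has $\pi_{\theta_0}(a)=0$ but $w_a\ne 0$, sign arguments at the boundary need care. This is why I restrict attention to pairs with positive probability, and why I assume the groups have nonzero probability: every compatible assignment then has $p_H>0$ exactly when its hack set contains a positive-mass pair. Pairs with zero mass and nonzero $w_a$ cannot occur for smooth nonnegative $\pi_\theta$ at an interior point. I would also check, in the continuous-response case, that the accepted set at $\theta_0$ can be split into two positive-mass pieces so that the case analysis above applies, using sets instead of single pairs. In either case some $c_1\in\mathcal{C}_R$ violates the guarantee at $t=0$, which proves the proposition.
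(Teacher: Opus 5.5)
Your proof is correct, but it differs from the paper's in the key step: the choice of adversarial assignments. Both arguments start the same way. The record, and hence the corrected velocity $v=g_R+u$, is identical under every $c'\in\mathcal C_R$.

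The paper then uses one fixed pair of assignments: $c_1$ from the hypothesis and $c_2=R-c_1$, which exchange $G$ and $H$. Because $\dot p_{H,c_2}=\dot p_{G,c_1}$ and $\dot p_{G,c_2}=\dot p_{H,c_1}$, selectivity under both would require $\dot p_{H,c_1}<0\le\dot p_{H,c_1}$, whatever $v$ is.

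You instead decompose $v$ into per-pair rates $w_a$ and choose, according to their signs, an assignment with a single hack or a single correct response. This needs more structure:
\begin{itemize}
\item a countable response space, or the set-valued version you sketch;
\item interchange of sum and derivative;
\item the fact that zero-mass pairs have $w_a=0$, which you justify correctly: $\pi_\theta(a)\ge 0$ attains its minimum there on the open set $\mathbb R^d$.
\end{itemize}
In return you get a pointwise picture. Uniform selectivity would force $w_a<0$ for every positive-mass accepted pair (from single-hack assignments) and $w_a\ge 0$ (from single-correct assignments). So the guarantee already fails on that subclass.

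The paper's pair is simpler and needs no case analysis. It applies verbatim to any response space. It also shows that the obstruction lives in the two-element class $\{c_1,R-c_1\}$ around any admissible $c_1$, which is exactly what audits later break.

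The fixed pair also handles randomized controllers more cleanly, because both assignments are chosen before training. Your phrase ``apply the same argument pathwise'' needs one more line. Your failing assignment depends on the realized $v$, so it does not by itself give a fixed assignment that fails with positive probability. You must use that the family of singleton assignments is countable, so some fixed member fails with positive probability; alternatively, fall back on the swap.

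Finally, read ``positive probability'' for a pair as $\mathcal D(x)\,\pi_{\theta_0}(y\mid x)>0$. Otherwise Case~1 could select a pair with $\mathcal D(x)=0$, giving $w_a=0$ and $p_{H_1}=0$, which is no violation.
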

\vspace{-.5em}
\begin{proof}
See Appendix~\ref{app:limit_control}.
\end{proof}
\vspace{-.5em}
\textbf{Interpretation.} We seek a correction $u(t)$ such that the corrected flow reduces hack probability without reducing the probability of correct responses. However, compatible assignments can exchange the roles of correct responses and hacks. An update that reduces hack probability under one therefore reduces the probability of correct responses under the other. The controller cannot distinguish these assignments from $L_t$, and more observations from the same verifier cannot resolve this conflict. Regularization illustrates this limit: it constrains policy updates using only the policy and the verifier, so it cannot guarantee selective control either (Appendix~\ref{app:control_examples}). A uniform guarantee of selective control thus requires additional correctness information that rules out assignments requiring incompatible updates.

\section{From Additional Feedback to Selective Control} \label{sec:selective_control}
Section~\ref{sec:verifier_feedback} showed why verifier feedback alone cannot reduce hacks without also sacrificing correct responses. We now investigate whether additional information about correctness can break this trade-off. We show that it can: such information supports a correction that suppresses hacks and promotes correct responses, provided the correction is strong enough to overcome the drift toward errors induced by RLVR training.
\vspace{-.5em}
\subsection{Projected Audit Correction in RLVR Training} \label{subsec:projected_audit_correction}

\textbf{Audits as additional information.} Verifier feedback alone cannot guarantee selective control (Section~\ref{sec:verifier_feedback}), so we introduce a stronger signal: audits. An \emph{audit} reveals the true correctness label~$c(x,y)$ for a prompt~$x$ and an accepted response~$y$. The audit $Z=(x,y,c(x,y))$ rules out every correctness rule that disagrees with this label, restricting~$\mathcal{C}_R$ to $\mathcal{C}_{R,Z} = \{c'\in\mathcal{C}_R : c'(x,y)=c(x,y)\}.$ The limits in Section~\ref{sec:verifier_feedback} arose because verifier feedback could not distinguish the true rule from alternatives in~$\mathcal{C}_R$. Audits eliminate alternatives that contradict the revealed labels. To show how audits reduce hacking, we next define the audited hack population and its probability gradient under the policy. 

Let $A_\mathrm{aud}$ be a fixed set of audits, and let $H_A = A_\mathrm{aud} \cap H$ denote the audited hacks, with probability $p_{H_A}(\theta):=\Pr_\theta(H_A)$. When $p_{H_A}$ is differentiable, the negative gradient $-\nabla_\theta p_{H_A}$ targets only the audited hacks and need not align with $-\nabla_\theta p_H$ (see Appendix~\ref{app:practical_considerations}). Despite this misalignment, $-\nabla_\theta p_{H_A}$ locally decreases the audited hack probability when $\nabla_\theta p_{H_A} \neq 0$, so we use the correction $u = -\lambda \nabla_\theta p_{H_A}$, with $\lambda > 0$, in the verifier flow $\dot{\theta} = g_R + u$.

The correction opposes growth in~$p_{H_A}$: $\dot p_{H_A} = \nabla p_{H_A}^{\top}g_R - \lambda\|\nabla p_{H_A}\|^2.$ Because these probabilities share parameters, the correction also perturbs~$p$ and~$p_G$:
$$
\dot p = \|g_R\|^2 - \lambda\,g_R^\top\nabla p_{H_A},
\qquad
\dot p_G = \nabla p_G^\top g_R
  - \lambda\,\nabla p_G^\top\nabla p_{H_A}.
$$
The cross terms have no fixed sign. When $\nabla p_G^\top\nabla p_{H_A}>0$, the correction can suppress correct responses alongside hacks. We next constrain~$u$ to preserve the growth rate of~$p$.

\textbf{Projected audit correction.} Since $p=p_H+p_G$, we have $\nabla_\theta p_G = g_R - \nabla_\theta p_H$. Preserving the instantaneous growth rate of~$p$ requires $g_R^\top u=0$, so~$u$ must lie in the subspace orthogonal to~$g_R$. Under this constraint, $\nabla_\theta p_G^\top u = -\nabla_\theta p_H^\top u$: whenever~$u$ opposes the growth of~$p_H$ ($\nabla_\theta p_H^\top u<0$), it contributes equally to the growth
of~$p_G$.

We project the audit correction onto the subspace orthogonal to~$g_R$ using $P_\perp:=I-g_Rg_R^\top/\|g_R\|^2$ when $g_R\neq0$ and $P_\perp:=I$ otherwise, to define the \emph{projected audit correction} $u=-\lambda P_\perp\nabla p_{H_A}$, with $\lambda>0$, yielding the flow $\dot\theta=g_R+u$. Since $g_R^\top P_\perp=0$ and $P_\perp$ is an orthogonal projector,
$$
\dot p=\|g_R\|^2,
\qquad
\dot p_{H_A}
=\nabla p_{H_A}^\top g_R
-\lambda\|P_\perp\nabla p_{H_A}\|^2.
$$
This projection preserves the instantaneous growth rate of~$p$ and opposes the growth of~$p_{H_A}$ whenever $P_\perp\nabla p_{H_A}\neq 0$. We next show when this projected correction shrinks the share of hacks in the full population and bounds that share during training.

\subsection{Main Result: Achieving Selective Control} \label{subsec:achieving_selective_control}
\textbf{Assumptions.} We analyze RLVR training under the projected audit correction on a finite interval~$[0,T]$, under the following assumptions. \textit{(A1) Regularity:} The probabilities $p_G,p_H,p_{H_A}$ are continuously differentiable in~$\theta$, with $p_G,p_H>0$ on~$[0,T]$. \textit{(A2) Audit coverage:} The fixed audit set satisfies $\Pr_{\theta(t)}(H\setminus A_\mathrm{aud})=0$ throughout the interval, so $p_{H_A}=p_H$ and their gradients coincide during training. \textit{(A3) Uniform corrective strength:} $\|P_\perp\nabla p_H\|^2/p\ge\kappa q$ throughout the interval for some constant $\kappa>0$. We discuss these conditions in Appendix~\ref{app:practical_considerations}.

\begin{theorem}[Selective control]
\label{thm:selective_control}
Consider RLVR training under the projected audit correction with constant $\lambda>0$ on $[0,T]$.

\textit{(i) Selective control.} Under (A1)--(A2), the correction opposes the relative growth of hacks while preserving the instantaneous growth rate of~$p$:
$$
\dot z
=
\underbrace{(\bar s_H-\bar s_G)^\top g_R}
_{\text{bias + leakage}}
\;-\;
\underbrace{\frac{\lambda\|P_\perp\nabla p_H\|^2}
{p\,q(1-q)}}_{\text{correction}},
\qquad
\dot p=\|g_R\|^2.
$$
Whenever the correction term exceeds the bias and leakage, the share of hacks decreases and~$p_G$ increases. More strongly, if $\lambda\|P_\perp\nabla p_H\|^2 >\nabla p_H^\top g_R,$ then~$p_H$ itself decreases ($\dot p_H<0$, $\dot p_G>0$, and $\dot z<0$), achieving selective control.

\textit{(ii) An ISS-type bound on the share of hacks.} Assume additionally (A3), and let $D\ge0$ bound the growth pressure, $(\bar s_H-\bar s_G)^\top g_R\le D$, during training. Then, for every $t\in[0,T]$,
$$
q(t)\le
e^{-\lambda\kappa t}\,q(0)
+\frac{D}{4\lambda\kappa}
\bigl(1-e^{-\lambda\kappa t}\bigr).
$$
The bound separates a decaying contribution from the initial hacked share and a residual contribution from growth pressure. For fixed~$D$, a larger product~$\lambda\kappa$ reduces the residual bound.
\end{theorem}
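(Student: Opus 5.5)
The plan is to work entirely in the log-odds coordinate $z=\log p_H-\log p_G$ and to split the velocity $\dot\theta=g_R+u$ into its verifier part and its correction part. The verifier part reproduces \eqref{eq:log_odds_dynamics}. The correction part is computed exactly, using two facts: the identity $\nabla p_G=g_R-\nabla p_H$, and the properties of $P_\perp$. By (A2), $p_{H_A}=p_H$ and $\nabla p_{H_A}=\nabla p_H$ along the trajectory, so $u=-\lambda P_\perp\nabla p_H$. (A1) guarantees that $z$, $\bar s_G$ and $\bar s_H$ are well defined and differentiable on $[0,T]$.

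\textbf{Step 1 (part (i), the formula).} First, $\dot p=\nabla p^\top\dot\theta=g_R^\top g_R+g_R^\top u=\|g_R\|^2$, because $g_R^\top P_\perp=0$. When $g_R=0$ we have $P_\perp=I$, and the identity still holds trivially.

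Second, $P_\perp$ is symmetric and idempotent, so
$$\nabla p_H^\top u=-\lambda\|P_\perp\nabla p_H\|^2.$$
Since $g_R^\top u=0$, this gives
$$\nabla p_G^\top u=-\nabla p_H^\top u=\lambda\|P_\perp\nabla p_H\|^2.$$

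Therefore
$$\dot z=\Bigl(\tfrac{\nabla p_H}{p_H}-\tfrac{\nabla p_G}{p_G}\Bigr)^{\!\top}(g_R+u)=(\bar s_H-\bar s_G)^\top g_R-\lambda\|P_\perp\nabla p_H\|^2\Bigl(\tfrac1{p_H}+\tfrac1{p_G}\Bigr).$$
Using $p_H=pq$ and $p_G=p(1-q)$, we get $\tfrac1{p_H}+\tfrac1{p_G}=\tfrac{1}{p\,q(1-q)}$, which is the claimed correction term.

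\textbf{Step 2 (part (i), the sign consequences).} If the correction term dominates, then $\dot z<0$, hence $\dot q=q(1-q)\dot z<0$. Then
$$\dot p_G=(1-q)\dot p-p\dot q>0,$$
because $\dot p\ge0$ and $p>0$.

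For the stronger claim, $\dot p_H=\nabla p_H^\top g_R-\lambda\|P_\perp\nabla p_H\|^2<0$ under the stated inequality. Then $\dot p_G=\dot p-\dot p_H\ge-\dot p_H>0$. Finally, $\dot z=\dot p_H/p_H-\dot p_G/p_G<0$, since the first term is negative and the second is subtracted while positive.

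\textbf{Step 3 (part (ii), the ISS-type bound).} Multiply the formula for $\dot z$ by $q(1-q)$:
$$\dot q=q(1-q)(\bar s_H-\bar s_G)^\top g_R-\lambda\|P_\perp\nabla p_H\|^2/p.$$
There are two cases for the first term. If the pressure is nonnegative, it is at most $q(1-q)D\le D/4$. If it is negative, it is at most $0\le D/4$. For the second term, (A3) gives $\|P_\perp\nabla p_H\|^2/p\ge\kappa q$. Together these yield the linear differential inequality
$$\dot q\le D/4-\lambda\kappa q.$$

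With the integrating factor, $\tfrac{d}{dt}\bigl(e^{\lambda\kappa t}q\bigr)\le e^{\lambda\kappa t}D/4$. Integrating over $[0,t]$ gives exactly the stated bound. This is valid because $q$ is $C^1$ on $[0,T]$ by (A1).

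\textbf{Main obstacle.} Most of the argument is bookkeeping. The points to get right are the following.
\begin{itemize}
\item Applying (A2) to gradients as well as values, so that $\nabla p_{H_A}=\nabla p_H$ along the trajectory.
\item Handling the degenerate case $g_R=0$ in the projector.
\item Checking that the crude bound $q(1-q)\le 1/4$ is legitimate when the pressure term can be negative; the two-case argument in Step 3 settles this.
\end{itemize}
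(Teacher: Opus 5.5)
Your proposal is correct and follows the paper's proof essentially step for step: substitute $u=-\lambda P_\perp\nabla p_H$, use $g_R^\top u=0$ and $\nabla p_G=g_R-\nabla p_H$ to write $\dot z$ as the verifier pressure minus $\lambda\|P_\perp\nabla p_H\|^2(1/p_H+1/p_G)$, read off the sign consequences, and integrate $\dot q\le D/4-\lambda\kappa q$. You flag but do not prove that (A2) gives equality of gradients, not just of values, along the trajectory; the paper settles this by noting that $p_H-p_{H_A}=\Pr_\theta(H\setminus A_{\mathrm{aud}})\ge 0$ for every $\theta$ and vanishes at each $\theta(t)$, so each trajectory point is a global minimizer of this $C^1$ function with zero gradient. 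Your two-case split in Step 3 is harmless but unnecessary, since $q(1-q)\ge 0$ and $b\le D$ already give $q(1-q)b\le q(1-q)D\le D/4$.
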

\begin{proof}[Proof sketch] By (A2), $\nabla p_{H_A}=\nabla p_H$. Substitute the correction into $\dot z=\nabla z^\top(g_R+u)$ and use $g_R^\top u=0$ to obtain part~(i). For part~(ii), (A3) and $q(1-q)\le1/4$ give $\dot q\le D/4-\lambda\kappa q$; integration yields the bound. See Appendix~\ref{app:proof_selective_control} for details. 
\end{proof}
\textbf{Interpretation.} Under the stated assumptions, the projected audit correction favors correct responses over hacks while preserving the instantaneous growth rate of~$p$. When bias and leakage supply no positive growth pressure, the input-to-state stability (ISS)-type bound~\citep{khalil2002nonlinear} ensures the share of hacks decays exponentially. Persistent pressure contributes a residual bound that decreases with~$\lambda\kappa$ for fixed~$D$. The selective effect is instantaneous; the trajectory bound requires the assumptions to hold throughout the interval. 

\textbf{Remark (Practical implementation).} The control guarantees (Theorem~\ref{thm:selective_control}) require audits that provide a direction opposing hack growth and sufficient corrective strength throughout training. Partial audit coverage can suffice when the resulting correction remains aligned with the gradient of hack probability. Finite-sample gradient estimates, optimizers, and finite step sizes can prevent the implemented update from preserving acceptance progress or suppressing hacks. Appendix~\ref{app:practical_considerations} discusses these requirements and how to estimate and validate the correction.

\section{Experiments} \label{sec:experiments}
We test reward hacking, the limits of verifier feedback, and selective control through audit correction in contextual bandits and language models. The code is available at \url{https://github.com/cmoyacal/verifier-errors}.
\vspace{-.5em}
\subsection{Gaussian Contextual Bandits}
\label{subsec:exp_bandits}
\textbf{Setup.} We use Gaussian contextual bandits to study when training on verifier rewards amplifies reward hacking and whether audits enable selective control. We set the initial probabilities of correct and hacked responses, $p_G(0)$ and $p_H(0)$, to isolate how the initial composition shapes training. We also vary the policy class: a log linear policy keeps its features fixed, while a neural policy learns its own representation. Appendices~\ref{app:gaussian_bandit} and~\ref{app:neural_bandit} provide the experimental details.

\textbf{Results.} Figure~\ref{fig:bandit-growth-four-panel}~(a) shows acceptance $p$ and hacked share $q$ increasing together from $q(0)=0.3$, as Proposition~\ref{prop:hacking_growth} predicts. Panel~(b) plots correctness $J_C=p_G$ against verifier reward for three initial hacked shares. For $q(0)=0.5$, correctness declines throughout the recorded interval; for $q(0)=0.3$, it first improves and then declines; for $q(0)=0.1$, both correctness and verifier reward improve. Both declines are reward hacking in the sense of Proposition~\ref{prop:reward_hacking_flow}. Panel~(c) evaluates the same verifier flow under two correctness assignments: $c=R$, which admits no hacked responses, and $c_1=\mathbf1_G$, which does. The information available during training is identical under both, yet correctness differs, as Proposition~\ref{prop:verifier_information_limit} predicts. Panel~(d) shows selective control with a neural policy: projected audit correction (PAC) decreases the probability $p_H$ of hacked responses while increasing correctness $p_G$, consistent with Theorem~\ref{thm:selective_control}. Raw audit correction initially decreases both probabilities, whereas verifier flow and gradient regularization increase $p_H$. Appendices~\ref{app:gaussian_bandit} and~\ref{app:neural_bandit} provide additional experiments and ablations.

\begin{figure*}[t]
\centering
\begingroup
\pgfplotsset{
  growth axis/.style={
    width=0.175\textwidth,,
    height=0.19\textwidth,
    scale only axis,
    tick label style={font=\small},
    xlabel style={font=\small},
    ylabel style={
    font=\small,
    at={(axis description cs:-0.16,0.5)},
    anchor=south
    },
    title style={font=\small},
    every axis plot/.append style={line width=1.6pt},
    legend style={font=\small, draw=none, fill=none,
                  row sep=-1pt, cells={anchor=west}},
    legend cell align=left,
    enlargelimits=false,
  },
}
\def\GrowthTimePanel#1{
  \nextgroupplot[growth axis, title={#1}, xmin=0, xmax=50, ymin=0, ymax=1,
    xtick={0,25,50}, ytick={0,0.5,1}, xlabel={Training time}, ylabel={Probability},
    legend style={at={(0.02,0.02)},anchor=south west,
                  font=\small,draw=none,fill=none,row sep=-1pt}]
  \addplot[blue, solid] table[x=t,y=p] {bandit/growth_A.dat};
  \addlegendentry{$p$}
  \addplot[orange, dashed] table[x=t,y=q] {bandit/growth_A.dat};
  \addlegendentry{$q$}
  \addplot[black, dashdotted] table[x=t,y=p_G] {bandit/growth_A.dat};
  \addlegendentry{$p_G$}
}
\def\GrowthRewardPanel#1{
  \nextgroupplot[growth axis, title={#1}, xmin=0.66, xmax=1,
    ymin=0, ymax=1, xtick={0.7,0.85,1}, ytick={0,0.5,1},
    xlabel={$J_R=p$}, ylabel={$J_C=p_G$},
    legend style={at={(0.97,0.97)},anchor=north east,
              font=\small,draw=none,fill=none,row sep=-1pt}]
  \addplot[gray,postaction={decorate},
    decoration={markings,mark=at position 0.55 with {\arrow{stealth}}}]
    table[x=p,y=p_G] {bandit/trajectory_00.dat};
  \addlegendentry{$0.1$}

  \addplot[orange,postaction={decorate},
    decoration={markings,mark=at position 0.55 with {\arrow{stealth}}}]
    table[x=p,y=p_G] {bandit/trajectory_01.dat};
  \addlegendentry{$0.3$}
  \addplot[orange, forget plot, line width=1.6pt,unbounded coords=jump]
    table[x=p,y=p_G_declining] {bandit/trajectory_01.dat};
  \addplot[orange,forget plot, only marks,mark=o,mark size=1.7pt]
    table[x=p,y=p_G] {bandit/stationary_01.dat};

  \addplot[black,postaction={decorate},
    decoration={markings,mark=at position 0.55 with {\arrow{stealth}}}]
    table[x=p,y=p_G] {bandit/trajectory_02.dat};
  \addlegendentry{$0.5$}
  \addplot[black,line width=1.6pt,unbounded coords=jump]
    table[x=p,y=p_G_declining] {bandit/trajectory_02.dat};
}
\def\InformationPanel#1{%
  \nextgroupplot[
    growth axis,
    title={#1},
    xmin=0, xmax=50,
    ymin=0, ymax=1,
    xlabel={Training time},
    ylabel={Probability},
    legend style={
      at={(0.03,0.03)}, anchor=south west,
      font=\small, draw=none, fill=none, row sep=-1pt
    }
  ]
  \addplot[blue,line width=1.6pt]
    table[x=t,y=p] {bandit/information_C.dat};
  \addlegendentry{$p$}

  \addplot[black,only marks,mark=square,mark size=2pt]
    table[x=t,y=p_G_c_eq_R]
      {bandit/information_C_markers.dat};
  \addlegendentry{$p_G\ (c=R)$}

  \addplot[orange,dashed,line width=1.6pt]
    table[x=t,y=p_G_c_le_R] {bandit/information_C.dat};
  \addlegendentry{$p_G\ (c\le R)$}
}
\def\ControlPanel#1{%
  \nextgroupplot[growth axis, title={#1},
    xmin=0, xmax=0.7, ymin=0, ymax=1,
    xtick={0,0.3,0.6}, ytick={0,0.5,1},
    xlabel={$p_H$},
    ylabel={$p_G$},
    legend style={at={(0.97,0.97)},anchor=north east,
      font=\scriptsize,draw=none,fill=none,row sep=-1pt}]

  \addplot[draw=none,fill=black,fill opacity=0.15,forget plot]
    table[x=x,y=y] {neural_bandit/phase_rlvr_band.dat} \closedcycle;
  \addplot[draw=none,fill=orange,fill opacity=0.15,forget plot]
    table[x=x,y=y] {neural_bandit/phase_gr_band.dat} \closedcycle;
  \addplot[draw=none,fill=green!60!black,fill opacity=0.15,forget plot]
    table[x=x,y=y] {neural_bandit/phase_raw_fixed_band.dat} \closedcycle;
  \addplot[draw=none,fill=blue,fill opacity=0.15,forget plot]
    table[x=x,y=y] {neural_bandit/phase_projected_fixed_band.dat} \closedcycle;

  \addplot[black,line width=1.6pt, postaction={decorate},
    decoration={markings,mark=at position 0.55 with {\arrow{stealth}}}]
    table[x=p_H_mean,y=p_G_mean] {neural_bandit/phase_rlvr.dat};
  \addlegendentry{RLVR}

  \addplot[orange,dotted,line width=1.6pt,postaction={decorate},
    decoration={markings,mark=at position 0.55 with {\arrow{stealth}}}]
    table[x=p_H_mean,y=p_G_mean] {neural_bandit/phase_gr.dat};
  \addlegendentry{Grad.}

  \addplot[green!60!black,dashed,line width=1.8pt, postaction={decorate},
    decoration={markings,mark=at position 0.55 with {\arrow{stealth}}}]
    table[x=p_H_mean,y=p_G_mean] {neural_bandit/phase_raw_fixed.dat};
  \addlegendentry{Raw}

  \addplot[blue,line width=1.8pt,postaction={decorate},
    decoration={markings,mark=at position 0.55 with {\arrow{stealth}}}]
    table[x=p_H_mean,y=p_G_mean] {neural_bandit/phase_projected_fixed.dat};
  \addlegendentry{PAC}

  \addplot[black,only marks,mark=o,mark options={fill=white},
    forget plot] table[x=p_H,y=p_G] {neural_bandit/checkpoints.dat};
}
\begin{tikzpicture}
\begin{groupplot}[
  group style={group size=4 by 1,horizontal sep=1.10cm},
]
  \GrowthTimePanel{(a)}
  \GrowthRewardPanel{(b)}
  \InformationPanel{(c)}
  \ControlPanel{(d)}
\end{groupplot}
\end{tikzpicture}
\endgroup
\caption{
Reward hacking and selective control in contextual bandits. Panels (a)--(c) use a log linear policy, and panel (d) uses a neural policy. (a) Acceptance $p$, hacked share $q$, and correctness $p_G$ under verifier flow with $q(0)=0.3$.
(b) Correctness ($J_C = p_G$) against verifier reward ($J_R=p$) for hacked shares $q(0) \in \{0.1,0.3,0.5\}$.
(c) The same verifier flow evaluated under two correctness assignments: $c=R$, under which every accepted response is correct, and $c_1=\mathbf1_G$. Squares mark correctness under $c=R$, which equals acceptance $p$. (d) Correctness $p_G$ against the probability $p_H$ of hacked responses under verifier flow, gradient regularization, raw audit correction, and projected audit correction. In (b) and (d), arrows point in the direction of training.
}
\label{fig:bandit-growth-four-panel}
\end{figure*}
\vspace{-.5em}
\subsection{Language Models: Reward Hacking and Audit Correction}
\label{sub-sec:exp-llm}
To test whether our predictions hold beyond exact gradient flow, we train a language model with sampled gradients and finite Adam updates.

\textbf{Setup.} We consider a task in which the model replaces a sequence of digits according to given rules. A response is correct only if every replacement in the final output is correct, but the imperfect verifier~$R$ checks only the last two digits. We include in each prompt a hint with an incorrect prefix and the correct final pair, so copying it produces a hack. We initialize Qwen2-0.5B with supervised fine-tuning (SFT) on correct responses $G$, hacks $H$, and rejected responses $N$, and then run GRPO for 20 rounds with or without projected audit correction (PAC). We vary audit coverage by letting PAC audit each accepted response independently with probability $1$, $0.5$, or $0.25$. Appendix~\ref{app:llm_experiment} provides the experimental details.

\textbf{Results.} Figure~\ref{fig:llm-main}~(a) shows verifier reward $J_R$ increasing while correctness $J_C$ falls under GRPO, which illustrates reward hacking in the sense of Proposition~\ref{prop:reward_hacking_flow}. Panel~(b) shows PAC increasing $p_G$ and decreasing $p_H$ from initialization at all three audit probabilities $\rho \in \{0.25,0.5,1.0\}$, qualitatively consistent with Theorem~\ref{thm:selective_control}. On test prompts, GRPO reaches $99.4\%$ acceptance but only $2.2\%$ correctness. PAC reaches $97.2\%$ correctness with full auditing and $95.2\%$ with one quarter of accepted responses audited. Additional experiments and ablations are provided in Appendix~\ref{app:llm_experiment}.

\begin{figure*}[t]
\centering
\begingroup

\def\LLMDataPath{LLM}
\definecolor{llmGreen}{HTML}{228833}
\definecolor{llmVermilion}{HTML}{D55E00}

\begin{tikzpicture}
\begin{groupplot}[
  group style={
    group size=2 by 1,
    horizontal sep=1.10cm
  },
  width=0.21\textwidth,
  height=0.19\textwidth,
  scale only axis,
  grid=none,
  tick label style={font=\small},
  xlabel style={font=\small},
  ylabel style={
    font=\small,
    at={(axis description cs:-0.16,0.5)},
    anchor=south
  },
  title style={font=\small},
  legend style={
    font=\small,
    draw=none,
    fill=none,
    row sep=-1pt,
    cells={anchor=west}
  },
  legend cell align=left,
  enlargelimits=false,
  ymin=-0.03,
  ymax=1.05,
  ytick={0,0.5,1}
]

\nextgroupplot[
  title={(a)},
  xmin=0.60,
  xmax=1.05,
  xtick={0.6,0.8,1},
  xlabel={$J_R=p$},
  ylabel={$J_C=p_G$}
]

\addplot[
  black,solid,line width=1.6pt,
  mark=o,mark size=1.7pt,
  mark options={solid,fill=white},
  forget plot
] table[
  x expr=\thisrow{J_R}/100,
  y expr=\thisrow{J_C}/100
] {\LLMDataPath/main/grpo.dat};

\addplot[
  blue,solid,line width=1.6pt,
  mark=o,mark size=1.7pt,
  mark options={solid,fill=white},
  forget plot
] table[
  x expr=\thisrow{J_R}/100,
  y expr=\thisrow{J_C}/100
] {\LLMDataPath/main/pac.dat};

\addplot[
  llmGreen,dashed,line width=2pt,
  mark=none,forget plot
] table[
  x expr=\thisrow{J_R}/100,
  y expr=\thisrow{J_C}/100
] {\LLMDataPath/main/pac_50.dat};

\addplot[
  llmVermilion,dotted,line width=2pt,
  mark=none,forget plot
] table[
  x expr=\thisrow{J_R}/100,
  y expr=\thisrow{J_C}/100
] {\LLMDataPath/main/pac_25.dat};

\draw[black,line width=1.6pt,->,>=stealth]
  (axis cs:0.826750,0.160000)
  -- (axis cs:0.858250,0.133750);

\draw[blue,line width=1.6pt,->,>=stealth]
  (axis cs:0.8419375,0.688000)
  -- (axis cs:0.8655625,0.772000);

\draw[llmGreen,line width=2pt,->,>=stealth]
  (axis cs:0.817750,0.541000)
  -- (axis cs:0.846625,0.625875);

\draw[llmVermilion,line width=2pt,->,>=stealth]
  (axis cs:0.7518125,0.362125)
  -- (axis cs:0.7719375,0.430375);

\nextgroupplot[
  title={(b)},
  xmin=-0.03,
  xmax=1.05,
  xtick={0,0.5,1},
  xlabel={$p_H$},
  ylabel={$p_G$},
  legend style={
    at={(1.08,0.98)},
    anchor=north west
  }
]

\addplot[
  black,solid,line width=1.6pt,
  mark=o,mark size=1.7pt,
  mark options={solid,fill=white}
] table[
  x expr=\thisrow{p_H}/100,
  y expr=\thisrow{J_C}/100
] {\LLMDataPath/main/grpo.dat};
\addlegendentry{GRPO}

\addplot[
  blue,solid,line width=1.6pt,
  mark=o,mark size=1.7pt,
  mark options={solid,fill=white}
] table[
  x expr=\thisrow{p_H}/100,
  y expr=\thisrow{J_C}/100
] {\LLMDataPath/main/pac.dat};
\addlegendentry{PAC, $\rho=1$}

\addplot[
  llmGreen,dashed,line width=2pt,
  mark=none
] table[
  x expr=\thisrow{p_H}/100,
  y expr=\thisrow{J_C}/100
] {\LLMDataPath/main/pac_50.dat};
\addlegendentry{PAC, $\rho=0.5$}

\addplot[
  llmVermilion,dotted,line width=2pt,
  mark=none
] table[
  x expr=\thisrow{p_H}/100,
  y expr=\thisrow{J_C}/100
] {\LLMDataPath/main/pac_25.dat};
\addlegendentry{PAC, $\rho=0.25$}

\draw[black,line width=1.6pt,->,>=stealth]
  (axis cs:0.666750,0.160000)
  -- (axis cs:0.724500,0.133750);

\draw[blue,line width=1.6pt,->,>=stealth]
  (axis cs:0.1539375,0.688000)
  -- (axis cs:0.0935625,0.772000);

\draw[llmGreen,line width=2pt,->,>=stealth]
  (axis cs:0.276750,0.541000)
  -- (axis cs:0.220750,0.625875);

\draw[llmVermilion,line width=2pt,->,>=stealth]
  (axis cs:0.3896875,0.362125)
  -- (axis cs:0.3415625,0.430375);

\end{groupplot}
\end{tikzpicture}
\endgroup

\caption{
Reward hacking and selective control in the language model, starting from an SFT demonstration mixture $G/H/N=0.3/0.3/0.4$. (a) Correctness $J_C=p_G$ against verifier reward $J_R=p$. GRPO increases reward while reducing correctness; the verifier rewards correct responses and hacks equally. (b) Correctness against the probability $p_H$ of hacks. PAC increases correctness and reduces hacks at all three audit probabilities~$\rho$, including $\rho=0.25$.
Curves show means over five seeds at calibration rounds
0, 5, 10, and 20. Circles mark these evaluations for GRPO and PAC with $\rho=1$; arrows indicate training direction. Appendix~\ref{app:llm_experiment} reports variation across seeds.
}
\label{fig:llm-main}
\end{figure*}

\section{Related Work} \label{sec:related_work}
Improving a proxy reward can reduce the intended reward~\citep{everitt2017reinforcement,NEURIPS2022_3d719fee}, and experiments show this reduction growing with stronger optimization~\citep{gao2023scaling} and appearing under automated verifiers~\citep{helff2026llms}. Methods that mitigate reward hacking constrain the policy~\citep{laidlaw2025correlated}, improve feedback~\citep{coste2024reward,lightman2024lets}, or detect and correct hacking~\citep{baker2025monitoring,wang2026detecting}, and their guarantees depend on what they observe or assume about correctness. We complement this work by fixing a verifier and characterizing theoretically when reward hacking grows, when the information available during training leaves correct and hacked responses indistinguishable, and when audits can reduce hacking while preserving progress on the intended task. Appendix~\ref{app:related_work} extends this discussion.

\section{Conclusion} \label{sec:conclusion} 
This work analyzes RLVR under a fixed imperfect verifier. First, we derive when the share of accepted errors grows, through hack bias and correctness-to-hack leakage, and when verifier reward rises while correctness falls (Propositions~\ref{prop:hacking_growth} and~\ref{prop:reward_hacking_flow}). Second, the complete training record cannot, in general, detect accepted errors, identify correctness, or guarantee fewer accepted errors while preserving correct responses (Propositions~\ref{prop:verifier_information_limit}, \ref{prop:verifier_identification_limit}, and~\ref{prop:verifier_control_limit}). Third, projected audit correction uses correctness labels from audits of some responses to reduce reward hacking while improving correctness, under the conditions of Theorem~\ref{thm:selective_control}. Together, these results show that controlling reward hacking requires information about correctness beyond the verifier. Incomplete audits and inaccurate labels can weaken this correction. Our analysis assumes a fixed verifier and exact gradient flow, and Appendix~\ref{app:limitations_and_practical} discusses limitations and practical considerations. Extending the guarantees throughout training and correcting from limited, imperfect audits remain future work.

\bibliography{iclr2027_conference}

\begin{thebibliography}{36}
\providecommand{\natexlab}[1]{#1}
\providecommand{\url}[1]{\texttt{#1}}
\expandafter\ifx\csname urlstyle\endcsname\relax
  \providecommand{\doi}[1]{doi: #1}\else
  \providecommand{\doi}{doi: \begingroup \urlstyle{rm}\Url}\fi

\bibitem[Ackermann et~al.(2026)Ackermann, Noukhovitch, Ishida, and Sugiyama]{ackermann2026gradient}
Johannes Ackermann, Michael Noukhovitch, Takashi Ishida, and Masashi Sugiyama.
\newblock Gradient regularization mitigates reward hacking in reinforcement learning from human feedback and verifiable rewards.
\newblock In \emph{Forty-third International Conference on Machine Learning}, 2026.

\bibitem[Baker et~al.(2025)Baker, Huizinga, Gao, Dou, Guan, Madry, Zaremba, Pachocki, and Farhi]{baker2025monitoring}
Bowen Baker, Joost Huizinga, Leo Gao, Zehao Dou, Melody~Y. Guan, Aleksander Madry, Wojciech Zaremba, Jakub Pachocki, and David Farhi.
\newblock Monitoring reasoning models for misbehavior and the risks of promoting obfuscation.
\newblock \emph{arXiv preprint arXiv:2503.11926}, 2025.

\bibitem[Beigi et~al.(2026)Beigi, Jin, Zhang, Zhang, Wang, and Huang]{beigi2026ir3}
Mohammad Beigi, Ming Jin, Junshan Zhang, Jiaxin Zhang, Qifan Wang, and Lifu Huang.
\newblock {IR$^3$}: Contrastive inverse reinforcement learning for interpretable detection and mitigation of reward hacking.
\newblock \emph{arXiv preprint arXiv:2602.19416}, 2026.

\bibitem[Cai et~al.(2025)Cai, Wang, Liu, Liu, Niu, and Sugiyama]{cai2025reinforcement}
Xin-Qiang Cai, Wei Wang, Feng Liu, Tongliang Liu, Gang Niu, and Masashi Sugiyama.
\newblock Reinforcement learning with verifiable yet noisy rewards under imperfect verifiers.
\newblock \emph{arXiv preprint arXiv:2510.00915}, 2025.

\bibitem[Coste et~al.(2024)Coste, Anwar, Kirk, and Krueger]{coste2024reward}
Thomas Coste, Usman Anwar, Robert Kirk, and David Krueger.
\newblock Reward model ensembles help mitigate overoptimization.
\newblock In \emph{International Conference on Learning Representations}, pp.\  50905--50931, 2024.

\bibitem[DeepSeek-AI et~al.(2025)DeepSeek-AI, Guo, Yang, Zhang, Song, Zhang, Xu, Zhu, Ma, Wang, Bi, Zhang, Yu, Wu, Wu, Gou, Shao, Li, Gao, Liu, Xue, Wang, Wu, Feng, Lu, Zhao, Deng, Zhang, Ruan, Dai, Chen, Ji, Li, Lin, Dai, Luo, Hao, Chen, Li, Zhang, Bao, Xu, Wang, Ding, Xin, Gao, Qu, Li, Guo, Li, Wang, Chen, Yuan, Qiu, Li, Cai, Ni, Liang, Chen, Dong, Hu, Gao, Guan, Huang, Yu, Wang, Zhang, Zhao, Wang, Zhang, Xu, Xia, Zhang, Zhang, Tang, Li, Wang, Li, Tian, Huang, Zhang, Wang, Chen, Du, Ge, Zhang, Pan, Wang, Chen, Jin, Chen, Lu, Zhou, Chen, Ye, Wang, Yu, Zhou, Pan, and Li]{DBLP:journals/corr/abs-2501-12948}
DeepSeek-AI, Daya Guo, Dejian Yang, Haowei Zhang, Junxiao Song, Ruoyu Zhang, Runxin Xu, Qihao Zhu, Shirong Ma, Peiyi Wang, Xiao Bi, Xiaokang Zhang, Xingkai Yu, Yu~Wu, Z.~F. Wu, Zhibin Gou, Zhihong Shao, Zhuoshu Li, Ziyi Gao, Aixin Liu, Bing Xue, Bingxuan Wang, Bochao Wu, Bei Feng, Chengda Lu, Chenggang Zhao, Chengqi Deng, Chenyu Zhang, Chong Ruan, Damai Dai, Deli Chen, Dongjie Ji, Erhang Li, Fangyun Lin, Fucong Dai, Fuli Luo, Guangbo Hao, Guanting Chen, Guowei Li, H.~Zhang, Han Bao, Hanwei Xu, Haocheng Wang, Honghui Ding, Huajian Xin, Huazuo Gao, Hui Qu, Hui Li, Jianzhong Guo, Jiashi Li, Jiawei Wang, Jingchang Chen, Jingyang Yuan, Junjie Qiu, Junlong Li, J.~L. Cai, Jiaqi Ni, Jian Liang, Jin Chen, Kai Dong, Kai Hu, Kaige Gao, Kang Guan, Kexin Huang, Kuai Yu, Lean Wang, Lecong Zhang, Liang Zhao, Litong Wang, Liyue Zhang, Lei Xu, Leyi Xia, Mingchuan Zhang, Minghua Zhang, Minghui Tang, Meng Li, Miaojun Wang, Mingming Li, Ning Tian, Panpan Huang, Peng Zhang, Qiancheng Wang, Qinyu Chen, Qiushi Du, Ruiqi Ge, Ruisong
  Zhang, Ruizhe Pan, Runji Wang, R.~J. Chen, R.~L. Jin, Ruyi Chen, Shanghao Lu, Shangyan Zhou, Shanhuang Chen, Shengfeng Ye, Shiyu Wang, Shuiping Yu, Shunfeng Zhou, Shuting Pan, and S.~S. Li.
\newblock Deepseek-r1: Incentivizing reasoning capability in llms via reinforcement learning.
\newblock \emph{CoRR}, abs/2501.12948, 2025.

\bibitem[Denison et~al.(2024)Denison, MacDiarmid, Barez, Duvenaud, Kravec, Marks, Schiefer, Soklaski, Tamkin, Kaplan, Shlegeris, Bowman, Perez, and Hubinger]{denison2024sycophancy}
Carson Denison, Monte MacDiarmid, Fazl Barez, David Duvenaud, Shauna Kravec, Samuel Marks, Nicholas Schiefer, Ryan Soklaski, Alex Tamkin, Jared Kaplan, Buck Shlegeris, Samuel~R. Bowman, Ethan Perez, and Evan Hubinger.
\newblock Sycophancy to subterfuge: Investigating reward-tampering in large language models.
\newblock \emph{arXiv preprint arXiv:2406.10162}, 2024.

\bibitem[Eisenstein et~al.(2024)Eisenstein, Nagpal, Agarwal, Beirami, D'Amour, Dvijotham, Fisch, Heller, Pfohl, Ramachandran, Shaw, and Berant]{eisenstein2024helping}
Jacob Eisenstein, Chirag Nagpal, Alekh Agarwal, Ahmad Beirami, Alexander~Nicholas D'Amour, Krishnamurthy~Dj Dvijotham, Adam Fisch, Katherine~A. Heller, Stephen~Robert Pfohl, Deepak Ramachandran, Peter Shaw, and Jonathan Berant.
\newblock Helping or herding? reward model ensembles mitigate but do not eliminate reward hacking.
\newblock In \emph{First Conference on Language Modeling}, 2024.

\bibitem[Elliott et~al.(2026)Elliott, Urdshals, Quarel, and Murfet]{DBLP:journals/corr/abs-2605-08007}
Chris Elliott, Einar Urdshals, David Quarel, and Daniel Murfet.
\newblock Interpreting reinforcement learning agents with susceptibilities.
\newblock \emph{arXiv preprint arXiv:2605.08007}, abs/2605.08007, 2026.

\bibitem[Everitt et~al.(2017)Everitt, Krakovna, Orseau, and Legg]{everitt2017reinforcement}
Tom Everitt, Victoria Krakovna, Laurent Orseau, and Shane Legg.
\newblock Reinforcement learning with a corrupted reward channel.
\newblock In \emph{Proceedings of the 26th International Joint Conference on Artificial Intelligence}, pp.\  4705--4713, 2017.

\bibitem[Gao et~al.(2023)Gao, Schulman, and Hilton]{gao2023scaling}
Leo Gao, John Schulman, and Jacob Hilton.
\newblock Scaling laws for reward model overoptimization.
\newblock In \emph{International Conference on Machine Learning}, pp.\  10835--10866, 2023.

\bibitem[Gauthier et~al.(2026)Gauthier, Bach, and Jordan]{gauthier2026explaining}
Etienne Gauthier, Francis~R. Bach, and Michael~I. Jordan.
\newblock Explaining and preventing alignment collapse in iterative {RLHF}.
\newblock \emph{arXiv preprint arXiv:2605.04266}, 2026.

\bibitem[Helff et~al.(2026)Helff, Delfosse, Steinmann, H{\"a}rle, Shindo, Schramowski, Stammer, Kersting, and Friedrich]{helff2026llms}
Lukas Helff, Quentin Delfosse, David Steinmann, Ruben H{\"a}rle, Hikaru Shindo, Patrick Schramowski, Wolfgang Stammer, Kristian Kersting, and Felix Friedrich.
\newblock Llms gaming verifiers: Rlvr can lead to reward hacking.
\newblock \emph{arXiv preprint arXiv:2604.15149}, 2026.

\bibitem[Karwowski et~al.(2024)Karwowski, Hayman, Bai, Kiendlhofer, Griffin, and Skalse]{karwowski2024goodharts}
Jacek Karwowski, Oliver Hayman, Xingjian Bai, Klaus Kiendlhofer, Charlie Griffin, and Joar Max~Viktor Skalse.
\newblock Goodhart's law in reinforcement learning.
\newblock In \emph{The Twelfth International Conference on Learning Representations}, 2024.

\bibitem[Kenton et~al.(2026)Kenton, Janzer, Greig, Teh, Tyshchuk, Brown-Cohen, Edwards, Rajamanoharan, Siegel, Jaques, and Shah]{kenton2026debate}
Zachary Kenton, Lili Janzer, Rory Greig, Tian~Huey Teh, Kirill Tyshchuk, Jonah Brown-Cohen, Harri Edwards, Senthooran Rajamanoharan, Noah~Y. Siegel, Natasha Jaques, and Rohin Shah.
\newblock Debate training reduces reward hacking in {RLAIF}.
\newblock \emph{arXiv preprint arXiv:2608.17776}, 2026.

\bibitem[Khalaf et~al.(2025)Khalaf, Verdun, Oesterling, Lakkaraju, and Calmon]{khalaf2025inferencetime}
Hadi Khalaf, Claudio~Mayrink Verdun, Alex Oesterling, Himabindu Lakkaraju, and Flavio Calmon.
\newblock Inference-time reward hacking in large language models.
\newblock In \emph{The Thirty-ninth Annual Conference on Neural Information Processing Systems}, 2025.

\bibitem[Khalifa et~al.(2026)Khalifa, Khan, Tafveez, Peng, and Wang]{DBLP:journals/corr/abs-2603-07084}
Muhammad Khalifa, Zohaib Khan, Omer Tafveez, Hao Peng, and Lu~Wang.
\newblock Countdown-code: {A} testbed for studying the emergence and generalization of reward hacking in {RLVR}.
\newblock \emph{arXiv preprint arXiv:2603.07084}, abs/2603.07084, 2026.

\bibitem[Khalil \& Grizzle(2002)Khalil and Grizzle]{khalil2002nonlinear}
Hassan~K Khalil and Jessy~W Grizzle.
\newblock \emph{Nonlinear systems}, volume~3.
\newblock Prentice hall Upper Saddle River, NJ, 2002.

\bibitem[Kwa et~al.(2024)Kwa, Thomas, and Garriga-Alonso]{kwa2024catastrophic}
Thomas Kwa, Drake Thomas, and Adri{\`a} Garriga-Alonso.
\newblock Catastrophic goodhart: regularizing {RLHF} with {KL} divergence does not mitigate heavy-tailed reward misspecification.
\newblock In \emph{The Thirty-eighth Annual Conference on Neural Information Processing Systems}, 2024.

\bibitem[Laidlaw et~al.(2025)Laidlaw, Singhal, and Dragan]{laidlaw2025correlated}
Cassidy Laidlaw, Shivam Singhal, and Anca Dragan.
\newblock Correlated proxies: A new definition and improved mitigation for reward hacking.
\newblock In \emph{The Thirteenth International Conference on Learning Representations}, 2025.

\bibitem[Lambert et~al.(2025)Lambert, Morrison, Pyatkin, Huang, Ivison, Brahman, Miranda, Liu, Dziri, Lyu, Gu, Malik, Graf, Hwang, Yang, Bras, Tafjord, Wilhelm, Soldaini, Smith, Wang, Dasigi, and Hajishirzi]{lambert2025tulu}
Nathan Lambert, Jacob Morrison, Valentina Pyatkin, Shengyi Huang, Hamish Ivison, Faeze Brahman, Lester James~Validad Miranda, Alisa Liu, Nouha Dziri, Xinxi Lyu, Yuling Gu, Saumya Malik, Victoria Graf, Jena~D. Hwang, Jiangjiang Yang, Ronan~Le Bras, Oyvind Tafjord, Christopher Wilhelm, Luca Soldaini, Noah~A. Smith, Yizhong Wang, Pradeep Dasigi, and Hannaneh Hajishirzi.
\newblock Tulu 3: Pushing frontiers in open language model post-training.
\newblock In \emph{Second Conference on Language Modeling}, 2025.

\bibitem[Lightman et~al.(2024)Lightman, Kosaraju, Burda, Edwards, Baker, Lee, Leike, Schulman, Sutskever, and Cobbe]{lightman2024lets}
Hunter Lightman, Vineet Kosaraju, Yuri Burda, Harrison Edwards, Bowen Baker, Teddy Lee, Jan Leike, John Schulman, Ilya Sutskever, and Karl Cobbe.
\newblock Let's verify step by step.
\newblock In \emph{The Twelfth International Conference on Learning Representations}, 2024.

\bibitem[MacDiarmid et~al.(2025)MacDiarmid, Wright, Uesato, Benton, Kutasov, Price, Bouscal, Bowman, Bricken, Cloud, Denison, Gasteiger, Greenblatt, Leike, Lindsey, Mikulik, Perez, Rodrigues, Thomas, Webson, Ziegler, and Hubinger]{macdiarmid2025natural}
Monte MacDiarmid, Benjamin Wright, Jonathan Uesato, Joe Benton, Jonathan Kutasov, Sara Price, Naia Bouscal, Samuel~R. Bowman, Trenton Bricken, Alex Cloud, Carson Denison, Johannes Gasteiger, Ryan Greenblatt, Jan Leike, Jack Lindsey, Vladimir Mikulik, Ethan Perez, Alex Rodrigues, Drake Thomas, Albert Webson, Daniel~M. Ziegler, and Evan Hubinger.
\newblock Natural emergent misalignment from reward hacking in production {RL}.
\newblock \emph{arXiv preprint arXiv:2511.18397}, 2025.

\bibitem[Mahmoud et~al.(2026)Mahmoud, Rezaei, Wang, Gunjal, Liu, and He]{mahmoud2026reward}
Anas Mahmoud, MohammadHossein Rezaei, Zihao Wang, Anisha Gunjal, Bing Liu, and Yunzhong He.
\newblock Reward hacking in rubric-based reinforcement learning.
\newblock In \emph{Second Workshop on Agents in the Wild: Safety, Security, and Beyond}, 2026.

\bibitem[Moya \& Wang(2018)Moya and Wang]{moya2018developing}
Christian Moya and Jiankang Wang.
\newblock Developing correlation indices to identify coordinated cyber-attacks on power grids.
\newblock \emph{IET Cyber-Physical Systems: Theory \& Applications}, 3\penalty0 (4):\penalty0 178--186, 2018.

\bibitem[Moya et~al.(2026)Moya, Semendinger, Lin, and Thornley]{moya2026spurious}
Christian Moya, Alex Semendinger, Guang Lin, and Elliott Thornley.
\newblock Spurious correlation learning in preference optimization: Mechanisms, consequences, and mitigation via tie training.
\newblock In \emph{Forty-third International Conference on Machine Learning}, 2026.

\bibitem[Pan et~al.(2022)Pan, Bhatia, and Steinhardt]{pan2022the}
Alexander Pan, Kush Bhatia, and Jacob Steinhardt.
\newblock The effects of reward misspecification: Mapping and mitigating misaligned models.
\newblock In \emph{International Conference on Learning Representations}, 2022.

\bibitem[Pan et~al.(2024)Pan, He, Bowman, and Feng]{pan2024spontaneous}
Jane Pan, He~He, Samuel~R. Bowman, and Shi Feng.
\newblock Spontaneous reward hacking in iterative self-refinement.
\newblock \emph{arXiv preprint arXiv:2407.04549}, 2024.

\bibitem[Pasqualetti et~al.(2013)Pasqualetti, D{\"o}rfler, and Bullo]{pasqualetti2013attack}
Fabio Pasqualetti, Florian D{\"o}rfler, and Francesco Bullo.
\newblock Attack detection and identification in cyber-physical systems.
\newblock \emph{IEEE transactions on automatic control}, 58\penalty0 (11):\penalty0 2715--2729, 2013.

\bibitem[Skalse et~al.(2022)Skalse, Howe, Krasheninnikov, and Krueger]{NEURIPS2022_3d719fee}
Joar Skalse, Nikolaus Howe, Dmitrii Krasheninnikov, and David Krueger.
\newblock Defining and characterizing reward gaming.
\newblock In \emph{Advances in Neural Information Processing Systems}, volume~35, pp.\  9460--9471, 2022.

\bibitem[Wang \& Murfet(2026)Wang and Murfet]{DBLP:journals/corr/abs-2601-13548}
George Wang and Daniel Murfet.
\newblock Patterning: The dual of interpretability.
\newblock \emph{arXiv preprint arXiv:2601.13548}, abs/2601.13548, 2026.

\bibitem[Wang et~al.(2026{\natexlab{a}})Wang, Pham, Yin, Wang, Chen, Durrett, and Ye]{wang2026detecting}
Songtao Wang, Quang~Hieu Pham, Fangcong Yin, Xinpeng Wang, Jocelyn~Qiaochu Chen, Greg Durrett, and Xi~Ye.
\newblock Detecting and suppressing reward hacking with gradient fingerprints.
\newblock In \emph{Third Conference on Language Modeling}, 2026{\natexlab{a}}.

\bibitem[Wang et~al.(2026{\natexlab{b}})Wang, Joshi, Plank, Angell, and He]{wang2026is}
Xinpeng Wang, Nitish Joshi, Barbara Plank, Rico Angell, and He~He.
\newblock Is it thinking or cheating? detecting implicit reward hacking by measuring reasoning effort.
\newblock In \emph{The Fourteenth International Conference on Learning Representations}, 2026{\natexlab{b}}.

\bibitem[Wang et~al.(2026{\natexlab{c}})Wang, Hao, Hou, Peng, Li, and Wang]{wang2026reproducing}
Xuekang Wang, Zhuoyuan Hao, Shuo Hou, Hao Peng, Juanzi Li, and Xiaozhi Wang.
\newblock Reproducing, analyzing, and detecting reward hacking in rubric-based reinforcement learning.
\newblock \emph{arXiv preprint arXiv:2606.04923}, 2026{\natexlab{c}}.

\bibitem[Zhong et~al.(2025)Zhong, Raghunathan, and Carlini]{zhong2025impossiblebench}
Ziqian Zhong, Aditi Raghunathan, and Nicholas Carlini.
\newblock {ImpossibleBench}: Measuring {LLMs}' propensity of exploiting test cases.
\newblock \emph{arXiv preprint arXiv:2510.20270}, 2025.

\bibitem[Zhuang \& Hadfield-Menell(2020)Zhuang and Hadfield-Menell]{NEURIPS2020_b607ba54}
Simon Zhuang and Dylan Hadfield-Menell.
\newblock Consequences of misaligned ai.
\newblock In \emph{Advances in Neural Information Processing Systems}, volume~33, pp.\  15763--15773, 2020.

\end{thebibliography}
\bibliographystyle{iclr2027_conference}

\newpage 
\appendix

\section{Limitations and Practical Considerations} \label{app:limitations_and_practical}
This appendix states the assumptions that limit the scope of our theoretical guarantees and describes how to implement the audit correction.

\subsection{Limitations} \label{app:limitations}

\textbf{No false negatives.} To isolate whether rewarding hacks can reduce correctness $p_G$, we assume that the verifier accepts every correct response. Under this assumption, a decline in $p_G$ cannot come from the verifier rejecting correct responses. The assumption also gives $p=p_G+p_H$, because the accepted responses then consist of all correct responses and all hacked responses. Thus, any change that reduces $p_H$ without decreasing $p$ increases $p_G$.

If false negatives are allowed, $p_G$ still denotes total correctness, but the probability of rejected correct responses adds a term:
$$
p_G=p-p_H+\Pr_\theta(c=1,R=0).
$$
Differentiating along the flow and using $\dot p=\|g_R\|^2$ gives
$$
\dot p_G
=
\|g_R\|^2-\dot p_H
+\frac{d}{dt}\Pr_{\theta(t)}(c=1,R=0).
$$
The first two terms give the rate at which the probability of accepted correct responses changes. Thus, even when $\dot p_H<0$, total correctness decreases whenever
$$
\frac{d}{dt}\Pr_{\theta(t)}(c=1,R=0) < -\bigl(\|g_R\|^2-\dot p_H\bigr),
$$
that is, whenever the probability of rejected correct responses falls faster than the probability of accepted correct responses rises. Extending the control guarantee to this case therefore requires a lower bound on $\frac{d}{dt}\Pr_{\theta(t)}(c=1,R=0)$. Proposition~\ref{prop:hacking_growth}, which decomposes the growth of the hacked share, still applies within the accepted population, with the average score of accepted correct responses in place of $\bar s_G$.

\textbf{Population gradient flow.} We analyze updates in continuous time driven by the exact gradient of $J_R$. Exact gradients remove sampling noise and isolate the effect of verifier rewards. Practical training departs from this idealization through finite steps, adaptive optimizers such as Adam, gradient clipping, and additional objectives such as KL regularization, each of which can change the dynamics. Our growth and control guarantees therefore hold exactly only for the stated flows. Our experiments with discrete updates check whether the predicted behavior persists in the tested settings, but they do not extend the guarantees to other training algorithms.

\textbf{Local growth and control over a finite interval.} Proposition~\ref{prop:hacking_growth} characterizes the growth of the hacked share at the current policy. The group scores change during training, so a positive growth rate at one time does not guarantee continued growth. Theorem~\ref{thm:selective_control} extends the analysis from a single time to the interval $[0,T]$, but its bound requires the assumptions to hold throughout this interval and does not establish convergence beyond it. Similarly, the identity $\dot p=\|g_R\|^2$ under the projected correction is local: the correction preserves the rate at which acceptance grows at the current policy. Because the correction changes the trajectory of the policy, this identity does not imply that corrected training follows the same acceptance curve or reaches the same final acceptance as uncorrected training.

\textbf{Scope of the information limits.} Our impossibility results (Section~\ref{sec:verifier_feedback}) concern methods that observe only the training record $L_t$: no such method can provide a uniform guarantee, one that holds under every correctness assignment in $\mathcal C_R$. These results do not imply that every monitor fails on every task. Additional knowledge of task requirements can exclude assignments in $\mathcal C_R$, and if it excludes the indistinguishable assignments used in our proofs, our impossibility arguments no longer apply. Conversely, allowing false negatives enlarges the class of assignments, and the enlarged class still contains these indistinguishable assignments, so it admits no uniform guarantee either.

\textbf{Fixed verifier and prompt distribution.} We hold the binary verifier, correctness labels, and prompt distribution fixed during training. In practice, the verifier or prompt distribution can change during training, for example when tests are added or a curriculum reorders prompts. Such changes shift population probabilities without any policy update, and our dynamics exclude them. Because $p_G$ and $p_H$ average over the prompt distribution, increasing $p_G$ or decreasing $p_H$ does not guarantee that correctness improves on every prompt.

\subsection{Practical Considerations} \label{app:practical_considerations}
\textbf{Limited audit coverage.} We assumed full audit coverage in~Assumption (A2) for Theorem~\ref{thm:selective_control}, so that $\nabla p_{H_A}=\nabla p_H$. Without full coverage, the correction changes the growth of~$p_H$ by $\nabla p_H^\top u =-\lambda(P_\perp\nabla p_H)^\top(P_\perp\nabla p_{H_A})$. When the projected gradients align positively, the correction opposes the growth of~$p_H$ and, as in Theorem~\ref{thm:selective_control}, equally promotes the growth of~$p_G$. Full coverage guarantees positive alignment when $P_\perp\nabla p_H\neq0$, but is not necessary: a subset of audits can supply a direction that opposes hacking. We next ask whether this direction remains effective as the policy changes.

\textbf{Sustaining corrective strength.} A direction that opposes hacking at one policy may weaken or reverse as on-policy training shifts the distribution and gradients. Accurate audits are not enough: they must continue to supply sufficient corrective strength throughout training. Retaining the other assumptions of the theorem,
the same ISS-type bound holds under partial coverage if
$(P_\perp\nabla p_H)^\top(P_\perp\nabla p_{H_A})/p
\geq\kappa q$ throughout the interval for some fixed
$\kappa>0$; under full coverage, this condition reduces
to~(A3). We next ask how to estimate the corrective direction from finite samples.

\textbf{Estimating the correction from finite audits.} The correction uses a population gradient, but training provides only sampled responses and audit labels. Randomized audit selection provides a way to estimate the full hack gradient without auditing every generated response in each batch. For samples from the current policy, we estimate~$\nabla_\theta p_H$ using each audited hack's policy score, $\nabla_\theta\log\pi_\theta(y\mid x)$, divided by its selection probability. Audited correct responses contribute zero. Averaging these weighted contributions over all generated responses yields an unbiased estimate~$\smash{\widehat{h}}$, provided these selection probabilities are known and positive for accepted responses and the regularity conditions in Appendix~\ref{app:audit_coverage} hold. Accepted responses with zero selection probability remain outside this guarantee. Even with sufficient coverage, small selection probabilities produce large weights and can yield noisy estimates, so an individual correction may fail to oppose hacking. While sparse audits can recover the direction in expectation, reliable correction requires controlling the variance of this estimate.

\textbf{Implementing the projection~$P_\perp$.} The projection also requires an estimate~$\widehat g_R$ of the reward gradient. Set the correction orthogonal to the estimated reward gradient: $\widehat g_R^\top\widehat u=0$. To isolate projection error, we keep the base verifier gradient exact and consider $\dot\theta=g_R+\widehat u$. Then
$$
\dot p
=
\|g_R\|^2
+
(g_R-\widehat g_R)^\top\widehat u,
$$
where the second term is the projection error. Thus, orthogonality to the estimated gradient need not preserve the true instantaneous growth rate of~$p$. Estimating the base verifier gradient introduces additional error. Optimizer transformations and finite step sizes can introduce further discrepancies. Practical implementation requires validating that discrete updates oppose hacking while approximately preserving acceptance progress.

Our analysis identifies three practical tasks: selecting which responses to audit, estimating the corrective direction from these audits, and validating that each update opposes hacking while approximately preserving acceptance progress. Susceptibility methods for reinforcement learning~\citep{DBLP:journals/corr/abs-2605-08007} and patterning~\citep{DBLP:journals/corr/abs-2601-13548} could help guide audit selection and correction. Adapting these methods to sustain selective control under limited audit and computation budgets remains an open engineering challenge.
\newpage

\newpage

\section{Related Work} \label{app:related_work}
We position our work within two lines of research on reward hacking: why optimizing a proxy reward can undermine the intended reward, and how to prevent it.

\paragraph{Reward hacking.} Theoretical work shows that improving a proxy reward can reduce the intended reward~\citep{everitt2017reinforcement,NEURIPS2022_3d719fee}. This conflict can arise when the proxy omits relevant attributes~\citep{NEURIPS2020_b607ba54}, and its severity depends on properties of the proxy and its errors~\citep{laidlaw2025correlated,kwa2024catastrophic}. In experiments, stronger optimization and more capable models can widen the gap between proxy and intended rewards~\citep{gao2023scaling,pan2022the}. Language models also show this gap during iterative self-refinement~\citep{pan2024spontaneous} and when automated verifiers accept responses that violate task requirements~\citep{helff2026llms,zhong2025impossiblebench,mahmoud2026reward}. Training can amplify such reward hacking~\citep{DBLP:journals/corr/abs-2603-07084}, and learned hacking can generalize to reward tampering or broader misalignment~\citep{denison2024sycophancy,macdiarmid2025natural}. To explain how reward hacking develops, prior work studies the geometry of optimization~\citep{karwowski2024goodharts}, selection at inference time~\citep{khalaf2025inferencetime}, and feedback between policy training and retraining of the reward model~\citep{gauthier2026explaining}. For rewards assigned by judges, \citet{wang2026reproducing} further separate how easily models discover biases from how strongly they exploit them. Compared to these works, we study how reward hacking grows under a fixed binary verifier by deriving conditions under which policy updates favor hacked responses over equally rewarded correct responses. We then show that the reward objective provides no preference that would restore correct responses within the accepted set, which explains why reward maximization alone need not reverse this growth.

\paragraph{Mitigating reward hacking.}
Regularization can limit reward hacking by constraining how much a policy changes~\citep{laidlaw2025correlated} or by favoring flatter optima, a property that theory links to the accuracy of rewards under additional assumptions~\citep{ackermann2026gradient}. Other methods improve the feedback used for optimization by combining reward models~\citep{coste2024reward}, adding constraints~\citep{helff2026llms}, supervising reasoning steps~\citep{lightman2024lets}, or supplying criticism through debate~\citep{kenton2026debate}. When verifier errors follow an assumed noise model, known error rates can also support corrections to training updates~\citep{cai2025reinforcement}. Even improved feedback can leave errors unresolved, as \citet{eisenstein2024helping} show for reward ensembles. To detect reward hacking, monitors inspect a model's reasoning~\citep{baker2025monitoring} or measure how much reasoning it needs to pass the verifier~\citep{wang2026is}. Detection can then guide correction: Grift groups responses by their gradients, labels each group by inspecting examples, and filters responses for fine-tuning~\citep{wang2026detecting}, while IR$^3$ reconstructs rewards and targets features it identifies as problematic~\citep{beigi2026ir3}. The guarantees of these approaches depend on what they observe or assume about correctness, a dependence that \citet{everitt2017reinforcement} examine for corrupted rewards. Our identifiability results specify when the information available during training leaves correct and hacked responses indistinguishable. We then characterize how auditing supplies information for selective correction and how limited coverage or errors in the audits can undermine this correction.
\newpage

\section{Technical Details and Proofs} \label{app:details_and_proofs}
\subsection{Probabilities and Dynamics for Section~\ref{sec:problem_formulation}}

\paragraph{Idealized conditions.}  We use the fixed verifier $R$ and correctness rule $c$ from Section~\ref{sec:problem_formulation}, with $c\le R$, so the verifier has no false negatives. Without correction, the parameters follow exact Euclidean gradient ascent on the verifier reward, $\dot\theta=\nabla J_R(\theta)$. We assume that the objectives and group probabilities are continuously differentiable and that the stated flows exist on the intervals considered. When differentiating expectations, we assume sufficient regularity to interchange differentiation and expectation.

\subsubsection{Fixed Sets of Correct Responses and Accepted Errors} \label{app:persistent_verifier_errors}
For each prompt $x$, the assumption $c(x,y)\leq R(x,y)$ gives $G_x\subseteq A_x$ and hence
$$
\begin{aligned}
H_x&=A_x\setminus G_x=\{y\in\mathcal Y_x:R(x,y)=1,\ c(x,y)=0\},\\
A_x&=G_x\sqcup H_x,
\end{aligned}
$$
where $\sqcup$ denotes disjoint union. These memberships stay fixed throughout training. Their probabilities can change, and $H_x$ may be empty.

\subsubsection{Acceptance and the Hacked Share} \label{app:acceptance_and_hacking}
At a fixed prompt, conditioning on acceptance gives
$$
\begin{aligned}
q_x(\theta)
&=\frac{\Pr_{\pi_\theta}(Y\in H_x\mid x)}{p_x(\theta)},\\
\Pr_{\pi_\theta}(Y\in H_x\mid x)
&=p_x(\theta)q_x(\theta),\\
\Pr_{\pi_\theta}(Y\in G_x\mid x)
&=p_x(\theta)\bigl(1-q_x(\theta)\bigr),
\end{aligned}
\qquad \text{when }p_x(\theta)>0.
$$
If $p_x=0$, the probability of an accepted error is zero and $q_x$ is undefined. Since the binary verifier rewards every response in $A_x$, iterated expectation gives
$$
J_R(\theta)
=\mathbb E_{x\sim\mathcal D}\!\left[\mathbb E_{Y\sim\pi_\theta(\cdot\mid x)}R(x,Y)\right]
=\mathbb E_{x\sim\mathcal D}[p_x(\theta)].
$$
The objective therefore depends on total acceptance, without distinguishing its correct and incorrect parts.

\subsubsection{Exact Gradient Flow} \label{app:rlvr_dynamics}
Along the flow~\eqref{eq:verifier_flow}, the chain rule gives
$$
\dot J_R
= \nabla_\theta J_R^\top \dot\theta
= \|g_R\|_2^2 \geq 0,
\qquad
\dot p_x = \nabla_\theta p_x^\top g_R,
\qquad
\dot q_x = \nabla_\theta q_x^\top g_R,
$$ 
where the identity for $\dot q_x$ applies when $p_x>0$. Under the stated assumptions, there is no general guarantee that $\dot q_x\leq 0$ or that $\dot p_x\geq 0$ at every prompt. Thus, monotonic improvement in average acceptance guarantees neither a nonincreasing hacked share nor nondecreasing acceptance at every prompt.

\newpage
\subsection{Technical Details for Section~\ref{sec:hacking}}

\subsubsection{Aggregate Acceptance and Hacking} \label{app:what_reward_ignores}
Write $\mathbb E_\theta$ for expectation under $X\sim\mathcal D$, $Y\mid X\sim\pi_\theta$. Since $G\cup H$ is the accepted set,
$$
\begin{aligned}
p&=J_R=p_G+p_H=\mathbb E_{x\sim\mathcal D}[p_x],\\
q&=\frac{p_H}{p}
=\frac{\displaystyle\int_{\{x:p_x>0\}}p_xq_x\,\mathcal D(dx)}{p}
\quad\text{when }p>0.
\end{aligned}
$$
Thus, relative to the prompt distribution $\mathcal D$, prompts are weighted by their acceptance probabilities. Prompts with zero acceptance contribute nothing. An aggregate trend need not hold at every prompt. At fixed $p$, redistributing probability between $G$ and $H$ leaves reward unchanged, although shared parameters can couple their dynamics.

\subsubsection{Proof of Proposition~\ref{prop:reward_hacking_flow}}
\label{app:proof_reward_hacking_flow} 
\citet{NEURIPS2022_3d719fee} call a proxy reward hackable if it ranks some pair of policies in strictly opposite orders from the true reward. In our setting, verifier flow generates the policies, acceptance $J_R=p$ is the proxy reward, and correctness $J_C=p_G$ is the true reward. We show that such a pair occurs along the trajectory exactly when, at some time, the correctness lost to a rising hacked share outweighs the correctness gained from rising acceptance.

\begin{proof}
Write $p(t)=p(\theta(t))$ and similarly for the other quantities.
Since $p_G=p(1-q)$,
$$
\dot J_C=\dot p_G
=
\underbrace{(1-q)\dot p}_{\text{acceptance contribution}}
-
\underbrace{p\dot q}_{\text{composition contribution}}.
$$
Consequently, the proposed inequality is equivalent to
$\dot J_C<0$. The key point is that, along verifier flow,
a strict decrease in correctness necessarily accompanies
a strict increase in verifier reward.

\emph{Sufficiency.}
Suppose that, at some $t_\star\in(0,T)$,
$$
p(t_\star)\dot q(t_\star)
>
(1-q(t_\star))\dot p(t_\star).
$$
Then $\dot J_C(t_\star)<0$. Since
$$
\dot J_C(t_\star)
=
\nabla_\theta J_C(\theta(t_\star))^\top
g_R(\theta(t_\star)),
$$
we must have $g_R(\theta(t_\star))\neq0$. Hence
$$
\dot J_R(t_\star)
=
\|g_R(\theta(t_\star))\|_2^2>0.
$$
By continuity, both strict inequalities hold on a sufficiently
short interval $[t_\star,t_\star+\varepsilon]\subset(0,T)$.
Integrating over this interval gives
$$
\begin{aligned}
J_R(\theta(t_\star+\varepsilon))
&>J_R(\theta(t_\star)),\\
J_C(\theta(t_\star+\varepsilon))
&<J_C(\theta(t_\star)).
\end{aligned}
$$
Thus the two policies at its endpoints witness reward hacking.

\emph{Necessity.}
Conversely, suppose there exist $t_1<t_2$ in $[0,T]$ such that
verifier reward increases strictly and correctness decreases
strictly. By the mean value theorem, there is some
$t_\star\in(t_1,t_2)$ with
$$
\dot p_G(t_\star)
=
\frac{p_G(t_2)-p_G(t_1)}{t_2-t_1}<0.
$$
Substituting
$\dot p_G=(1-q)\dot p-p\dot q$
and rearranging yields
$$
p(t_\star)\dot q(t_\star)
>
(1-q(t_\star))\dot p(t_\star),
$$
as required.
\end{proof}
The criterion distinguishes a growing hacked share from reward hacking in the sense of \citet{NEURIPS2022_3d719fee}. A growing hacked share can coexist with improving correctness if acceptance rises fast enough. The two rewards disagree exactly when, at some time along the flow, the correctness lost to a rising hacked share exceeds the correctness gained from rising acceptance. This inequality needs to hold strictly at only a single time: by continuity, it then holds on a short interval whose endpoints witness hackability, even if correctness later recovers. 

\subsubsection{Proof of Proposition~\ref{prop:hacking_growth}} \label{app:proof_of_mechanism}
When $p_H,p_G>0$, total acceptance cancels in the ratio:
$$
z=\log\frac{p_H}{p_G}=\log\frac{q}{1-q},
\qquad
\frac{dz}{dq}=\frac{1}{q(1-q)}>0.
$$
The log odds~$z$ therefore increase with the hacked share~$q$. They are finite only when correct responses and hacks both have positive probability, so we require this condition wherever we use finite log odds.

\paragraph{Evolution of the log odds.} We derive~\eqref{eq:log_odds_dynamics} using the gradients $\bar s_S=\nabla_\theta\log\Pr_\theta(S)$ from Section~\ref{sec:hacking}.

\subparagraph{Differentiate the log ratio.}
For $p_H,p_G>0$, the log odds $z=\log(p_H/p_G)$ are finite, and differentiating along the flow gives
$$
\dot z=\frac{\dot p_H}{p_H}-\frac{\dot p_G}{p_G}.
$$
The rate therefore compares the proportional growth of the two groups. This identity needs only differentiability of the group probabilities.

To express each proportional rate through the policy, we use the score $s_\theta(x,y):=\nabla_\theta\log\pi_\theta(y\mid x)$. For a fixed group $S$ with positive probability and indicator $\mathbf 1_S$, the conditions for differentiating expectations give
$$
\nabla_\theta\Pr_\theta(S)=\mathbb E_\theta[\mathbf 1_S s_\theta]=\Pr_\theta(S)\,\bar s_S,
\qquad
\bar s_S:=\frac{\mathbb E_\theta[\mathbf 1_Ss_\theta]}{\Pr_\theta(S)} = \mathbb E_\theta[s_\theta\mid S].
$$
The gradient of $\log\Pr_\theta(S)$ is thus the mean score $\bar s_S$ of the group. Applying this to $H$ and $G$, and using $\dot\theta=g_R$, gives
$$
\nabla_\theta z=\bar s_H-\bar s_G,
\qquad
\dot z=(\bar s_H-\bar s_G)^\top g_R.
$$
The log odds therefore grow precisely when $g_R^\top\bar s_H>g_R^\top\bar s_G$.

\paragraph{Decomposition of the growth rate.}  Assume $G,H,N$ all have positive probability and evaluate quantities at the current policy.

\subparagraph{Step 1: Use normalization.}
Differentiating the accepted probability and the sum of all group probabilities gives
$$
g_R=p\bigl[(1-q)\bar s_G+q\bar s_H\bigr],
\qquad
g_R+(1-p)\bar s_N=0.
$$
Combining these identities yields
$$
g_R=p(1-p)\bigl[(1-q)\bar s_G+q\bar s_H-\bar s_N\bigr].
$$

\subparagraph{Step 2: Substitute into the rate of the log odds.}
Using~\eqref{eq:log_odds_dynamics} and collecting $\bar s_H-\bar s_G$ gives
$$
\begin{aligned}
\dot z
&=p(1-p)(\bar s_H-\bar s_G)^\top
\bigl[(1-q)\bar s_G+q\bar s_H-\bar s_N\bigr]\\
&=p(1-p)\left[
\underbrace{(\bar s_H-\bar s_G)^\top(\bar s_G-\bar s_N)}_{\text{correctness-to-hack leakage}}
+ \underbrace{q\|\bar s_H-\bar s_G\|^2}_{\text{hack bias}}
\right],
\end{aligned}
$$
which establishes~\eqref{eq:hack_mechanisms} and reveals the mechanisms. The first term can favor or oppose hacking. The second is nonnegative and proportional to $q$ at fixed $p$ and gradients; those quantities also change during training. Hence growth depends on the full bracket, without implying acceleration or growth at every policy.

\subparagraph{Step 3: State the growth condition.}
When $\bar s_H\ne\bar s_G$, the bracket is positive exactly when
$$
q>-
\frac{(\bar s_H-\bar s_G)^\top(\bar s_G-\bar s_N)}
{\|\bar s_H-\bar s_G\|^2}.
$$
The threshold may lie outside $(0,1)$ and change during training. If the two gradients agree, $\dot z=0$. Thus the current share alone does not determine growth. The alignment of the gradients also matters.

\paragraph{Simultaneous growth of reward and hacking} \label{app:reward_improves_hacking_grows}
For $p_H,p_G>0$, differentiating $z=\log(q/(1-q))$ and using the verifier flow gives
$$
\dot q=q(1-q)\dot z,
\qquad
\dot p=\|g_R\|^2.
$$
Since $p_G,p_H,p_N>0$, both $p$ and $q$ lie in $(0,1)$. Thus, $\dot q>0$ if and only if the bracket in~\eqref{eq:hack_mechanisms} is positive. By~\eqref{eq:log_odds_dynamics}, a positive $\dot z$ requires $g_R\ne0$. Both $\dot q$ and $\dot p$ are then strictly positive, completing the proof of Proposition~\ref{prop:hacking_growth}. \hfill$\square$

\paragraph{Example: Shared normalization can favor hacking.}
For one prompt with a response in each group, use softmax logits $\theta=(\theta_G,\theta_H,\theta_N)$:
$$
\Pr_\theta(S)=\frac{e^{\theta_S}}{e^{\theta_G}+e^{\theta_H}+e^{\theta_N}},
\qquad S\in\{G,H,N\}.
$$
Here $z=\theta_H-\theta_G$ and
$$
\begin{aligned}
g_R&=\bigl(p_G(1-p),\ p_H(1-p),\ -p(1-p)\bigr)^\top,\\
\dot z&=(1-p)(p_H-p_G).
\end{aligned}
$$
Thus $p_H>p_G$ gives simultaneous growth of the hacked share and verifier acceptance on a sufficiently short interval. The example illustrates a mechanism caused by the parameterization; it does not assert that every neural policy follows this pattern.

\newpage
\subsection{Technical Details for Section~\ref{sec:verifier_feedback}} \label{app:verifier_feedback}

\subsubsection{Comparison Setup} \label{app:detector_info}
Fix the prompt distribution, verifier, policy family, initialization, and training procedure. The record $L_t$ contains prompts, responses, verifier rewards, policy updates, and quantities computed from these observations. Training and monitoring may adapt to the record, but neither uses the hidden correctness assignment or additional correctness feedback.

We compare two fixed correctness assignments:
$$
c_0=R,
\qquad
c_1\in\mathcal C_R
\quad\text{with}\quad
\Pr_\theta(R=1,c_1=0)>0,
$$
at the policy $\theta$. Under $c_0$, every accepted response is correct. Under $c_1$, some accepted responses are hacks.

The assignments are fixed before training. We compare what the same procedure observes under each assignment. Correctness does not change during either run. For the deterministic argument below, fix all random choices used by training and use those same choices in both runs.

\subsubsection{Proof of Proposition~\ref{prop:verifier_information_limit}}
\label{app:verifier_only_detection}
The proof has three steps: the assignments produce the same record, the monitor therefore makes the same decision, but correct detection requires different decisions.

\paragraph{Step 1: The training records are identical.}  We argue by induction on the training step. Both runs start from the same initialization, so their initial records agree. Suppose their records agree through step $n$. Because the training procedure depends only on the record and the shared random choices, both runs select the same prompt and sample the same response from the same policy. The fixed verifier returns the same reward. Both runs therefore make the same update and append the same information to the record, so their records agree through step $n+1$.

By induction, the records are identical at every training
step:
$$
L_n^{(c_0)}=L_n^{(c_1)}
\qquad\text{for every }n.
$$
The argument also covers verifier queries that depend on earlier results and any quantity computed from the record. It likewise covers $J_R$ and its gradients, which depend on the verifier and the policy but not on which correctness assignment in $\mathcal C_R$ is true. For training in continuous time, we assume that the fixed procedure determines a unique trajectory. Because this trajectory depends only on the verifier and the policy, changing the correctness assignment within $\mathcal C_R$ leaves it unchanged. This argument applies to any two fixed assignments in $\mathcal C_R$. It does not require either assignment to equal $R$.

\paragraph{Step 2: The monitor makes the same decision.} A deterministic detection monitor computes a binary alarm $D_t=\delta_t(L_t)$, where $D_t=1$ indicates the presence of hacks. Identical records give identical alarms:
$$
D_t^{(c_0)}
=\delta_t(L_t^{(c_0)})
=\delta_t(L_t^{(c_1)})
=D_t^{(c_1)}.
$$

\paragraph{Step 3: The same decision cannot be correct in both cases.} Under $c_0$, there are no hacks, so a correct detector must remain silent. Under $c_1$, hacks have positive probability, so a correct detector must raise an alarm. Because the monitor makes the same decision in both cases, it either raises a false alarm under $c_0$ or misses the hacks under $c_1$.

Thus, no deterministic monitor using only $L_t$ can guarantee correct detection under both compatible assignments. More observations from the same verifier do not resolve this ambiguity: Step 1 continues to apply as the record grows.

\paragraph{Scope.} This conclusion concerns a guarantee over the stated class $\mathcal C_R$. A monitor may succeed under a particular assignment. Additional knowledge connecting response content to correctness may also rule out alternatives. The impossibility applies when the two assignments remain admissible and the training procedure never observes information that distinguishes them.

\paragraph{Remark on randomization} Randomization does not change the conclusion. Use the same random choices for training and monitoring in the two runs. For every such choice, the records and alarms remain identical. Averaging over these choices therefore gives
$$
\Pr_{c=c_1}(D_t=1)
=
\Pr_{c=c_0}(D_t=1).
$$
Here the probabilities are over training and monitor randomness. Each correctness assignment remains fixed. The left side is the detection rate when hacks exist, and the right side is the false-alarm rate when they do not. Thus, the monitor has no detection advantage between these two assignments.

Likewise, the training records have the same distribution:
$$
\operatorname{Law}_{c=c_1}(L_t)
=
\operatorname{Law}_{c=c_0}(L_t),
$$
where $\operatorname{Law}_{c}(L_t)$ denotes the distribution
of the record over training randomness under assignment $c$.
This proves the proposition. \hfill$\square$

\subsubsection{Proof of Proposition~\ref{prop:verifier_identification_limit}}
\label{app:verifier_only_identification}
The proof compares two compatible assignments that disagree on every accepted response. Both admit hacks, so knowing that hacks exist does not distinguish them.

\paragraph{Step 1: Construct two opposite correctness assignments.} By the premise, choose a fixed assignment $c_1\in\mathcal C_R$ under which the policy $\theta$ assigns positive probability to both correct responses and hacks. Define
$$
c_2(x,y)=R(x,y)-c_1(x,y).
$$
On rejected responses, both assignments equal zero. On accepted responses, $c_2=1-c_1$, so the assignments give opposite correctness labels. Thus $c_2\in\mathcal C_R$, and the two assignments exchange the correct and hacked groups:
$$
p_{H,c_2}(\theta)=p_{G,c_1}(\theta)>0,
\qquad
p_{G,c_2}(\theta)=p_{H,c_1}(\theta)>0.
$$
In particular, both assignments admit hacks with positive probability.

\paragraph{Step 2: The monitor makes the same prediction.} Fix any accepted pair $(x,y)$ and any deterministic monitor using only verifier feedback records. As shown in Appendix~\ref{app:verifier_only_detection}, the two assignments produce identical training records when the same random choices are used. The monitor therefore makes the same prediction in both cases:
$$
\widehat c_t^{(c_1)}(x,y)
=
\widehat c_t^{(c_2)}(x,y).
$$

\paragraph{Step 3: The prediction cannot be correct under both assignments.} Because $R(x,y)=1$, the true labels satisfy
$$
c_2(x,y)=1-c_1(x,y).
$$
The monitor's common binary prediction therefore matches exactly one of the two labels. It must be incorrect under the other assignment.

Thus, no deterministic monitor can guarantee the correct label of an accepted response under every compatible assignment. Knowing that hacks exist does not resolve the ambiguity, because both assignments satisfy that additional information.

\paragraph{Randomization and the error bound.} The same reasoning applies when training or monitoring is randomized. Use the same random choices in both runs. For every such choice, the monitor gives the same prediction under both assignments, and exactly one assignment makes that prediction wrong.

Averaging over these choices gives
\begin{equation}
\label{eq:verifier_identification_error}
\Pr_{c=c_1}\!\left(
\widehat c_t(x,y)\ne c_1(x,y)
\right)
+
\Pr_{c=c_2}\!\left(
\widehat c_t(x,y)\ne c_2(x,y)
\right)
=1.
\end{equation}
Here the probabilities are over training and monitor randomness, with the queried pair and correctness assignments held fixed. At least one of these error probabilities is therefore at least $1/2$.

We fixed both candidate assignments before training, so neither depends on the realized training record. Which of the two attains the bound may depend on the monitor and the queried pair, but not on that record. Because both assignments contain hacks, the proposition holds even when the monitor knows that hacks exist. This completes the proof .\hfill$\square$

\subsubsection{Proof of Proposition~\ref{prop:verifier_control_limit}} \label{app:limit_control}

\paragraph{Controlled dynamics and the selective objective.} Consider a controller that chooses a correction $u(t)$ using only the training record $L_t$. Training then follows
$$
\dot\theta=g_R+u(t).
$$
We assume that the controlled trajectory is well defined and that the group probabilities are differentiable along it. For a deterministic argument, fix all random choices used during training.

Under a correctness assignment $c$, \emph{selective control} requires
$$
\dot p_{H,c}<0,
\qquad
\dot p_{G,c}\geq0
\quad\text{whenever }p_{H,c}>0.
$$
These conditions concern the full update:
$$
\dot p_{H,c}
=\nabla_\theta p_{H,c}^{\top}(g_R+u),
\qquad
\dot p_{G,c}
=\nabla_\theta p_{G,c}^{\top}(g_R+u).
$$
A correction that opposes hack growth is insufficient unless the full update $g_R+u(t)$ reduces hack probability while preserving correctness. A uniform guarantee requires the same controller rule to satisfy these conditions under every compatible correctness assignment and at every time along the evolving policy trajectory whenever hack probability
is positive. The correction itself may adapt to the training record.

\paragraph{Impossibility of guaranteed selective control.} We compare two assignments that exchange the correct and hacked groups. The controller sees the same record under both assignments, but selective control requires opposite changes in their probabilities.

\subparagraph{Step 1: Exchange the correct and hacked groups.} By the premise, choose a fixed assignment $c_1\in\mathcal C_R$ under which the initial policy assigns positive probability to both correct responses and hacks. Define
$$
c_2=R-c_1.
$$
As in Appendix~\ref{app:verifier_only_identification}, both assignments are compatible with the verifier, and they exchange the two accepted groups:
$$
p_{H,c_2}=p_{G,c_1},
\qquad
p_{G,c_2}=p_{H,c_1}.
$$
Both groups have positive probability initially. By continuity, they remain positive on a sufficiently short initial interval.

\subparagraph{Step 2: The controller produces the same update.} We extend the argument of Appendix~\ref{app:verifier_only_detection}, which shows that the two runs produce identical records, to include the controller. Both runs begin at the same policy. Whenever their records agree, the controller chooses the same correction, because it uses only the record. The verifier gradient also agrees, because it depends on the policy and the verifier but not on the correctness assignment. Both runs therefore make the same corrected update, and their records remain identical.

Thus, under both assignments, the fixed training and control procedures generate the same record and the same trajectory of the controlled policy. In particular, both runs have the same velocity $\dot\theta=g_R+u$ at every time.

\paragraph{Step 3: Selective control requires incompatible signs.}
Along this common trajectory, exchanging the groups also exchanges their derivatives:
$$
\dot p_{H,c_2}=\dot p_{G,c_1},
\qquad
\dot p_{G,c_2}=\dot p_{H,c_1}.
$$
On the initial interval where both groups remain positive, selectivity under the two assignments therefore requires
\begin{equation}
\label{eq:incompatible_selective_rates}
\begin{aligned}
c_1:\quad&
\dot p_{H,c_1}<0,
&\dot p_{G,c_1}\geq0,\\
c_2:\quad&
\dot p_{G,c_1}<0,
&\dot p_{H,c_1}\geq0.
\end{aligned}
\end{equation}
Each derivative would have to be both negative and nonnegative. No update can satisfy both rows.

Thus, no controller that uses only the training record can guarantee selective control under every correctness assignment in $\mathcal C_R$. The obstruction appears already on an initial interval of training, so it does not depend on behavior later in training.

\paragraph{Remark on randomization.}
The same obstruction applies to a randomized controller. We couple the two runs by using the same random choices for training and control under both assignments. For each realization, the records, corrections, and trajectories then coincide, so no realization satisfies the requirements for selective control under both assignments.

A guarantee that holds with probability one under both assignments would require both sets of conditions to hold on a common event of probability one. Since no realization satisfies both, that event is empty, so no such guarantee exists. Randomization cannot help, because the two candidate assignments are fixed before training and the random choices therefore carry no information that distinguishes them. The proposition follows.\hfill$\square$

\paragraph{Scope.} The result rules out a uniform guarantee over $\mathcal C_R$, but a controller may still achieve selective control under a particular assignment. Additional information or assumptions can distinguish assignments in $\mathcal C_R$, and if they exclude either of the two assignments used in the proof, the argument above no longer applies.

\paragraph{Connection to secure control.}
The argument here uses an indistinguishability principle also used when studying attack detection and identification in cyber-physical systems: no monitor can distinguish admissible scenarios that produce identical observations when it uses only those observations~\citep{pasqualetti2013attack}. In our paper, the indistinguishable scenarios are correctness assignments that produce the same training record $L_t$. Exchanging the correct and hacked groups between two such assignments makes their requirements for selective control incompatible. The argument therefore identifies the information that a uniform guarantee needs: observations or structural assumptions that exclude one of these assignments. In power grid security, models of grid operation supply such structural correctness knowledge: by relating coordinated cyberattacks to their physical consequences, they tell defenders which scenarios cause harm and let them target defense accordingly~\citep{moya2018developing}. Audits can play an analogous role by supplying correctness information that distinguishes otherwise compatible assignments. The subsequent analysis examines when this information supports selective control.

\subsubsection{Regularization and Its Limitations} \label{app:control_examples}
We analyze three regularization penalties and distinguish what each controls from what it guarantees about correctness. Throughout, we use exact gradient ascent, fixed penalty weights, and the fixed verifier from Section~\ref{sec:problem_formulation}. We then apply Proposition~\ref{prop:verifier_control_limit} to determine which guarantees of selective control these penalties can provide.

\paragraph{KL divergence from a reference}
For a fixed reference policy under the same prompt distribution, define
$$
K(\theta)
=
\mathbb E_{x\sim\mathcal D}
D_{\mathrm{KL}}\!\left(
\pi_\theta(\cdot\mid x)
\,\Vert\,
\pi_{\mathrm{ref}}(\cdot\mid x)
\right),
\qquad
F_\beta=J_R-\beta K,
$$
where $\beta>0$ is fixed.

\subparagraph{Step 1: Bound deviation from the reference.} Assume the flow remains in a region where $K$ is finite and continuously differentiable. Gradient ascent on $F_\beta$ gives
$$
\dot\theta=g_R-\beta\nabla_\theta K,
\qquad
\dot F_\beta=\|\nabla_\theta F_\beta\|^2\geq0.
$$
Thus $F_\beta(\theta(t))\geq F_\beta(\theta(0))$. Rearranging and using $J_R=p\leq1$ yields
$$
\begin{aligned}
K(\theta(t))
&\leq
K(\theta(0))
+\frac{p(\theta(t))-p(\theta(0))}{\beta}\\
&\leq
K(\theta(0))
+\frac{1-p(\theta(0))}{\beta}.
\end{aligned}
$$
In particular, initialization at the reference gives $K(\theta(0))=0$. The penalty therefore bounds departure from the reference along the exact flow.

\subparagraph{Step 2: Bound changes in hack probability.} Because the prompt distribution is fixed, $K$ also equals the KL divergence between the joint distributions of prompts and responses. Pinsker's inequality therefore gives, for the fixed hacked set $H$,
$$
|p_H(\theta)-p_{H,\mathrm{ref}}|
\leq \sqrt{\frac{K(\theta)}{2}},
\qquad
p_{H,\mathrm{ref}}:=\Pr_{\mathrm{ref}}(H).
$$
Thus, closeness to the reference limits the change in hack probability.

\subparagraph{Limitation.} The penalty bounds how much the policy changes, not the direction of that change. It therefore guarantees neither $\dot p_H<0$ nor $\dot p_G\ge0$. The bound controls hack probability relative to $p_{H,\mathrm{ref}}$. It implies a small absolute hack probability when both the reference's hack probability and the KL bound are small. The penalty depends on the policy and the reference but not on the correctness assignment, so with the same reference under every assignment in $\mathcal C_R$, Proposition~\ref{prop:verifier_control_limit} still applies.

\paragraph{Penalizing the gradient of reward.} Following the objective of
\citet{ackermann2026gradient}, we consider the idealized gradient regularization penalty
$$
F_{\mathrm{GR}}=J_R-\gamma\|g_R\|^2,
$$
with fixed $\gamma>0$.

\subparagraph{Step 1: Derive the correction.} Assume $J_R$ is twice continuously differentiable and write $B_R:=\nabla_\theta^2J_R$. Since $B_R$ is symmetric,
$$
\nabla_\theta\|g_R\|^2=2B_Rg_R.
$$
Gradient ascent on $F_{\mathrm{GR}}$ therefore gives
$$
\dot\theta=g_R-2\gamma B_Rg_R.
$$
The correction depends entirely on the verifier objective and its derivatives.

\subparagraph{Step 2: Separate reward sensitivity from correctness.} The norm $\|g_R\|$ measures first-order sensitivity of expected verifier reward to parameter changes. It does not by itself determine correctness. For example, if the verifier accepts every response, then $J_R\equiv1$ and $g_R\equiv0$. The correction vanishes even if the policy assigns positive probability to incorrect responses.

\subparagraph{Scope of the cited analysis.} \citet{ackermann2026gradient} connect flatter optima to the accuracy of the proxy reward under additional assumptions: continuous actions, a Gaussian policy with fixed covariance, regularity conditions on the policy and reward functions, and a true reward that is Lipschitz continuous. Our setup does not impose these policy and reward assumptions, and $\mathcal C_R$ places no corresponding regularity restriction on correctness assignments. Thus, the cited results do not establish selective control uniformly over $\mathcal C_R$.

Exchanging correctness assignments in $\mathcal C_R$ leaves $J_R$, its derivatives, and the correction unchanged, so Proposition~\ref{prop:verifier_control_limit} applies to this regularizer. This conclusion does not contradict the theoretical results of \citet{ackermann2026gradient}, which hold under the additional assumptions above, or their empirical improvements, which concern particular tasks rather than a uniform guarantee over $\mathcal C_R$.

\paragraph{Stability of relative probabilities.} Motivated by tie training for reducing reliance on spurious features in preference optimization \citep{moya2026spurious}, we analyze a simplified penalty on changes in relative response probabilities. The penalty discourages deviations from a fixed reference ratio between two verifier-accepted responses. Unlike pairs constructed to have equal utility, equal verifier rewards alone do not establish that the responses are equally correct. We therefore examine what this penalty guarantees when pair selection uses only verifier information.

Fix two accepted responses $y_1,y_2$ to a prompt $x$, with positive probabilities under the current and reference policies. Define
$$
e(\theta)
=
\log\frac{\pi_\theta(y_1\mid x)}
              {\pi_\theta(y_2\mid x)}
-
\log\frac{\pi_{\mathrm{ref}}(y_1\mid x)}
              {\pi_{\mathrm{ref}}(y_2\mid x)}.
$$
Thus $e=0$ means that the current relative probability matches the reference ratio. Assume $e$ is continuously differentiable on a neighborhood of the trajectory.

\subparagraph{Step 1: Derive the restoring term.}
For fixed $\lambda>0$, gradient ascent on
$$
F_{\mathrm I}=J_R-\frac{\lambda}{2}e^2
$$
adds the correction
$$
u(t)=-\lambda e\,\nabla_\theta e.
$$
Along the corrected flow,
\[
\dot e=b-\lambda ae,
\qquad
b:=\nabla_\theta e^\top g_R,
\qquad
a:=\|\nabla_\theta e\|^2.
\]
The term $b$ is the change in the log ratio induced by
verifier training. The term $-\lambda ae$ opposes deviation
from the reference ratio.

\subparagraph{Step 2: Bound the deviation.}
Consider an interval $[0,T]$ on which the flow exists and both response probabilities remain positive. Suppose
$$
a(t)\geq m>0,
\qquad
|b(t)|\leq B
\quad\text{for }0\leq t\leq T.
$$
An integrating factor gives
$$
e(t)
=
\exp\!\left(-\lambda\int_0^t a(s)\,ds\right)e(0)
+
\int_0^t
\exp\!\left(-\lambda\int_\tau^t a(s)\,ds\right)
b(\tau)\,d\tau.
$$
Using the bounds on $a$ and $b$,
$$
\begin{aligned}
|e(t)|
&\leq
e^{-\lambda mt}|e(0)|
+
B\int_0^t e^{-\lambda m(t-\tau)}\,d\tau\\
&=
e^{-\lambda mt}|e(0)|
+
\frac{B}{\lambda m}
\left(1-e^{-\lambda mt}\right).
\end{aligned}
$$
The bound separates two contributions: one from the initial deviation, which decays, and one from the drift $b$ that the verifier induces. If $b\equiv0$, the deviation decays exponentially. If the drift persists, the bound allows a nonzero deviation to persist as well.

This is a scalar estimate of the form used in input-to-state stability analysis \citep{khalil2002nonlinear}. Here, the conclusion concerns $e$ under the stated bounds. It guarantees neither stability of all policy parameters nor exact preservation of the ratio.

\subparagraph{Limitation: Restoration can increase errors.} Consider one prompt with two responses, $y_1$ correct and $y_2$ incorrect, both accepted. Let
$$
p_G(\theta)=\frac{1}{1+e^{-\theta}},
\qquad
p_H(\theta)=1-p_G(\theta).
$$
For a reference parameter $\theta_{\mathrm{ref}}$, we have $J_R\equiv1$ and $e=\theta-\theta_{\mathrm{ref}}$.
Thus,
$$
\dot\theta=-\lambda e,
\qquad
\dot p_G=-\lambda e\,p_Gp_H,
\qquad
\dot p_H=\lambda e\,p_Gp_H.
$$
If $\theta(0)>\theta_{\mathrm{ref}}$, then $e(t)=e(0)e^{-\lambda t}>0$ at every finite time. Restoration therefore decreases correctness and increases hack probability while reducing the deviation from the reference ratio.

The current policy initially favors the correct response more strongly than the reference does, so restoring the reference reverses that improvement. If initialized at the reference instead, the flow remains stationary and does not strictly reduce hack probability.

\paragraph{Common limit of the three penalties.}
We hold the reference policies, penalty weights, and selected response pairs fixed across correctness assignments and give the penalties no additional correctness information. At the same policy, each penalty and its gradient then agree under every assignment in $\mathcal C_R$. The argument about selective control above, which shows that the records coincide, therefore gives the same controlled trajectory under the compared assignments.

Under the assumptions of Proposition~\ref{prop:verifier_control_limit}, none of these penalties can guarantee $\dot p_H<0$ and $\dot p_G\ge0$ along the trajectory under every assignment in $\mathcal C_R$ whenever $p_H>0$. A penalty may control the departure from a reference, the sensitivity of the reward, or the relative probabilities of paired responses without providing this uniform guarantee of selective control, although it may still succeed on particular tasks.

Additional correctness information or justified structural assumptions can restrict $\mathcal C_R$ and exclude indistinguishable assignments that require incompatible corrections. Excluding these assignments removes the obstruction exhibited here, even without identifying every hack. Removing the obstruction does not by itself give a guarantee, however: a guarantee of selective control also requires showing that the available updates satisfy $\dot p_H<0$ and $\dot p_G\ge0$ throughout the trajectory.

\newpage
\subsection{Technical Details for Section~\ref{sec:selective_control}}

\subsubsection{Projected Correction and Its Immediate Effects}
\label{app:controlled_dynamics} 
Write $h_A=\nabla_\theta p_{H_A}$ and $h=\nabla_\theta p_H$. Since $p=p_H+p_G$, we have $\nabla_\theta p_G=g_R-h$. We first derive the effects of the projected correction without assuming full audit coverage.

\paragraph{Step 1: Preserve instantaneous acceptance growth.}
Define
$$
P_\perp=
\begin{cases}
I_d-\dfrac{g_Rg_R^\top}{\|g_R\|^2},
&g_R\ne0,\\[6pt]
I_d,&g_R=0.
\end{cases}
$$
This orthogonal projector satisfies
$P_\perp^\top=P_\perp$,
$P_\perp^2=P_\perp$, and
$g_R^\top P_\perp=0$.
For the correction $u=-\lambda P_\perp h_A$, with $\lambda>0$, we therefore have
$$
g_R^\top u=0.
$$
Along the corrected flow $\dot\theta=g_R+u$, the chain rule gives
$$
\dot p=g_R^\top(g_R+u)=\|g_R\|^2.
$$
Thus the correction preserves the instantaneous acceptance growth produced by the verifier gradient at the current policy.

\paragraph{Step 2: Oppose growth of audited hacks.} Symmetry and idempotence of the projector give
$$
h_A^\top u
=-\lambda h_A^\top P_\perp h_A
=-\lambda\|P_\perp h_A\|^2.
$$
Consequently,
$$
\dot p_{H_A}
=h_A^\top g_R-\lambda\|P_\perp h_A\|^2.
$$
The correction contributes a strictly negative term when $P_\perp h_A\ne0$. The full rate is negative only when this contribution outweighs any positive growth induced by the verifier gradient.

\paragraph{Step 3: Relate the correction to the full population.}
For the full hack probability, the correction contributes
$$
h^\top u
=-\lambda(P_\perp h)^\top(P_\perp h_A).
$$
Because $g_R^\top u=0$, its contribution to correct responses is equal and opposite:
$$
\nabla_\theta p_G^\top u
=(g_R-h)^\top u
=-h^\top u.
$$
Thus, when the two projected gradients have positive inner product, the correction opposes hack growth and contributes equally to correct-response growth. These statements concern the correction's contribution.
Selectivity of the full update also depends on $g_R$.

If $P_\perp h=0$, every correction orthogonal to $g_R$ has zero instantaneous effect on $p_H$. The projection therefore requires a component of the hack gradient orthogonal to the reward gradient. The theorem below shows how full coverage and sufficient corrective strength turn these local identities into selective control and a bound along the evolving policy.

\subsubsection{Proof of Theorem~\ref{thm:selective_control}}
\label{app:proof_selective_control}
The proof has four steps. Full audit coverage identifies the gradient $h=\nabla p_H$. Projection removes the component of $h$ along $g_R$, so the correction does not slow the growth of acceptance. A strong enough correction then gives $\dot p_H<0$ and $\dot p_G\ge0$. Finally, integrating these rates over $[0,T]$ bounds the hacked share.

\paragraph{Step 1: Full coverage identifies the gradient of $p_H$.}
Under (A2), unaudited hacks have zero probability
along the policy trajectory:
$$
p_H-p_{H_A}=0.
$$
As a function of $\theta$, this difference is nonnegative everywhere, because audited hacks form a subset of all hacks. At every point of the trajectory, it therefore attains its minimum value of zero, and its gradient vanishes:
$$
\nabla p_{H_A}=\nabla p_H.
$$
Thus the audited population gradient coincides with
the full hack gradient along the trajectory. The correction becomes $u=-\lambda P_\perp h$, where $P_\perp$ projects onto the orthogonal complement of $g_R$, and we write $S=\|P_\perp h\|^2$ for the squared norm of the projected gradient.

\paragraph{Step 2: Derive the corrected dynamics.}
The corrected flow is $\dot\theta=g_R+u=g_R-\lambda P_\perp h$. Because $P_\perp$ is an orthogonal projection onto the complement of $g_R$, we have $g_R^\top P_\perp h=0$ and $h^\top P_\perp h=\|P_\perp h\|^2=S$ (Appendix~\ref{app:controlled_dynamics}). Differentiating along the flow therefore gives
$$
\dot p=\|g_R\|^2,
\qquad
\dot p_H=h^\top g_R-\lambda S,
\qquad
\dot p_G=\|g_R\|^2-\dot p_H.
$$
At the current policy, the correction preserves the instantaneous acceptance growth, lowers the rate
$\dot p_H$ by $\lambda S$, and raises the rate
$\dot p_G$ by the same amount.

To compare the growth of the two groups, recall $z=\log(p_H/p_G)$. Without the correction, $z$ changes at the rate
$$
b=(\bar s_H-\bar s_G)^\top g_R,
$$
which we call the pressure that verifier rewards exert on the hacked share. Taking the logarithmic derivative gives
$$
\begin{aligned}
\dot z
&=\frac{\dot p_H}{p_H}
  -\frac{\dot p_G}{p_G}\\
&=b-\lambda S\left(\frac{1}{p_H}
                       +\frac{1}{p_G}\right)\\
&=b-\frac{\lambda S}{p\,q(1-q)}.
\end{aligned}
$$
Since $\dot q=q(1-q)\dot z$, we also obtain
$$
\dot q=q(1-q)b-\frac{\lambda S}{p}.
$$
These expressions separate the verifier's growth pressure from the opposing effect of the correction.

\paragraph{Step 3: Identify when the update is selective.}
The correction lowers $\dot p_H$ by $\lambda S$ and raises $\dot p_G$ by the same amount, so
$$
\dot z=\frac{\dot p_H}{p_H}-\frac{\dot p_G}{p_G}
=b-\lambda S\Bigl(\frac1{p_H}+\frac1{p_G}\Bigr).
$$
If the correction term exceeds $b$, then $\dot z<0$, and hence $\dot q<0$, since $z$ increases with $q$. Differentiating $p_G=p(1-q)$ and using $\dot p=\|g_R\|^2$ then gives
$$
\dot p_G=(1-q)\|g_R\|^2-p\dot q>0.
$$
Because acceptance never decreases under the correction, a falling hacked share implies rising correctness.

A falling hacked share does not require $p_H$ itself to fall, since acceptance may still grow. For this stronger conclusion, the correction must overcome the verifier's contribution to $\dot p_H$:
$$
\lambda S>h^\top g_R.
$$
Then $\dot p_H<0$ and $\dot p_G=\|g_R\|^2-\dot p_H>0$. Because $p_H$ falls while $p_G$ rises, the log ratio $z$ decreases as well, and part~(i) follows.

\paragraph{Step 4: Bound the hacked share during training.}
Because $q$ is the logistic function of $z$, the rate from Step~3 gives
$$
\dot q=q(1-q)\,\dot z=q(1-q)b-\frac{\lambda S}{p},
$$
where the second term uses $p_H=pq$ and $p_G=p(1-q)$. Assumption (A3) bounds the correction from below, $S/p\ge\kappa q$, and the pressure from verifier rewards is bounded above by $b\le D$ with $D\ge0$. Since $q(1-q)\le1/4$, these bounds give
$$
\dot q\le\frac{D}{4}-\lambda\kappa q.
$$
The first term caps the pressure from verifier rewards, and the second is a correction proportional to the current hacked share.

Rearranging gives $\dot q+\lambda\kappa q\le D/4$, and multiplying by the integrating factor $e^{\lambda\kappa t}$ gives
$$
\frac{d}{dt}\left(e^{\lambda\kappa t}q(t)\right)
\leq \frac{D}{4}e^{\lambda\kappa t}.
$$
Integrating from $0$ to $t$ and rearranging yields
$$
q(t)\leq
e^{-\lambda\kappa t}q(0)
+\frac{D}{4\lambda\kappa}
 \left(1-e^{-\lambda\kappa t}\right),
\qquad t\in[0,T].
$$
The initial hacked share contributes a term that decays at rate $\lambda\kappa$, while the pressure from verifier rewards contributes a term that rises toward $D/(4\lambda\kappa)$. Part~(ii) follows, which completes the proof of the theorem.  \hfill$\square$

\subsubsection{Estimating the gradient of $p_H$ from audits}
\label{app:audit_coverage}

We fix the current policy and assume that audits reveal exact correctness labels. The estimator weights each observed hack by the inverse of its audit probability, which compensates for hacks that are generated but not audited.

\paragraph{Step 1: Express the gradient as an expectation.}
With the policy score $s_\theta(x,y)=\nabla_\theta\log\pi_\theta(y\mid x)$, and since $R(1-c)$ indicates a hack, differentiating the probability of hacks gives
$$
h:=\nabla_\theta p_H
=\mathbb E_\theta
 \left[R(X,Y)(1-c(X,Y))s_\theta(X,Y)\right].
$$
Each hack contributes its policy score, and all other responses contribute zero.

\paragraph{Step 2: Weight the observed contributions.}
We draw $n$ independent pairs with $X_i\sim\mathcal D$
and $Y_i\sim\pi_\theta(\cdot\mid X_i)$. We audit each accepted response with known probability $\rho_i=\rho(X_i,Y_i)>0$ and never audit rejected responses. The indicator $I_i$ records whether an audit occurs, and when $I_i=1$, the audit reveals $C_i=c(X_i,Y_i)$. The estimator is
$$
\xi_i=
\begin{cases}
\dfrac{1-C_i}{\rho_i}s_\theta(X_i,Y_i),
&I_i=1,\\[6pt]
0,&I_i=0,
\end{cases}
\qquad
\widehat h=\frac1n\sum_{i=1}^n\xi_i.
$$
An audited hack receives weight $1/\rho_i$, while audited correct responses and unaudited responses contribute zero. The average divides by all $n$ generated responses, including those that were not audited.

\paragraph{Step 3: Show why the weighting works.}
For an accepted response, an audit occurs with probability $\rho_i$, and the weight $1/\rho_i$ cancels this probability in expectation. For a rejected response, both sides are zero. Hence
$$
\mathbb E[\xi_i\mid X_i,Y_i]
=
R(X_i,Y_i)(1-c(X_i,Y_i))s_\theta(X_i,Y_i),
$$
and averaging over the generated responses gives
\begin{equation}
\label{eq:audit_gradient}
\mathbb E[\widehat h]=h,
\end{equation}
where the expectation covers both response generation and audit selection. Audits of only some responses therefore give an unbiased estimate of $h$, provided that every accepted response has a positive audit probability.

\paragraph{Consequence for the projected correction.}
We keep $g_R$ and $P_\perp$ exact and fix $\lambda>0$ before drawing the batch. Every realized correction $\widehat u=-\lambda P_\perp\widehat h$ then satisfies $g_R^\top\widehat u=0$, so it preserves the instantaneous acceptance growth
at the fixed policy. Moreover,
$$
\begin{aligned}
\mathbb E[h^\top(g_R+\widehat u)]
&=h^\top g_R-\lambda h^\top P_\perp
  \mathbb E[\widehat h]\\
&=h^\top g_R-\lambda\|P_\perp h\|^2.
\end{aligned}
$$
At the fixed policy, the correction from a batch therefore changes $p_H$ at the same expected rate as the population correction. A single batch can still produce a direction that does not decrease $p_H$.

Under the stated assumptions, the estimator is unbiased. Small audit probabilities, however, produce large weights $1/\rho_i$ and can increase sampling variability. The argument does not cover accepted responses with zero audit probability or audits with systematic label errors. Appendix~\ref{app:practical_considerations} discusses the practical consequences of both cases.

\newpage
\section{Experimental Details} \label{app:experimental_details}
\subsection{The Gaussian Contextual Bandit Experiment}
\label{app:gaussian_bandit}

\paragraph{Motivation.} To test our theoretical predictions, we use a controlled contextual bandit in which we know the correctness of every response and can compute population gradients exactly.

\paragraph{Setup.}

\subparagraph{Policy.} The policy assigns each response a logit using shared parameters $\theta\in\mathbb R^4$, fixed features $\phi(x,y)\in\mathbb R^4$, and a fixed offset $a(x,y)$:
$$
\pi_\theta(y\mid x)=
\frac{\exp\bigl(\theta^\top\phi(x,y)+a(x,y)\bigr)}
{\sum_{y'\in\mathcal Y_x}
 \exp\bigl(\theta^\top\phi(x,y')+a(x,y')\bigr)}.
$$
Training changes only $\theta$. Sharing parameters allows updates from different prompts to interact.

\subparagraph{Initialization.} We initialize $\theta(0)=0$ and use offsets to set the initial acceptance $p(0)$ and hacked share $q(0)$:
$$
a(x,y)=
\begin{cases}
\log\bigl[p(0)(1-q(0))/16\bigr], & y\in G_x,\\
\log\bigl[p(0)q(0)/16\bigr],     & y\in H_x,\\
\log\bigl[(1-p(0))/16\bigr],     & y\in N_x.
\end{cases}
$$
This gives $p_G(0)=p(0)(1-q(0))$, $p_H(0)=p(0)q(0)$, and $p_N(0)=1-p(0)$, with equal probabilities within each group. Changing the offsets varies these probabilities while keeping the features fixed.

\paragraph{Prompt and Group Distributions.}

\subparagraph{Prompts and labels.} We use eight equally weighted prompts, each with 16 responses in each of three groups. Correct responses $G_x$ have $(R,c)=(1,1)$, hacks $H_x$ have $(R,c)=(1,0)$, and rejected responses $N_x$ have $(R,c)=(0,0)$. Group membership and labels remain fixed during training.

\subparagraph{Accepted features.} We construct features for hacks by centering Gaussian noise and adding the mean $(1,1,0,0)^\top$. We obtain features for correct responses by negating the second coordinate. Writing $\phi_{S,x,j}$ for response $j$ in group $S_x$, we draw
$$
\varepsilon_{x,j}\overset{\mathrm{iid}}{\sim}
\mathcal N(0,0.25^2I_4),
\qquad j=1,\ldots,16,
$$
and set
$$
\begin{aligned}
\phi_{H,x,j}
&=(1,1,0,0)^\top+\varepsilon_{x,j}
  -\frac1{16}\sum_{k=1}^{16}\varepsilon_{x,k},\\
\phi_{G,x,j}
&=\operatorname{diag}(1,-1,1,1)\phi_{H,x,j}.
\end{aligned}
$$
The empirical means are exactly $(1,1,0,0)^\top$ for hacks and $(1,-1,0,0)^\top$ for correct responses in every prompt.

\subparagraph{Rejected features.} We draw eight Gaussian vectors per prompt and append copies with the second coordinate negated:
$$
\xi_{x,j}\overset{\mathrm{iid}}{\sim}
\mathcal N(0,0.25^2I_4),
\qquad
v_{x,j}=\xi_{x,j},
\qquad
v_{x,j+8}=\operatorname{diag}(1,-1,1,1)\xi_{x,j},
$$
for $j=1,\ldots,8$. We center these vectors and add the desired mean $\mu_N=(0,\mu_{N,2},0,0)^\top$:
$$
\phi_{N,x,j}
=v_{x,j}-\frac1{16}\sum_{k=1}^{16}v_{x,k}+\mu_N,
\qquad j=1,\ldots,16.
$$
The empirical mean is therefore exactly $\mu_N$. The experiments use
$\mu_{N,2}\in\{-1.5,-0.5,0,0.5\}$.

We draw the raw noise independently across prompts and between the accepted and rejected groups. Within each seed, we reuse these draws across initializations, rejected means, and learning rates.

\paragraph{Objective.}

\subparagraph{Training objective.} We maximize verifier acceptance:
$$
J_R(\theta)=
\frac18\sum_{x=1}^{8}
\sum_{y\in\mathcal Y_x}\pi_\theta(y\mid x)R(x,y)
=p(\theta).
$$
Correct responses and hacks receive the same reward. Correctness labels define the controlled initializations and evaluation metrics; they do not enter the training objective.

\subparagraph{Training dynamics.} We evaluate the objective and gradient by summing over every prompt and response. We numerically integrate the verifier flow~\eqref{eq:verifier_flow},
$$
\dot\theta=g_R=\nabla_\theta J_R,
$$
using DOP853 until $T=50$. For the learning rate ablation, we instead take discrete updates:
$$
\theta_{k+1}=\theta_k+\eta g_R(\theta_k).
$$
Both procedures use exact population gradients. 

\paragraph{Metrics.}

\subparagraph{Acceptance and correctness.}
We compute each group's probability over all prompts:
$$
p_S(\theta)=
\frac18\sum_{x=1}^{8}\sum_{y\in S_x}\pi_\theta(y\mid x),
\qquad S\in\{G,H,N\}.
$$
We report acceptance $p=p_G+p_H$, hacked share $q=p_H/p$, and correctness $p_G$. We form $q$ after aggregating over prompts. The proxy and true objectives are $J_R=p$ and $J_C=p_G$.

\subparagraph{Log odds and group scores.}
We track $z=\log(p_H/p_G)$ and compute the group scores $\bar s_S=\nabla_\theta\log p_S$ exactly:
$$
\bar s_S=
\frac{1}{8p_S}
\sum_{x=1}^{8}\sum_{y\in S_x}
\pi_\theta(y\mid x)
\left[
\phi(x,y)-
\sum_{y'\in\mathcal Y_x}\pi_\theta(y'\mid x)\phi(x,y')
\right].
$$
We subtract each prompt's mean feature before aggregating. Although the features remain fixed, the group scores change as the policy re-weights responses.

\subparagraph{Hack bias and leakage.} We compute the two contributions in~\eqref{eq:hack_mechanisms}:
$$
\dot z=p(1-p)\left[
\underbrace{
(\bar s_H-\bar s_G)^\top(\bar s_G-\bar s_N)
}_{\text{leakage}}
+
\underbrace{
q\|\bar s_H-\bar s_G\|^2
}_{\text{hack bias}}
\right].
$$
Their sum determines the sign of $\dot z$ and hence the direction of change in the hacked share. We evaluate the full expression within each seed before averaging.

\subparagraph{Growth and reward hacking.} We compute the rates of acceptance, hacked share, and correctness:
$$
\dot p=\|g_R\|^2,
\qquad
\dot q=q(1-q)\dot z,
\qquad
\dot p_G=(1-q)\dot p-p\dot q.
$$
We identify declining correctness using $p\dot q>(1-q)\dot p$ and locate stationary correctness where $\dot p_G=0$. These quantities distinguish growth of the hacked share from a decrease in the probability of correct responses.

\subparagraph{Variation across seeds.} The appendix experiments use ten independent Gaussian feature draws and report means and one sample standard deviation. Each draw defines deterministic training. For the comparison of correctness against acceptance, we interpolate each trajectory onto acceptance values reached by every seed and geometry before computing these statistics.

\subsubsection{When Hacking Grows} \label{app:experiment_hacking}

\paragraph{Controlled comparisons.} We fix $p(0)=2/3$. In the main paper, panel (a) tracks $p$, $q$, and $p_G$ with $q(0)=0.3$ and $\mu_{N,2}=-0.5$. Panel (b) compares $J_C$ against $J_R$ for $q(0)\in\{0.1,0.3,0.5\}$ using the same features.

The centered construction lets us vary initial leakage and hack bias separately. At initialization,
$$
(\bar s_H-\bar s_G)^\top(\bar s_G-\bar s_N)
=-2(1+\mu_{N,2}),
\qquad
q(0)\|\bar s_H-\bar s_G\|^2=4q(0).
$$
Changing the rejected mean changes initial leakage. Changing the initial hacked share changes initial hack bias while preserving the differences between group scores. The following experiments examine how these contributions evolve during training.

\paragraph{Results.}
\subparagraph{Experiment I: hack bias and correctness-to-hack leakage.} Starting from initial acceptance~$p(0)=2/3$, hack share~$q(0)=0.3$, and mean rejected strength~$\mu_{N,2}=-0.5$, we track hack bias, leakage from correctness to hacking, and their sum. For the same trajectories, we plot $\dot z(t)$ and compare it with central time differences of $z$. Together, these experiments relate the competing contributions to the growth rate of hacking through the factor $p(\theta)(1-p(\theta))$ in~\eqref{eq:hack_mechanisms}.

Figure~\ref{fig:growth-appendix}(a) shows that hack bias increases while leakage from correctness to hacking remains near~$-1$. This value matches initialization, where the centered feature means give
$$
(\bar s_H-\bar s_G)^\top(\bar s_G-\bar s_N)=-2(1+\mu_{N,2})=-1.
$$
Leakage stays near this value, consistent with the similar Gaussian spreads across groups: policy reweighting shifts their mean features similarly, leaving the contrasts between groups approximately unchanged. Negative leakage opposes growth of the hacked share, but hack bias outweighs it throughout the plotted interval. Their sum therefore stays positive, and Proposition~\ref{prop:hacking_growth} predicts $\dot z>0$ and an increasing hacked share. The predicted rate agrees with central time differences in Figure~\ref{fig:growth-appendix}~(b). Thus, negative leakage can oppose hacking without overcoming the reinforcement from hack bias.

\subparagraph{Ablation I: varying the mean rejected strength.} We keep $p(0)=2/3$ and $q(0)=0.3$ fixed and vary $\mu_{N,2}\in\{-1.5,-0.5,0.5\}$, holding the accepted features and underlying Gaussian draws fixed. Since initial leakage equals $-2(1+\mu_{N,2})$, these values give initial leakage of $+1$, $-1$, and $-3$: positive, negative but weaker than the initial hack bias, and negative and stronger than it. For each condition, we plot correctness $J_C=p_G$ against verifier reward $J_R=p$. Plotting against reward rather than time compares the conditions at the same acceptance level, which isolates how rejected responses affect correctness as verifier reward rises.

Figure~\ref{fig:growth-appendix}~(c) shows how the mean rejected strength changes the relation between verifier reward and correctness. For $\mu_{N,2}=-1.5$, where initial leakage is positive, correctness declines after a brief initial increase, so reward hacking emerges early. For $\mu_{N,2}=-0.5$, where negative leakage is weaker than hack bias, acceptance and correctness first improve together, but correctness later decreases despite further gains in acceptance. For $\mu_{N,2}=0.5$, where negative leakage is stronger, both quantities increase throughout the plotted range, with no reward hacking evident in the mean curve. These declines illustrate Proposition~\ref{prop:reward_hacking_flow}: reward hacking occurs when $p\dot q>(1-q)\dot p$, that is, when the shift toward hacked responses outweighs the correctness gained from rising acceptance. Thus, even with identical initial acceptance and hacked share, the geometry of the rejected responses can change whether and when reward hacking emerges.

\subparagraph{Ablation II: learning rate in exact gradient ascent.}
To test how closely gradient flow approximates discrete training, we keep $p(0)=2/3$ and $q(0)=1/2$ fixed and vary the learning rate~$\eta$, holding the features and initialization fixed within each seed and geometry. We compare iterate~$k$ of each discrete trajectory with the reference flow at the matched time $t=k\eta$. For each seed, we take the maximum discrepancy in $z$, $p$, $q$, and $p_G$ over times, and we then average these maxima across seeds.

Figure~\ref{fig:growth-appendix}~(d) shows that discrepancies in $z$, $p$, $q$, and $p_G$ increase with the learning rate, reaching the order of $10^{-2}$ at $\eta=0.4$. Because the updates use exact population gradients, these discrepancies reflect only the effect of taking finite steps. As the learning rate decreases, the discrepancies shrink, which supports using gradient flow to approximate discrete training over the tested interval.

\paragraph{Takeaway.} The verifier rewards every accepted response, whether correct or hacked. Which group benefits therefore depends on the feature geometry, including the rejected responses that shape the direction in which acceptance increases. Along this direction, hack bias can overcome negative leakage and shift the accepted population toward hacked responses. Correctness falls once this shift outweighs the correctness gained from rising acceptance. Because discrete gradient ascent with small learning rates closely tracks gradient flow, the flow offers a reliable way to study this competition.

\begin{figure*}[t]
\centering
\begingroup

\pgfplotsset{
  growth appendix axis/.style={
    width=0.38\textwidth,
    height=0.25\textwidth,
    scale only axis,
    tick label style={font=\small},
    xlabel style={font=\small},
    ylabel style={font=\small},
    title style={font=\small},
    every axis plot/.append style={line width=1.6pt},
    legend style={
      font=\small,
      draw=none,
      fill=none,
      row sep=-1pt,
      cells={anchor=west}
    },
    legend cell align=left,
    enlargelimits=false,
    unbounded coords=jump
  }
}

\def\GrowthBand#1#2#3#4#5{%
  \addplot[draw=none,name path=#5lower,forget plot]
    table[
      x=#2,
      y expr={\thisrow{#3_mean}-\thisrow{#3_std}}
    ] {#1};

  \addplot[draw=none,name path=#5upper,forget plot]
    table[
      x=#2,
      y expr={\thisrow{#3_mean}+\thisrow{#3_std}}
    ] {#1};

  \addplot[
    draw=none,fill=#4,fill opacity=0.25,forget plot
  ] fill between[of=#5lower and #5upper];
}

\def\GrowthStepBand#1#2#3{%
  \addplot[draw=none,name path=step#3lower,forget plot]
    table[
      x=eta,
      y expr={
        (\thisrow{#1_max_flow_error_mean}
         -\thisrow{#1_max_flow_error_std}) > 0
        ? \thisrow{#1_max_flow_error_mean}
          -\thisrow{#1_max_flow_error_std}
        : nan
      }
    ] {bandit/step_error_summary.dat};

  \addplot[draw=none,name path=step#3upper,forget plot]
    table[
      x=eta,
      y expr={
        (\thisrow{#1_max_flow_error_mean}
         -\thisrow{#1_max_flow_error_std}) > 0
        ? \thisrow{#1_max_flow_error_mean}
          +\thisrow{#1_max_flow_error_std}
        : nan
      }
    ] {bandit/step_error_summary.dat};

  \addplot[
    draw=none,fill=#2,fill opacity=0.25,forget plot
  ] fill between[of=step#3lower and step#3upper];
}

\begin{tikzpicture}
\begin{groupplot}[
  growth appendix axis,
  group style={
    group size=2 by 2,
    horizontal sep=0.13\textwidth,
    vertical sep=1.65cm
  }
]

\nextgroupplot[
  title={(a)},
  xmin=0, xmax=50,
  xtick={0,25,50},
  xlabel={Training time},
  ylabel={Growth bracket terms},
  legend style={at={(0.97,0.075)},anchor=south east}
]

\GrowthBand{bandit/mechanisms.dat}{t}{leakage}{blue}{leakage}
\GrowthBand{bandit/mechanisms.dat}{t}{bias}{orange}{bias}
\GrowthBand{bandit/mechanisms.dat}{t}{bracket}{black}{bracket}

\addplot[gray,thin,forget plot]
  coordinates {(0,0) (50,0)};

\addplot[blue,solid]
  table[x=t,y=leakage_mean] {bandit/mechanisms.dat};
\addlegendentry{Leakage}

\addplot[orange,dashed]
  table[x=t,y=bias_mean] {bandit/mechanisms.dat};
\addlegendentry{Hack bias}

\addplot[black,dashdotted]
  table[x=t,y=bracket_mean] {bandit/mechanisms.dat};
\addlegendentry{Sum}

\nextgroupplot[
  title={(b)},
  xmin=0, xmax=50,
  xtick={0,25,50},
  xlabel={Training time},
  ylabel={$\dot z$},
  legend style={at={(0.97,0.03)},anchor=south east}
]

\GrowthBand
  {bandit/mechanisms.dat}{t}
  {z_dot_decomposition}{blue}{zdot}

\addplot[gray,thin,forget plot]
  coordinates {(0,0) (50,0)};

\addplot[blue,solid]
  table[x=t,y=z_dot_decomposition_mean]
    {bandit/mechanisms.dat};
\addlegendentry{Predicted}

\addplot[
  black,
  only marks,
  mark=o,
  mark size=1.7pt,
  mark options={solid,fill=white}
]
  table[x=t,y=measured_mean]
    {bandit/z_dot_checks.dat};
\addlegendentry{Time differences}

\nextgroupplot[
  title={(c)},
  xmin=0.66, xmax=1,
  ymin=0, ymax=1,
  xtick={0.7,0.85,1},
  ytick={0,0.5,1},
  xlabel={$J_R=p$},
  ylabel={$J_C=p_G$},
  legend style={at={(0.03,0.03)},anchor=south west}
]

\GrowthBand{bandit/phase_00.dat}{p}{p_G}{blue}{geometryA}
\GrowthBand{bandit/phase_01.dat}{p}{p_G}{orange}{geometryB}
\GrowthBand
  {bandit/phase_02.dat}{p}{p_G}
  {green!60!black}{geometryC}

\addplot[blue,solid]
  table[x=p,y=p_G_mean] {bandit/phase_00.dat};
\addlegendentry{$\mu_{N,2}=-1.5$}

\addplot[orange,dashed]
  table[x=p,y=p_G_mean] {bandit/phase_01.dat};
\addlegendentry{$\mu_{N,2}=-0.5$}

\addplot[green!60!black,dashdotted]
  table[x=p,y=p_G_mean] {bandit/phase_02.dat};
\addlegendentry{$\mu_{N,2}=0.5$}

\nextgroupplot[
  title={(d)},
  xmode=log,
  ymode=log,
  xmin=0.025, xmax=0.4,
  xtick={0.025,0.05,0.1,0.2,0.4},
  xticklabels={$0.025$,$0.05$,$0.1$,$0.2$,$0.4$},
  xlabel={Learning rate $\eta$},
  ylabel={Maximum deviation from flow},
  legend columns=2,
  legend style={at={(0.03,0.97)},anchor=north west}
]

\GrowthStepBand{z}{blue}{z}
\GrowthStepBand{p}{orange}{p}
\GrowthStepBand{q}{green!60!black}{q}
\GrowthStepBand{p_G}{black}{pg}

\addplot[blue,solid,mark=*,mark size=1.7pt]
  table[x=eta,y=z_max_flow_error_mean]
    {bandit/step_error_summary.dat};
\addlegendentry{$z$}

\addplot[orange,dashed,mark=square*,mark size=1.7pt]
  table[x=eta,y=p_max_flow_error_mean]
    {bandit/step_error_summary.dat};
\addlegendentry{$p$}

\addplot[
  green!60!black,dashdotted,
  mark=triangle*,mark size=2pt
]
  table[x=eta,y=q_max_flow_error_mean]
    {bandit/step_error_summary.dat};
\addlegendentry{$q$}

\addplot[black,dotted,mark=diamond*,mark size=2pt]
  table[x=eta,y=p_G_max_flow_error_mean]
    {bandit/step_error_summary.dat};
\addlegendentry{$p_G$}

\end{groupplot}
\end{tikzpicture}
\endgroup

\caption{Growth of hacking in the Gaussian contextual bandit. (a) Hack bias outweighs negative correctness-to-hack leakage, keeping their sum positive. (b) The resulting growth rate $\dot z > 0$ of the log odds agrees with central time differences along the same trajectories. (c) Correctness against acceptance for three values of the mean rejected strength $\mu_{N,2}$. Changing the rejected features changes whether and when correctness declines as verifier reward rises. (d) Discrete gradient ascent approaches the reference flow as the learning rate decreases. We compare iterate $k$ with the flow at $t=k\eta$ and report, for each variable and seed, the maximum absolute discrepancy over evaluated times. Curves show means across ten independent Gaussian feature draws, and shading shows $\pm$ one sample standard deviation. Each draw defines deterministic dynamics with exact population gradients.}
\label{fig:growth-appendix}
\end{figure*}

\paragraph{Hyperparameters.}
Table~\ref{tab:hacking-hyperparameters} lists the experimental settings.

\begin{table}[t]
\caption{Settings for the Gaussian contextual-bandit growth experiments.}
\label{tab:hacking-hyperparameters}
\centering
\begin{tabular}{ll}
\hline
\textbf{Parameter} & \textbf{Value} \\
\hline
Number of prompts & $8$ \\
Responses per group per prompt & $16$ \\
Feature dimension & $4$ \\
Gaussian noise standard deviation & $0.25$ \\
Initial parameters & $\theta(0)=0$ \\
Independent feature seeds & $10$ (seeds $0,\ldots,9$) \\
Initial acceptance & $p(0)=2/3$ \\
\hline
Initial hacked share, Experiment I and Ablation I
  & $q(0)=0.3$ \\
Initial hacked share, Ablation II
  & $q(0)=1/2$ \\
Rejected mean, Experiment I
  & $\mu_{N,2}=-0.5$ \\
Rejected means, Ablation I
  & $\mu_{N,2}\in\{-1.5,\,-0.5,\,0.5\}$ \\
Rejected means, Ablation II
  & $\mu_{N,2}\in\{-0.5,\,0,\,0.5\}$ \\
\hline
Flow integrator & DOP853 \\
Maximum integration step & $0.25$ \\
Training horizon & $T=50$ \\
Recorded flow times & $0,\,0.1,\,\ldots,\,50$ \\
Gradient-ascent learning rates $\eta$
  & $\{0.4,\,0.2,\,0.1,\,0.05,\,0.025\}$ \\
\hline
\end{tabular}
\end{table}

\newpage
\subsubsection{Limits of Verifier Feedback}
\label{app:experiment_verifier_feedback}
We study the limits of selective control from verifier feedback (Section~\ref{subsec:limits_of_control}). We evaluate gradient regularization under two compatible correctness assignments along the same sequence of policies. The assignments exchange correct responses and hacks, so reducing hack probability under one reduces correctness under the other. We vary the regularization weight to measure the fraction of recorded times when control is selective under each assignment.

\paragraph{Additional experimental settings.} We use the policy and feature construction from Appendix~\ref{app:gaussian_bandit} to study control from verifier information, as described in Section~\ref{subsec:limits_of_control}.

\subparagraph{Fixed settings.} We set $p(0)=2/3$, $q(0)=1/2$, $\mu_N=(0,-0.5,0,0)^\top$, and $\theta(0)=0$. The offsets are $a(x,y)=\log(1/48)$, giving $p_G(0)=p_H(0)=p_N(0)=1/3$. We use ten independent feature draws and reuse each draw across regularization weights and correctness assignments. The verifier and initial policy remain fixed across comparisons.

\subparagraph{Correctness assignments.} We keep the groups $G,H,N$ fixed and evaluate two assignments: $c_1=\mathbf1_G$ and $c_2=\mathbf1_H=R-c_1$. Under $c_1$, correct responses belong to $G$ and hacks belong to $H$. Under $c_2$, these roles reverse:
$$
p_{G,c_1}=p_G,\qquad p_{H,c_1}=p_H,
\qquad
p_{G,c_2}=p_H,\qquad p_{H,c_2}=p_G.
$$
Both assignments use the same verifier. Correctness labels enter only the evaluation.

For panel Figure~\ref{fig:bandit-growth-four-panel}, Panel (c) in the main paper, we also evaluate $c=R$, under which $p_{G,c}=p$ and $p_{H,c}=0$, along the same verifier flow used for $c_1$.

\subparagraph{Exact derivatives.} Using the group scores computed above, we obtain
$$
\nabla p_S=p_S\bar s_S,\qquad S\in\{G,H,N\},
\qquad
g_R=\nabla p_G+\nabla p_H.
$$
We compute the reward gradient by summing over all prompts and responses.

\subparagraph{Controller.} Gradient regularization penalizes the squared norm of the reward gradient:
$$
F_{\mathrm{GR}}(\theta)
=J_R(\theta)-\gamma\|g_R(\theta)\|^2,
\qquad
B_R(\theta)=\nabla_\theta^2J_R(\theta).
$$
We follow its gradient by adding a correction to the verifier flow:
$$
u(t)=-2\gamma B_R(\theta(t))g_R(\theta(t)),
\qquad
\dot\theta=g_R+u(t)=\nabla_\theta F_{\mathrm{GR}}.
$$
We recompute the derivatives at the current policy and keep $\gamma$ fixed within each run. The main comparison uses $\gamma=16$; the ablation uses $\gamma\in\{0,1,4,16\}$. Setting $\gamma=0$ recovers the verifier flow. Because the update uses no correctness labels, both assignments give the same policy $\pi_{\theta(t)}$ at every time.

\subparagraph{Numerical integration.}
We integrate to $T=1$ using classical fourth order Runge--Kutta
with step $0.01$ and record the policy at every step.
We check integration accuracy by repeating each run with step $0.005$.
Table~\ref{tab:verifier-hyperparameters} lists the numerical settings.

\subparagraph{Selective control.} We evaluate changes in hack probability and correctness using the full parameter update:
$$
\dot p_{H,c}=\nabla p_{H,c}^{\top}(g_R+u),
\qquad
\dot p_{G,c}=\nabla p_{G,c}^{\top}(g_R+u).
$$
For numerical classification, we require $\dot p_{H,c}<-\epsilon$ and $\dot p_{G,c}\geq-\epsilon$, with $\epsilon=10^{-8}$. We verify the exchange of rates across assignments:
$$
\dot p_{H,c_1}=\dot p_{G,c_2},
\qquad
\dot p_{G,c_1}=\dot p_{H,c_2}.
$$
For each seed, we compute the fraction of recorded times satisfying selectivity. We report means and sample standard deviations across seeds for both the rates and these fractions.

\paragraph{Results.}

\subparagraph{Experiment I: verifier-only control.} We run gradient regularization with $\gamma=16$ and evaluate the resulting trajectory under $c_1$ and $c_2$. The controller uses only verifier information, so changing the correctness assignment leaves the trajectory unchanged. We plot $\dot p_{G,c}$ and $\dot p_{H,c}$ and identify intervals where $\dot p_{H,c}<0$ and $\dot p_{G,c}\geq0$. 

Figure~\ref{fig:verifier-control-appendix}(a) shows that the mean rate $\dot p_{G,c_1}=\dot p_{H,c_2}$ decreases but remains positive, while $\dot p_{H,c_1}=\dot p_{G,c_2}$ rises from negative to positive. Initially, the controller therefore reduces mean hack probability and increases mean correctness under $c_1$, with opposite effects under $c_2$. Later, both mean rates become positive, so neither assignment shows a reduction in mean hack probability. The exchange of rates in~\eqref{eq:incompatible_selective_rates} prevents selective control under both assignments at the same time, illustrating Proposition~\ref{prop:verifier_control_limit}.

\subparagraph{Ablation: gradient regularization weight.} We vary $\gamma\in\{0,1,4,16\}$ while keeping the features and initial policy fixed within each seed. For each weight, we evaluate both correctness assignments along the same sequence of policies. We compute the fraction of recorded times satisfying selective control under each assignment and report the mean and sample standard deviation across ten seeds. At every evaluated time, we also check that the policy update is never selective under both assignments.

Figure~\ref{fig:verifier-control-appendix}~(b) shows that the mean fraction of selective updates under $c_1$ is zero for $\gamma\in\{0,1,4\}$ and below $25\%$ for $\gamma=16$. Under $c_2$, the fraction is $100\%$ for $\gamma=0$, above $50\%$ for $\gamma=1$, and zero for $\gamma\in\{4,16\}$. Without regularization, training increases the probability of $H$ and decreases that of $G$ throughout the recorded interval. This is selective under $c_2$, which labels $H$ as correct and $G$ as hacks, but has the opposite effect under $c_1$. Increasing regularization to $\gamma=16$ introduces a period of selectivity under $c_1$ while eliminating selectivity under $c_2$. These results illustrate the information limit in Proposition~\ref{prop:verifier_control_limit}: a controller using only RLVR observations cannot guarantee selective control across compatible correctness assignments.

\paragraph{Takeaway.} Gradient regularization changes which accepted responses lose probability, but verifier observations do not reveal whether those responses are hacks or correct. The same reduction therefore removes hacks under one compatible assignment and correct responses under the other. Tuning the regularization weight changes which assignment benefits without resolving this ambiguity. Guaranteeing selective control requires additional information that distinguishes the correctness assignments.

\begin{figure*}[t]
\centering
\begingroup

\def\VerifierDir{bandit}

\pgfplotsset{
  verifier axis/.style={
    width=0.38\textwidth,
    height=0.24\textwidth,
    scale only axis,
    tick label style={font=\small},
    xlabel style={font=\small},
    ylabel style={font=\small},
    title style={font=\small},
    every axis plot/.append style={line width=1.4pt},
    legend style={
      font=\small,
      draw=none,
      fill=none,
      row sep=-1pt,
      cells={anchor=west}
    },
    legend cell align=left,
    enlarge x limits=false,
    enlarge y limits=0.05
  }
}

\begin{tikzpicture}
\begin{groupplot}[
  verifier axis,
  group style={
    group size=2 by 1,
    horizontal sep=0.13\textwidth
  }
]

\nextgroupplot[
  title={(a)},
  xmin=0, xmax=1,
  xtick={0,0.5,1},
  xlabel={Training time},
  ylabel={Probability rate},
  legend style={
    at={(0.97,0.97)},
    anchor=north east
  }
]

\addplot[
  draw=none,
  fill=blue,
  fill opacity=0.25,
  forget plot
]
  table[x=x,y=y]
  {\VerifierDir/rate_G_band.dat}
  \closedcycle;

\addplot[
  draw=none,
  fill=orange,
  fill opacity=0.25,
  forget plot
]
  table[x=x,y=y]
  {\VerifierDir/rate_H_band.dat}
  \closedcycle;

\addplot[gray,thin,forget plot]
  coordinates {(0,0) (1,0)};

\addplot[blue,solid]
  table[x=t,y=r_G_c1_mean]
  {\VerifierDir/control_rates.dat};
\addlegendentry{$\dot p_{G,c_1}=\dot p_{H,c_2}$}

\addplot[orange,dashed]
  table[x=t,y=r_H_c1_mean]
  {\VerifierDir/control_rates.dat};
\addlegendentry{$\dot p_{H,c_1}=\dot p_{G,c_2}$}

\nextgroupplot[
  title={(b)},
  xmin=0, xmax=16,
  xtick={0,1,4,16},
  ytick={0,0.25,0.5,0.75,1},
  xlabel={Regularization weight $\gamma$},
  ylabel={Fraction of times selective},
  legend style={
    at={(0.97,0.97)},
    anchor=north east
  }
]

\addplot[
  draw=none,
  fill=blue,
  fill opacity=0.25,
  forget plot
]
  table[x=x,y=y]
  {\VerifierDir/fraction_c1_band.dat}
  \closedcycle;

\addplot[
  draw=none,
  fill=orange,
  fill opacity=0.25,
  forget plot
]
  table[x=x,y=y]
  {\VerifierDir/fraction_c2_band.dat}
  \closedcycle;

\addplot[gray,thin,forget plot]
  coordinates {(0,0) (16,0)};

\addplot[
  blue,
  solid,
  mark=*,
  mark size=2pt
]
  table[x=gamma,y=fraction_c1_mean]
  {\VerifierDir/selectivity_fraction.dat};
\addlegendentry{$c_1$}

\addplot[
  orange,
  dashed,
  mark=square*,
  mark size=2pt
]
  table[x=gamma,y=fraction_c2_mean]
  {\VerifierDir/selectivity_fraction.dat};
\addlegendentry{$c_2$}

\end{groupplot}
\end{tikzpicture}
\endgroup
\caption{
Control using only RLVR training observations in the Gaussian contextual bandit. (a) Rates of change in correctness and hack probability under gradient regularization with $\gamma=16$. The assignments $c_1,c_2$ exchange these rates along the same sequence of policies. (b) Fraction of recorded times satisfying selective control for each regularization weight $\gamma$: $\dot p_{H,c}<-\epsilon$ and $\dot p_{G,c}\geq-\epsilon$, with $\epsilon=10^{-8}$. We compute each fraction within a seed before averaging. Lines show means across ten independent Gaussian feature draws; shading shows $\pm$ one sample standard deviation.
}
\label{fig:verifier-control-appendix}
\end{figure*}

\paragraph{Hyperparameters.}
Table~\ref{tab:verifier-hyperparameters} lists the additional and changed settings for this experiment. All other settings follow Table~\ref{tab:hacking-hyperparameters}.

\begin{table}[h!]
\caption{Additional settings for the verifier-only control experiments
(Appendix~\ref{app:experiment_verifier_feedback}).
The main experiment uses $\gamma=16$; the ablation varies its weight.
Both correctness assignments share the same controlled trajectory.
All expectations are evaluated exactly. Other model settings follow
Table~\ref{tab:hacking-hyperparameters}.}
\label{tab:verifier-hyperparameters}
\centering
\begin{tabular}{ll}
\hline
\textbf{Parameter} & \textbf{Value} \\
\hline
Rejected-group second-coordinate mean & $-0.5$ \\
Initial hacked share $q(0)$ & $1/2$ \\
Initial acceptance $p(0)$ & $2/3$ \\
Initial parameters & $\theta(0)=0$ \\
Independent feature seeds & $10$ (seeds $0,\ldots,9$) \\
Main regularization weight $\gamma$ & $16$ \\
Regularization-weight sweep & $\{0,\,1,\,4,\,16\}$ \\
\hline
Flow integrator & Classical fourth-order Runge--Kutta \\
Training horizon & $T=1$ \\
Recorded times & $0,\,0.01,\,\ldots,\,1$ \\
Tolerance for selectivity $\epsilon$ & $10^{-8}$ \\
\hline
\end{tabular}
\end{table}

\newpage
\subsubsection{Selective Control with Correctness Feedback}
\label{app:experiment_selective_control}
We studyprojected audit corrections, examine the effects of audit coverage and projection error, and illustrate the ISS bound in Theorem~\ref{thm:selective_control}.

\paragraph{Additional experimental settings.}

\subparagraph{Features and initialization.} We use the policy and Gaussian construction from Appendix~\ref{app:gaussian_bandit}, changing the accepted feature means to $(4,-1,0,0)^\top$ for $G$ and $(4,1,0,0)^\top$ for $H$. The rejected mean remains $(0,-0.5,0,0)^\top$. These means give $\nabla p_G(0)^\top\nabla p_H(0)=29/324>0$, so audit correction without projection (raw) initially opposes correctness. We fix $c=c_1=\mathbf1_G$, $p(0)=2/3$, $q(0)=1/2$, and $\theta(0)=0$. Thus, every offset equals $\log(1/48)$ and $p_G(0)=p_H(0)=p_N(0)=1/3$. We reuse features and initialization across methods within each seed.

\subparagraph{Audits and exact gradients.} A fixed audit set $A_\mathrm{aud}$ reveals correctness labels for selected accepted responses. Its audited hacks are $H_A=A_\mathrm{aud}\cap H$. We compute their probability gradient exactly:
$$
\nabla p_{H_A}
=
\frac18\sum_x\sum_{y:(x,y)\in H_A}
\pi_\theta(y\mid x)
\left[
\phi(x,y)-\sum_{y'}\pi_\theta(y'\mid x)\phi(x,y')
\right].
$$
We use full coverage except in the coverage ablation. Under full coverage, $p_{H_A}=p_H$ and their gradients coincide. Labels outside the audit set enter only the evaluation.

\subparagraph{Corrections.}
Each method follows $\dot\theta=g_R+u$, with
$$
u=
\begin{cases}
0,
& \text{verifier flow},\\
-2\gamma B_Rg_R,
& \text{gradient regularization},\\
-\lambda\nabla p_{H_A},
& \text{raw audit correction},\\
-\lambda P_\perp\nabla p_{H_A},
& \text{projected audit correction}.
\end{cases}
$$
Here $B_R=\nabla^2J_R$ and $P_\perp=I-g_Rg_R^\top/\|g_R\|^2$ when $g_R\ne0$, with $P_\perp=I$ otherwise. We fix $\lambda=6$ and $\gamma=16$ and recompute derivatives at the current policy.

\paragraph{Additional metrics.}

\subparagraph{Selective control.} We retain $p_G,p_H,p$, and $q$ from Appendix~\ref{app:gaussian_bandit}. We evaluate each method using the full update:
$$
\dot p_H=\nabla p_H^\top(g_R+u),
\qquad
\dot p_G=\nabla p_G^\top(g_R+u).
$$
For numerical classification, we require $\dot p_H<-\epsilon$ and $\dot p_G\geq-\epsilon$, with $\epsilon=10^{-8}$. In the coverage ablation, we also record
$(P_\perp\nabla p_H)^\top(P_\perp\nabla p_{H_A})$. A positive value means that the correction opposes growth of total hack probability.

\subparagraph{Projection error.} We construct the correction $\widehat u$ using an estimated reward gradient $\widehat g_R$ inside the projector. The verifier direction $g_R$ and hack gradient $\nabla p_H$ remain exact. We measure the resulting error in acceptance growth:
$$
\left|\dot p-\|g_R\|^2\right|
=
\left|g_R^\top\widehat u\right|
=
\left|(g_R-\widehat g_R)^\top\widehat u\right|.
$$

\subparagraph{Numerical ISS envelope.} For each seed, we estimate $D$ from the maximum of $(\bar s_H-\bar s_G)^\top g_R$ and zero along the trajectory. We estimate $\kappa$ from the minimum of $\|P_\perp\nabla p_H\|^2/p_H$. We refine the integration and increase the evaluation grid from $1{,}001$ to $2{,}001$ times. We insert the constants $D$ and $\kappa$ into Theorem~\ref{thm:selective_control} and compare its envelope with $q(t)$. These numerical estimates do not certify the bound between evaluated times.

\paragraph{Results}

\subparagraph{Experiment I: selective control.} Under $c=c_1$ and full audit coverage, we compare verifier flow, gradient regularization, raw audit correction, and projected audit
correction. We compute exact gradients and keep the features and initial policy fixed across methods. Both audit corrections use the same fixed gain $\lambda$. We plot correctness $p_G$ against hack probability $p_H$ and identify intervals of selective control, where $\dot p_H<0$ and $\dot p_G\geq0$.

Figure~\ref{fig:audit-appendix}~(a) shows that projected audit correction increases correctness $p_G$ and decreases hack probability $p_H$ throughout the recorded interval. Raw audit correction initially decreases both probabilities. Later, correctness increases while hack probability remains nearly constant. Both verifier flow and gradient regularization increase hack probability. These mean trajectories show that projection enables sustained hack reduction without the initial loss of correctness observed under raw audit correction.

\subparagraph{Experiment II: ISS bound.} We apply projected audit correction with full coverage and constant gain and compare $q(t)$ with the envelope in Theorem~\ref{thm:selective_control}. For each feature seed, we estimate $D$ and $\kappa$ along the computed trajectory, then check these estimates using a finer time grid and tighter integration tolerances. We construct the envelope separately for each seed and record its gap from $q(t)$. This comparison illustrates the bound numerically over the tested interval.

Figure~\ref{fig:audit-appendix}~(b) shows that the hacked share $q(t)$ decreases throughout the recorded interval. It equals the ISS envelope at initialization and remains below it afterward, providing a numerical illustration of Theorem~\ref{thm:selective_control} along the tested trajectories.

\subparagraph{Ablation I: audit coverage.} We vary the fraction of candidates audited in each accepted group and prompt over $\{0,1/8,1/4,1/2,3/4,1\}$. We use the known groups to construct nested audit sets and keep their candidate indices fixed across seeds and throughout training. The controller receives correctness labels only for audited responses. For each fraction, we apply projected audit correction with $\lambda=6$ until $T=1$, using the exact gradient of the audited hack probability $p_{H_A}$. We compare total hack probability $p_H(T)$ across initial coverage levels and record the alignment between the projected gradients of $p_H$ and $p_{H_A}$.

Figure~\ref{fig:audit-appendix}~(c) shows that increasing audit coverage lowers the final hack probability. Without audits, the correction vanishes and verifier flow increases $p_H$. With sufficient coverage, the correction reduces $p_H$ below its initial value. Although the correction uses only audited responses, we evaluate whether it reduces the total probability of hacks.

\subparagraph{Ablation II: projection error.} At $\theta=0$ and full audit coverage, we estimate the reward gradient using batches of $n\in\{32,128,512,2048\}$ independent prompts and responses. We average $R(x,y)\nabla_\theta\log\pi_\theta(y\mid x)$ within each batch and use this estimate only to construct the projection. The verifier direction $g_R$ and hack gradient $\nabla p_H$ remain exact, and the correction gain is $\lambda=6$.
Without advancing the policy, we measure $|\dot p-\|g_R\|^2|=|g_R^\top\widehat u|$. For each batch size and feature seed, we average this error over $1{,}000$ independent batches, then report the mean and sample standard deviation across ten feature seeds.

Figure~\ref{fig:audit-appendix}~(d) shows that larger batches reduce the mean error in preserving acceptance growth. The exact projection gives zero error. Estimating the projection introduces the discrepancy $\dot p-\|g_R\|^2=(g_R-\widehat g_R)^\top\widehat u$. Thus, more samples reduce the error in the acceptance growth rate. Panel~(a) checks whether the correction also reduces hack probability.

\paragraph{Takeaway.} Audits reveal which accepted responses are wrong, but suppressing those responses can also suppress correct ones because they share policy parameters. Projection preserves the instantaneous growth rate of acceptance. When the corrected update reduces hack probability, correctness must therefore increase. The ISS bound describes how sustained correction limits the hacked share despite pressure toward hacks from verifier training. In these experiments, broader audit coverage improves hack reduction, while larger sample batches reduce projection error. Selective control reduces hack probability without reducing correctness. Projection preserves the  instantaneous growth rate of acceptance, so any decrease in hack probability under the corrected flow must increase correctness.


\begin{figure*}[t]
\centering
\begingroup
\pgfplotsset{
  audit axis/.style={
    width=0.36\textwidth,
    height=0.24\textwidth,
    scale only axis,
    tick label style={font=\small},
    xlabel style={font=\small},
    ylabel style={font=\small},
    title style={font=\small},
    legend style={
      font=\small, draw=none, fill=none,
      cells={anchor=west}
    },
    legend cell align=left,
    enlargelimits=false
  }
}

\begin{tikzpicture}
\begin{groupplot}[
  audit axis,
  group style={
    group size=2 by 2,
    horizontal sep=0.14\textwidth,
    vertical sep=1.5cm
  }
]

\nextgroupplot[
  title={(a)},
  xmin=0, xmax=0.65,
  ymin=0, ymax=1,
  xlabel={$p_H$},
  ylabel={$p_G$},
  legend style={at={(0.97,0.97)},anchor=north east}
]
\addplot[draw=none,fill=black,fill opacity=0.18,forget plot]
  table[x=x,y=y] {bandit/phase_rlvr_band.dat} \closedcycle;
\addplot[draw=none,fill=orange,fill opacity=0.22,forget plot]
  table[x=x,y=y] {bandit/phase_gr_band.dat} \closedcycle;
\addplot[draw=none,fill=green!60!black,fill opacity=0.22,forget plot]
  table[x=x,y=y] {bandit/phase_raw_band.dat} \closedcycle;
\addplot[draw=none,fill=blue,fill opacity=0.22,forget plot]
  table[x=x,y=y] {bandit/phase_projected_band.dat} \closedcycle;

\addplot[black,line width=1.3pt,
  postaction={decorate},
  decoration={markings,mark=at position 0.55 with {\arrow{stealth}}}]
  table[x=p_H_mean,y=p_G_mean] {bandit/phase_rlvr.dat};
\addlegendentry{RLVR}

\addplot[orange,dotted,line width=1.3pt,
  postaction={decorate},
  decoration={markings,mark=at position 0.55 with {\arrow{stealth}}}]
  table[x=p_H_mean,y=p_G_mean] {bandit/phase_gr.dat};
\addlegendentry{Grad. reg.}

\addplot[green!60!black,dashed,line width=1.3pt,
  postaction={decorate},
  decoration={markings,mark=at position 0.55 with {\arrow{stealth}}}]
  table[x=p_H_mean,y=p_G_mean] {bandit/phase_raw.dat};
\addlegendentry{Raw audit}

\addplot[blue,line width=1.3pt,
  postaction={decorate},
  decoration={markings,mark=at position 0.55 with {\arrow{stealth}}}]
  table[x=p_H_mean,y=p_G_mean] {bandit/phase_projected.dat};
\addlegendentry{Projected}

\nextgroupplot[
  title={(b)},
  xmin=0, xmax=1,
  ymin=0, ymax=0.55,
  xlabel={Training time},
  ylabel={$q$},
  legend style={at={(0.97,0.97)},anchor=north east}
]
\addplot[draw=none,fill=blue,fill opacity=0.24,forget plot]
  table[x=x,y=y] {bandit/q_band.dat} \closedcycle;
\addplot[draw=none,fill=black,fill opacity=0.14,forget plot]
  table[x=x,y=y] {bandit/envelope_band.dat} \closedcycle;

\addplot[blue,line width=1.3pt]
  table[x=t,y=q_mean] {bandit/iss.dat};
\addlegendentry{Hacked share}

\addplot[black,dashed,line width=1.3pt]
  table[x=t,y=envelope_mean] {bandit/iss.dat};
\addlegendentry{Envelope}

\nextgroupplot[
  title={(c)},
  xmin=0, xmax=1,
  ymin=0, ymax=0.6,
  xlabel={Initial audit coverage},
  ylabel={$p_H(T)$},
  legend style={at={(0.97,0.97)},anchor=north east}
]
\addplot[draw=none,fill=blue,fill opacity=0.3,forget plot]
  table[x=x,y=y] {bandit/coverage_band.dat} \closedcycle;

\addplot[gray,dashed,line width=0.9pt]
  coordinates {(0,0.333333) (1,0.333333)};
\addlegendentry{$p_H(0)$}

\addplot[blue,line width=1.3pt,mark=*,mark size=1.5pt]
  table[x=coverage,y=p_H_final_mean]
  {bandit/coverage.dat};
\addlegendentry{Audited control}

\nextgroupplot[
  title={(d)},
  xmode=log,
  log basis x=2,
  xmin=32, xmax=2048,
  ymin=-0.02,
  xtick={32,128,512,2048},
  xlabel={Samples for $\widehat g_R$},
  ylabel={$|\dot p-\|g_R\|^2|$},
  legend style={at={(0.97,0.97)},anchor=north east}
]
\addplot[draw=none,fill=blue,fill opacity=0.3,forget plot]
  table[x=x,y=y]
  {bandit/projection_error_band.dat} \closedcycle;

\addplot[blue,line width=1.3pt,mark=*,mark size=1.7pt]
  table[x=batch_size,y=abs_error_mean]
  {bandit/projection_error.dat};
\addlegendentry{Estimated projection}

\addplot[black,dashed,line width=0.9pt]
  coordinates {(32,0) (2048,0)};
\addlegendentry{Exact-gradient reference}

\end{groupplot}
\end{tikzpicture}
\endgroup
\caption{Selective control with correctness feedback in the Gaussian bandit. (a) Correctness $p_G$ versus hack probability $p_H$; arrows indicate
training direction. (b) Hacked share $q(t)$ and the numerical ISS envelope. (c) Final hack probability versus the initial fraction of hacks audited; the dashed line marks the initial hack probability. (d) Error $|\dot p-\|g_R\|^2|$ when estimating the reward gradient used for projection. Lines show means across ten feature seeds; shading shows $\pm$ one sample standard deviation. The envelope in (b) provides a numerical check and does not certify the bound between evaluated times.
}
\label{fig:audit-appendix}
\end{figure*}

\paragraph{Hyperparameters.}
Table~\ref{tab:selective-hyperparameters} lists only added or changed settings for this experiment. All other applicable settings follow Table~\ref{tab:hacking-hyperparameters}. 

\begin{table}[h!]
\caption{Additional and changed settings for selective control with correctness under $c_1$. Other model settings follow Table~\ref{tab:hacking-hyperparameters}.}
\label{tab:selective-hyperparameters}
\centering
\begin{tabular}{ll}
\hline
\textbf{Parameter} & \textbf{Value} \\
\hline
Accepted-group feature means
  & $(4,-1,0,0)$, $(4,1,0,0)$ \\
Rejected-group second-coordinate mean & $-0.5$ \\
Initial hacked share $q(0)$ & $1/2$ \\
Initial acceptance $p(0)$ & $2/3$ \\
Initial parameters; warmup & $\theta(0)=0$; none \\
Independent feature seeds & $10$ (seeds $0,\ldots,9$) \\
Audit correction gain $\lambda$ & $6$ \\
Gradient-regularization weight $\gamma$ & $16$ \\
Audit labels & Exact \\
\hline
Initial audited fraction $p_{H_A}(0)/p_H(0)$
  & $\{0,\,1/8,\,1/4,\,1/2,\,3/4,\,1\}$ \\
Projection batch sizes $n$ & $\{32,\,128,\,512,\,2048\}$ \\
Policy for projection-error ablation & $\theta=0$ \\
\hline
Flow integrator & DOP853 \\
Horizon, control comparison & $T=10$ \\
Horizon, coverage and ISS & $T=1$ \\
Tolerance for selectivity $\epsilon$ & $10^{-8}$ \\
\hline
\end{tabular}
\end{table}
\newpage 

\subsection{The Neural Contextual Bandit Experiment}
\label{app:neural_bandit}

\paragraph{Motivation.} We replace the log-linear policy in Appendix~\ref{app:gaussian_bandit} with a neural policy to examine reward hacking (Section~\ref{sec:hacking}) and the limits of verifier feedback (Section~\ref{sec:verifier_feedback}) when the policy learns its representation. We also examine how audit coverage and projection accuracy affect projected audit correction (Section~\ref{sec:selective_control}).

\paragraph{Experimental setting.} We use a shared MLP $f_\theta$ with four inputs, one hidden layer of 16 tanh units, and a scalar output. We train all weights and hidden biases and omit the output bias. The policy is
$$
\pi_\theta(y\mid x)\propto
\exp\!\left(
a(x,y)+f_\theta(\phi(x,y))
-f_{\theta(0)}(\phi(x,y))
\right).
$$
We freeze the subtracted network, so the offsets $a(x,y)$ (see Appendix~\ref{app:gaussian_bandit}) determine the initial group probabilities.

\subparagraph{Data.}  We use eight equally weighted prompts, 16 responses per group, and four-dimensional Gaussian features with noise standard deviation $0.25$. We follow the centering and reflection construction in Appendix~\ref{app:gaussian_bandit}, fixing the rejected group mean to $(0,-0.5,0,0)^\top$. The shared first-coordinate mean of the accepted groups is $1$ in Experiments~I--II and $4$ in the ablations. Within each experiment, comparisons share the Gaussian draws and network initialization for each seed. The network receives response features without group labels or a separate prompt embedding.

Under the correctness assignment $c_1$, the groups $G_x,H_x,N_x$ have labels $(R,c_1)=(1,1),(1,0),(0,0)$, respectively. Experiment~II also evaluates $c=R$ and $c_2=R-c_1$ while keeping these groups and the training trajectory fixed.

\subparagraph{RLVR training.} Experiments~I--II follow verifier flow~\eqref{eq:verifier_flow}, $\dot\theta=g_R$, where $g_R=\nabla_\theta J_R$ and $J_R=p$. We compute probabilities and gradients by summing over all prompts and responses. We integrate the flow using DOP853, recomputing the gradients at each policy. The coverage ablation uses the same integrator for the corrected flow. Table~\ref{tab:neural-bandit-hyperparameters} lists other numerical settings.

\subparagraph{Audits.} The coverage ablation uses fixed, nested audit sets $A_\mathrm{aud}$ under $c_1$. Each audit reveals the correctness of an accepted response. With $H_A=A_\mathrm{aud}\cap H$, we apply
$$
\dot\theta=g_R-\lambda P_\perp\nabla_\theta p_{H_A}.
$$
Both gradients are exact. The projector $P_\perp$ acts in the space of neural parameters, orthogonally to $g_R$, and equals $I$ when $g_R=0$.

The projection ablation evaluates corrections at the initial policy without advancing it. We estimate $g_R$ from independent draws $x\sim\mathcal D$ and $y\sim\pi_\theta(\cdot\mid x)$ by averaging $R(x,y)\nabla_\theta\log\pi_\theta(y\mid x)$. Only the projector uses this estimate. The verifier direction $g_R$ and the full hack gradient $\nabla_\theta p_H$ remain exact.

\subparagraph{Metrics.} We report acceptance $p$, correctness $p_G$, hack probability $p_H$, and hacked share $q=p_H/p$. We average group probabilities over prompts before forming this ratio. Experiment~I compares $J_C=p_G$ with $J_R=p$ along training. In Experiment~II, the subscript $c$ identifies the correctness assignment defining $p_{G,c}$ and $p_{H,c}$. The coverage ablation reports $p_H(T)$ against initial coverage $p_{H_A}(0)/p_H(0)$. The projection ablation reports the deviation $|\dot p-\|g_R\|^2|$ from the acceptance growth rate preserved by the exact projector.

We report means and one sample standard deviation across ten independent seeds. For projection error, we first average absolute deviations over independent batches within each seed.

\paragraph{Results.}

\subparagraph{Experiment I: reward hacking.} We study whether increasing acceptance can amplify hacks and reduce correctness with a neural policy. Under verifier flow, we vary the hack share $q(0)\in\{0.1,0.3,0.5\}$ while fixing initial acceptance $p(0)=2/3$ and the response features. For $q(0)=0.3$, we track acceptance $p$, hacked share $q$, and correctness $p_G$ over time. We also compare the proxy objective $J_R=p$ with the true objective $J_C=p_G$ to examine reward hacking as defined by~\citep{NEURIPS2022_3d719fee}. RLVR training supplies the policies required by this definition: two times $t_1<t_2$ exhibit reward hacking when $J_R(\theta(t_2))>J_R(\theta(t_1))$ but $J_C(\theta(t_2))<J_C(\theta(t_1))$. Thus, the comparison reveals whether improving verifier acceptance comes at the expense of correctness.

Figure~\ref{fig:neural-bandit-appendix} (a) shows that acceptance $p$ and hacked share $q$ increase together from $q(0)=0.3$. As training progresses, correctness $p_G=p(1-q)$ eventually decreases despite increasing acceptance. The shift toward hacks then outweighs the gain from accepting more responses (Section~\ref{subsec:acceptance_hack_together}.) 

Figure~\ref{fig:neural-bandit-appendix} (b) shows how the proxy objective $J_R=p$ and the true objective $J_C=p_G$ change along training. The mean curves show reward hacking throughout the recorded interval for $q(0)=0.5$: acceptance increases while correctness decreases. For $q(0)=0.3$, acceptance and correctness initially increase together. Reward hacking emerges later, when correctness decreases despite further gains in acceptance. For $q(0)=0.1$, both quantities increase throughout the recorded interval, with no reward hacking evident in the mean curve. Hacks are present at initialization in all three cases, but reward hacking occurs when improving acceptance reduces correctness.

\subparagraph{Experiment II: compatible correctness assignments.}
We examine whether the same neural training record can support different conclusions about correctness. Starting from $q(0)=0.3$, we evaluate each verifier trajectory under the correctness assignments: $c=R$, $c_1=\mathbf{1}_G$, and $c_2=R-c_1$. The features, offsets, initialization, and verifier are identical across these assignments. Their correctness probabilities are $p,p_G,p_H$, respectively, and their hack probabilities are $0,p_H,p_G$. Thus, comparing $c=R$ with $c_1$ tests whether the record reveals the presence of hacks, while comparing $c_1$ with $c_2$ tests whether it identifies which accepted responses are wrong (Section~\ref{sec:verifier_feedback}).

Figure~\ref{fig:neural-bandit-appendix} (c) shows identical acceptance~$p$ under three compatible correctness assignments. Correctness increases under $c=R$ and $c_2=R-c_1$, whereas it eventually decreases under $c_1$. All three rules share the same training record $L_t$, so verifier observations cannot determine which correctness trajectory applies. This illustrates the limits of detection and identification in Propositions~\ref{prop:verifier_information_limit} and~\ref{prop:verifier_identification_limit}. Because $c_1$ and $c_2$ exchange correct responses and hacks, the comparison also illustrates why these observations cannot guarantee selective control under every compatible assignment (Proposition~\ref{prop:verifier_control_limit}).

\subparagraph{Ablation I: audit coverage.} We examine how much of the hack population a fixed \emph{audit} set must expose for the correction to reduce total hack probability. We audit fractions $\{0,1/8,1/4,1/2,3/4,1\}$ of the candidates in each accepted group and prompt. The sets are nested, with candidate indices fixed across seeds and throughout training. The evaluator's groups serve only to control coverage. Each condition uses the exact gradient $\nabla p_{H_A}$ and the same correction gain, without inverse probability weighting. We compare $p_H(T)$  across initial coverage levels and track how the audited and full hack gradients align. Zero coverage recovers verifier flow, while full coverage recovers the ideal projected correction. Partial coverage can also oppose hacking when the projected gradients align positively, as described in Appendix~\ref{app:practical_considerations}.

Figure~\ref{fig:neural-bandit-appendix}~(d) shows that the final hack probability $p_H(T)$ remains large when the initial audited fraction $p_{H_A}(0)/p_H(0)$ is small. Increasing this fraction reduces $p_H(T)$ to near zero in the tested setting. The correction uses only the audited hacks $H_A=A_\mathrm{aud}\cap H$, while the plot measures the full hack population $H$. Thus, the comparison shows how expanding audit coverage improves suppression beyond the audited subset (see Appendix~\ref{app:practical_considerations}).

\subparagraph{Ablation II: projection error.} We test how estimating the reward gradient affects the projection's preservation of acceptance growth. At the initial policy, we draw independent prompts $x\sim\mathcal D$ and responses $y\sim\pi_\theta(\cdot\mid x)$. We estimate $g_R$ by averaging $R(x,y)\nabla_\theta\log\pi_\theta(y\mid x)$ and use this estimate to construct the projection.  The verifier direction $g_R$ and hack gradient $\nabla p_H$
remain exact, isolating the effect of estimating the projector. Without advancing the policy, we measure $|\dot p-\|g_R\|^2|=|g_R^\top\widehat u|$. For each batch size, we average absolute discrepancies over independent batches within each seed, then report their mean and standard deviation across seeds.

Figure~\ref{fig:neural-bandit-appendix} (e) shows that larger sample batches reduce the deviation $|\dot p-\|g_R\|^2|$ caused by estimating the projection. Larger batches improve preservation of the acceptance growth rate, as predicted by $\dot p-\|g_R\|^2=(g_R-\widehat g_R)^\top\widehat u$. Selective control additionally requires the correction to overcome the growth of hacks, as demonstrated in Figure~\ref{fig:bandit-growth-four-panel} (d). 

\begin{figure*}[t]
\centering
\begingroup
\def\NeuralDataDir{neural_bandit}
\def\NeuralLineWidth{1.6pt}
\def\NeuralDeclineWidth{2.34pt}

\pgfplotsset{
  neural appendix axis/.style={
    width=0.36\textwidth,
    height=0.24\textwidth,
    scale only axis,
    tick label style={font=\small},
    xlabel style={font=\small},
    ylabel style={font=\small},
    title style={font=\small},
    legend cell align=left,
    legend style={
      at={(0.97,0.97)},anchor=north east,
      font=\small,draw=none,fill=none,
      cells={anchor=west}
    },
    enlargelimits=false
  },
  neural mean/.style={line width=\NeuralLineWidth,no marks},
  neural arrow/.style={
    postaction={decorate},
    decoration={markings,
      mark=at position 0.55 with {\arrow{stealth}}}
  }
}

\newcommand{\NeuralBand}[2]{%
  \addplot[draw=none,fill=#1,fill opacity=0.22,forget plot]
    table[x=x,y=y] {\NeuralDataDir/#2_band.dat} \closedcycle;
}
\newcommand{\NeuralCurve}[3]{%
  \addplot[neural mean,#1]
    table[x=x,y=y] {\NeuralDataDir/#2.dat};
  \addlegendentry{#3}
}
\newcommand{\NeuralDecline}[2]{%
  \addplot[#1,line width=\NeuralDeclineWidth,forget plot,
    unbounded coords=jump]
    table[x=x,y=y] {\NeuralDataDir/#2_declining.dat};
}

\begin{tikzpicture}
\begin{groupplot}[
  neural appendix axis,
  group style={group size=2 by 3,
    horizontal sep=0.14\textwidth,
    vertical sep=1.5cm}
]

\nextgroupplot[
  title={(a)},
  xlabel={Training time},ylabel={Probability / share},
  xmin=0,ymin=0,ymax=1,ytick={0,0.5,1},
  legend style={at={(0.97,0.50)},anchor=east}
]
\NeuralBand{blue}{growth_A_00}
\NeuralBand{orange}{growth_A_01}
\NeuralBand{black}{growth_A_02}
\NeuralCurve{blue}{growth_A_00}{$p$}
\NeuralCurve{orange,dashed}{growth_A_01}{$q$}
\NeuralCurve{black,dotted}{growth_A_02}{$p_G$}

\nextgroupplot[
  title={(b)},
  xlabel={$J_R=p$},ylabel={$J_C=p_G$},
  xmin=0.66,xmax=1,ymin=0,ymax=1,
  xtick={0.7,0.85,1},ytick={0,0.5,1},
  legend style={at={(0.03,0.97)},anchor=north west}
]
\NeuralBand{gray}{growth_B_00}
\NeuralBand{orange}{growth_B_01}
\NeuralBand{black}{growth_B_02}
\NeuralCurve{gray,neural arrow}{growth_B_00}{$q(0)=0.1$}
\NeuralCurve{orange,neural arrow}{growth_B_01}{$q(0)=0.3$}
\NeuralCurve{black,neural arrow}{growth_B_02}{$q(0)=0.5$}
\NeuralDecline{gray}{growth_B_00}
\NeuralDecline{orange}{growth_B_01}
\NeuralDecline{black}{growth_B_02}

\nextgroupplot[
  title={(c)},
  xlabel={Training time},ylabel={Acceptance / correctness},
  xmin=0,ymin=0,ymax=1,ytick={0,0.5,1},
  legend style={at={(0.97,0.55)},anchor=east}
]
\NeuralBand{blue}{information_C_00}
\NeuralBand{orange}{information_C_02}
\NeuralBand{green!60!black}{information_C_03}
\NeuralCurve{blue}{information_C_00}{$p$}
\NeuralCurve{orange,dashed}{information_C_02}{$p_{G,c_1}$}
\NeuralCurve{green!60!black,dotted}{information_C_03}{$p_{G,c_2}$}
\addplot[black,only marks,mark=square,mark size=1.6pt,
  mark options={fill=none}]
  table[x=x,y=y] {\NeuralDataDir/information_C_01.dat};
\addlegendentry{$p_{G,c=R}=p$}

\nextgroupplot[
  title={(d)},
  xlabel={Initial audited fraction},ylabel={$p_H(T)$},
  xmin=0,xmax=1,ymin=0,ymax=0.6,
  xtick={0,0.25,0.5,0.75,1}
]
\NeuralBand{blue}{audit_coverage_00}
\NeuralCurve{black,dashed}{audit_coverage_01}{Initial $p_H(0)$}
\NeuralCurve{blue,mark=*,mark size=1.5pt}{audit_coverage_00}{Audited control}

\nextgroupplot[
  title={(e)},
  xlabel={Samples for $\widehat g_R$},
  ylabel={$|\dot p-\|g_R\|^2|$},
  xmode=log,log basis x=2,
  xmin=32,xmax=2048,
  xtick={32,128,512,2048},
  xticklabels={32,128,512,2048},
  ymin=-0.02
]
\NeuralBand{blue}{projection_error_00}
\NeuralCurve{blue,mark=*,mark size=1.7pt}{projection_error_00}{Estimated projection}
\NeuralCurve{black,dashed}{projection_error_01}{Exact-gradient reference}

\nextgroupplot[group/empty plot]
\end{groupplot}
\end{tikzpicture}
\endgroup
\caption{Neural contextual bandit experiments.
(a)~Acceptance $p$ and hacked share $q$ increase together from~$q(0)=0.3$, while correctness~$p_G$ eventually decreases. (b)~Correctness $J_C=p_G$ versus acceptance $J_R=p$ for $q(0)\in\{0.1,0.3,0.5\}$. Increasing acceptance accompanied by decreasing correctness exhibits reward hacking. (c)~The same training record yields different correctness trajectories under $c=R$, $c_1$, and $c_2=R-c_1$. Square markers show correctness under $c=R$, which equals acceptance. (d)~Final hack probability $p_H(T)$ versus the initial audited fraction $p_{H_A}(0)/p_H(0)$ for fixed audit sets. Greater coverage reduces the probability of hacks. (e)~Deviation $|\dot p-\|g_R\|^2|$ versus the number of samples used to estimate the gradient defining the projector. The verifier direction and audit gradient remain exact. Larger batches reduce the deviation. The exact projector preserves $\dot p=\|g_R\|^2$. Curves show means across ten seeds; shading shows one standard deviation. In (b), vertical bands measure variability in correctness at matched training times.}
\label{fig:neural-bandit-appendix}
\end{figure*}

\paragraph{Takeaway.} Rising verifier reward can conceal declining performance on the intended task, even for a neural policy that learns its own representation. The decline stays hidden because the information available during training cannot separate correct responses from accepted errors. Audits supply this missing information. An intervention based on audits works only as well as their coverage, corrective strength, and projection accuracy allow.

\paragraph{Hyperparameters.}
Table~\ref{tab:neural-bandit-hyperparameters} lists the settings. Data construction follows Appendix~\ref{app:gaussian_bandit}. All studies use ten seeds, each determining the Gaussian features and an independent network initialization. The flows start without warmup and use DOP853. 

\begin{table}[h!]
\caption{Settings for the neural bandit experiments and ablations.}
\label{tab:neural-bandit-hyperparameters}
\begin{center}
\begin{tabular}{ll}
\multicolumn{1}{c}{\bf Parameter} &
\multicolumn{1}{c}{\bf Value}\\
\hline \\[-1ex]
Network widths; activation
  & $4$--$16$--$1$; tanh \\
Input / output weight distributions
  & $\mathcal N(0,1/4)$ / $\mathcal N(0,1/16)$ \\
Initial hidden biases; output bias
  & Zero; omitted \\
Independent seeds
  & $10$ ($0,\ldots,9$) \\
Prompts; responses per group
  & $8$; $16$ \\
Gaussian feature standard deviation
  & $0.25$ \\
Initial acceptance $p(0)$
  & $2/3$ \\
Initial hacked share, Experiment I
  & $\{0.1,0.3,0.5\}$ \\
Initial hacked share, time plot and Experiment II
  & $0.3$ \\
Initial hacked share, both ablations
  & $1/2$ \\
Accepted first-coordinate mean, I--II / ablations
  & $1$ / $4$ \\
Rejected feature mean $\mu_N$
  & $(0,-0.5,0,0)^\top$ \\
\hline \\[-1ex]
Correction gain $\lambda$
  & $6$ \\
Fractions of candidates audited
  & $\{0,1/8,1/4,1/2,3/4,1\}$ \\
Seed for audit sets
  & $2026$ \\
Audit labels
  & Exact \\
Batch sizes for estimating $g_R$
  & $\{32,128,512,2048\}$ \\
Independent batches per size and seed
  & $1{,}000$ \\
Sampling seed stream
  & $(2027,\text{feature seed})$ \\
\hline \\[-1ex]
Integrator
  & DOP853 \\
Horizon, Experiments I--II / coverage
  & $50$ / $1$ \\
Recording interval, Experiments I--II / coverage
  & $0.1$ / $0.01$ \\
\hline
\end{tabular}
\end{center}
\end{table}
\newpage 

\subsection{The Language Model Experiment} \label{app:llm_experiment}

\paragraph{Motivation.} To test whether reward hacking and audit correction behave as our theory predicts beyond exact gradient flow, we train an autoregressive language model with sampled gradients and finite updates. Because the task below provides correctness labels, we can separate gains in verifier reward from gains in correctness.

\paragraph{Setup.}

\subparagraph{Task.} Each prompt $x$ specifies a rule table $r:\{1,2\}\to\{1,2\}^2$ and six input digits $u_1,\ldots,u_6$. The model replaces each digit with its image under $r$ and concatenates the results:
$$
t(x)=r(u_1)\cdots r(u_6)\in\{1,2\}^{12}.
$$
The prompt asks for each replacement and then for the final answer $t(x)$. Each replacement depends only on its own input digit, so no state carries across positions.

\subparagraph{Correctness and verifier labels.} An output is valid if it contains exactly one \texttt{Final answer:} field, on its last nonempty line, with 12 digits from $\{1,2\}$. For a valid response $y$ with parsed answer $\widehat t$, the correctness and verifier labels are
$$
c(x,y)=\mathbf1\{\widehat t=t(x)\},\qquad
R(x,y)=\mathbf1\{\widehat t_{11:12}=t(x)_{11:12}\}.
$$
Both labels are zero for invalid outputs. Correctness checks every replacement in the final answer, while the imperfect verifier checks only the last pair, and neither checks the intermediate replacements. Because $\widehat t=t(x)$ implies that the last pairs match, $c\le R$: the verifier has no false negatives for this definition of correctness. The labels therefore partition responses into three groups:
$$
G_x=\{y:c=1\},\qquad H_x=\{y:R=1,c=0\},\qquad N_x=\{y:R=0\}.
$$

\subparagraph{Hint and demonstrations.} Every prompt contains a hint with the correct final pair and an incorrect prefix. We draw the prefix uniformly from the $2^{10}-1$ incorrect prefixes and keep it fixed for that prompt. Because the final pair is correct, copying the hint in the required format produces a \emph{hack}. Supervised demonstrations cover all three groups: they give the correct solution ($G$), copy the hint ($H$), or change the last digit of the correct answer ($N$). Rejected demonstrations keep valid formatting, so their rejection comes from the wrong final pair rather than from a formatting failure.

\subparagraph{Policy.} We use \texttt{Qwen/Qwen2-0.5B} with LoRA. Training updates only the adapter parameters $\theta$ and keeps the base model fixed. All methods train this single policy without KL regularization.

\subparagraph{Initialization.} We first train on correct solutions with supervised fine-tuning (SFT) and then add demonstrations that copy the hint. From this common checkpoint, we run 80 further SFT updates with demonstrations from $G$, $H$, and $N$. We sample these groups with probabilities $(0.4,0.4,0.2)$ or $(0.3,0.3,0.4)$ and name the two mixtures N20 and N40 after their share of rejected demonstrations. These probabilities describe the demonstration data, not the group probabilities of the resulting policy. For each mixture, all methods start from the same SFT checkpoint, and we run reinforcement learning with five seeds.

\subparagraph{Data split.} The 16 rule tables and 64 inputs give 1,024 tasks. We assign 608 tasks to training, 208 to calibration, and 208 to testing, so that no pair of rule table and input appears in more than one partition. Rule tables recur across partitions, so testing evaluates unseen combinations of seen rules and inputs.

\subparagraph{Training objective.} The verifier objective is
$J_R(\theta)=\mathbb E_{x\sim D_{\mathrm{train}},\,y\sim\pi_\theta(\cdot\mid x)}[R(x,y)]$,
where $D_{\mathrm{train}}$ is uniform over training tasks. Because $R$ checks only the last pair, correct responses and hacks receive the same reward. Correctness labels enter training only through the audits in PAC.

\subparagraph{GRPO updates.} Each round samples $K=2$ training prompts and $B=8$ responses per prompt (group). For response $y_{ji}$ to prompt $x_j$, the score $s_{ji}=\nabla_\theta\log\pi_\theta(y_{ji}\mid x_j)$ sums the token scores, including the end token when emitted. With rewards $R_{ji}=R(x_j,y_{ji})$, each group normalizes its advantages by the standard deviation of its binary rewards:
$$
\bar R_j=\frac1B\sum_i R_{ji},\qquad
a_{ji}=\frac{R_{ji}-\bar R_j}
{\sqrt{\bar R_j(1-\bar R_j)}+10^{-4}}.
$$
Treating the advantages as constants, we estimate the gradient by
$$
\widehat v_{\mathrm{GRPO}}=\frac1{KBM}\sum_{j=1}^K\sum_{i=1}^B a_{ji}s_{ji},
\qquad M=192,
$$
where the normalizer $M$ is a constant, independent of response length. Each fresh batch supplies at most one AdamW step. A group whose rewards are all equal has zero advantages, and a batch in which every advantage is zero skips its step. Skipped steps still count toward the 20 rounds of every run.

\paragraph{Metrics.}

\subparagraph{Acceptance and correctness.} We report correctness, acceptance, and the hacked share:
$$
J_C=p_G,\qquad J_R=p=p_G+p_H,\qquad q=p_H/p,
$$
where $p=p_G+p_H$ because the verifier has no false negatives. For $n$ sampled responses, we estimate $p_G=n_G/n$, $p_H=n_H/n$, and $p=(n_G+n_H)/n$. We estimate $q=n_H/(n_G+n_H)$ from counts pooled across prompts and leave it undefined when no response is accepted. A rising $q$ indicates growth of hacking, and a rising $J_R$ with a falling $J_C$ is reward hacking in the sense of Proposition~\ref{prop:reward_hacking_flow}.

\subparagraph{Evaluation and variation across seeds.} At rounds 0, 5, 10, and 20, we evaluate the policy on eight fixed calibration prompts with four responses each. Final evaluations use 32 fixed test prompts, also with four responses each, so calibration trajectories and final test results come from different prompts. We sample at temperature $1$ from the full token distribution and record separately the responses with format errors and those that reach the generation limit. Figure~\ref{fig:llm-appendix} shows means over five training seeds. Table~\ref{tab:llm_appendix_outcomes} reports means $\pm$ one sample standard deviation across these five seeds.

\subsubsection{When Hacking Grows} \label{app:llm_growth}
\paragraph{Controlled comparisons.} We train GRPO from both SFT checkpoints with the same settings, so each run samples 320 responses: 20 rounds of $K=2$ prompts with $B=8$ responses each. We evaluate the final checkpoint of each run.

\paragraph{Results.}

\subparagraph{Experiment I: reward hacking under GRPO.} In N40, from the SFT checkpoint to the final round, mean test acceptance rises from $69.5\%$ to $99.4\%$, while correctness falls from $30.5\%$ to $2.2\%$ and the hacked share rises from $56.2\%$ to $97.8\%$. All five seeds show the same three changes. Rising acceptance with falling correctness is reward hacking in the sense of Proposition~\ref{prop:reward_hacking_flow}, even though sampled updates with AdamW do not satisfy its assumption of exact gradient flow. Table~\ref{tab:llm_appendix_outcomes} summarizes the final test outcomes.

\subparagraph{Ablation I: SFT mixture.} Figure~\ref{fig:llm-appendix}~(a,b) shows the calibration trajectories for N20. The N20 mixture produces the same pattern as N40: acceptance rises from $80.5\%$ to $98.8\%$, correctness falls from $31.2\%$ to $0.5\%$, and the hacked share rises from $61.2\%$ to $99.5\%$. Under GRPO, both SFT checkpoints therefore lead to growth of hacking and to reward hacking. Because changing the SFT mixture also changes the learned gradient geometry, this comparison does not isolate the effect of the initial composition.

\paragraph{Takeaway.} For both SFT mixtures, GRPO increases verifier reward while reducing correctness, so reward hacking persists beyond exact gradient flow.

\subsubsection{Selective Control with Correctness Feedback}
\label{app:llm_control}

\paragraph{Additional experimental settings.}

\subparagraph{Audits and gradient estimates.} We audit each accepted response independently with probability $\rho\in\{1,0.5,0.25\}$. The indicator $Z_{ji}\sim\operatorname{Bernoulli}(\rho)$ marks whether response $i$ to prompt $j$ is audited, and reweighting by $1/\rho$ gives an unbiased estimate of whether the response is a hack:
$$
\widetilde H_{ji}=\frac{Z_{ji}R_{ji}(1-c_{ji})}{\rho}.
$$
This estimate is zero for unaudited and rejected responses, so it requires correctness labels only for audited accepted responses. Let $s_{ji}=\nabla_\theta\log\pi_\theta(y_{ji}\mid x_j)$ be the score of response $i$ to prompt $j$. With baselines $\bar R_{j,-i}$ and $\overline{\widetilde H}_{j,-i}$ that average the other $B-1$ responses to prompt $j$, we use the same $KB=16$ responses as GRPO to estimate
$$
\begin{aligned}
\widehat g&=\frac1{KB}\sum_{j,i}(R_{ji}-\bar R_{j,-i})s_{ji},\\
\widehat h&=\frac1{KB}\sum_{j,i}
(\widetilde H_{ji}-\overline{\widetilde H}_{j,-i})s_{ji}.
\end{aligned}
$$
At a fixed policy, $\widehat g$ and $\widehat h$ are unbiased estimates of $g=\nabla p$ and $h=\nabla p_H$, so $\widehat g-\widehat h$ estimates $\nabla p_G$. The estimate $\widehat g$ uses the rewards of all responses, while $\widehat h$ uses only the correctness labels revealed by audits. The exact labels used for reporting never enter training.

\subparagraph{Corrections.} Both corrections subtract a direction $v$ from the GRPO step. Raw audit correction uses $v_{\mathrm{raw}}=\widehat h$, the estimated gradient of $p_H$. PAC first removes the component of $\widehat h$ along the estimated acceptance gradient, so that the correction leaves acceptance unchanged to first order:
$$
v_{\mathrm{PAC}}=\widehat h-
\frac{\widehat g^\top\widehat h}{\|\widehat g\|^2}\widehat g.
$$
The projection uses the Euclidean inner product on adapter parameters, and when $\widehat g=0$, PAC reduces to raw audit correction. We compute both estimates before the AdamW step. 

We normalize both correction directions and scale them using the same rule based on the optimizer step. With $d$ the actual AdamW displacement, $m$ the number of adapter parameters, and $\eta=10^{-5}$, each method applies
$$
b=\max\{\|d\|,0.25\eta\sqrt m\},\qquad
\Delta\theta=d-b\frac{v}{\|v\|}.
$$
For a nonzero direction, the effective gain $\lambda=b/\|v\|$ varies across updates. We skip the correction when $\|v\|$ is numerically zero. Because $b\ge 0.25\eta\sqrt m$, a nonzero correction can still act when every GRPO advantage is zero and AdamW skips its step. Both methods use the same number of sampled responses, but their audit counts can differ because their acceptance differs.

\paragraph{Results.}
\subparagraph{Experiment I: selective correction.} Table~\ref{tab:llm_appendix_outcomes} reports test correctness $p_G$ and the probability $p_H$ of hacks at the SFT checkpoint and after PAC with full auditing, averaged over five seeds for each SFT mixture. In all ten runs, PAC increases correctness and decreases hacks. In N40, for example, mean correctness rises from $30.5\%$ to $97.2\%$, and mean $p_H$ falls from $39.1\%$ to $1.7\%$. These are the directions that Theorem~\ref{thm:selective_control} predicts, even though PAC uses sampled gradients and finite updates.

\subparagraph{Ablation I: removing projection.} Figure~\ref{fig:llm-appendix}(a)--(c) compares the calibration trajectories of both corrections and GRPO for N20 and N40. In N40, raw audit correction also reaches high final correctness: $98.4\%$ on test prompts, compared with $97.2\%$ for PAC (Figure~\ref{fig:llm-appendix}(f), $\rho=1$). PAC's advantage appears earlier in training. In N40, mean calibration correctness at rounds 5 and 10 is $85.0\%$ and $98.1\%$ for PAC, compared with $65.6\%$ and $90.0\%$ for raw audit correction. Under raw audit correction, acceptance falls below its initial value at round 5 in four of five seeds and later recovers, while correctness shows no such decline at the recorded rounds. PAC therefore reaches high correctness sooner with the same number of sampled responses. It does not end with higher correctness, and because the two methods audit different numbers of responses, this comparison does not hold audit cost fixed.

\subparagraph{Ablation II: audit coverage.} Figure~\ref{fig:llm-appendix}~(d,e) shows the calibration trajectories under partial auditing. Panel~(f) relates final test correctness to audit count. For N40, we reduce $\rho$ to $0.5$ and $0.25$ and compare with the runs at full auditing $\rho=1$. At $\rho=0.25$, PAC uses 72.0 audits per run on average, compared with 292.6 at full auditing, a $75.4\%$ reduction. With this coverage, PAC still reaches $95.2\%$ test correctness, and hacks make up $4.4\%$ of responses. At all three audit probabilities, both raw audit correction and PAC increase correctness and reduce hacks relative to the SFT checkpoint in every seed. Final correctness is not monotone in $\rho$. This ablation reduces the number of correctness labels, while every run still samples 320 responses. Because each accepted response is audited independently, every accepted response can receive an audit, and no subset is permanently excluded.

\paragraph{Takeaway.} Audit corrections avoid the reward hacking that GRPO exhibits: from the same SFT checkpoints, they increase correctness and reduce hacks. Projection raises correctness earlier in training, while raw audit correction reaches similar final correctness. Auditing one quarter of accepted responses retains about $97\%$ of the gain in final correctness that full auditing achieves.

\begin{figure*}[t]
\centering
\begingroup

\def\LLMDataPath{LLM}
\definecolor{llmGreen}{HTML}{228833}
\definecolor{llmVermilion}{HTML}{D55E00}

\begin{tikzpicture}
\begin{groupplot}[
  group style={
    group size=3 by 2,
    horizontal sep=1.30cm,
    vertical sep=1.5cm
  },
  width=0.175\textwidth,
  height=0.19\textwidth,
  scale only axis,
  grid=none,
  tick label style={font=\small},
  xlabel style={font=\small},
  ylabel style={
    font=\small,
    at={(axis description cs:-0.16,0.5)},
    anchor=south
  },
  title style={font=\small},
  legend style={
    font=\scriptsize,
    draw=none,
    fill=none,
    row sep=-1pt,
    cells={anchor=west},
    at={(0.03,0.03)},
    anchor=south west
  },
  legend cell align=left,
  enlargelimits=false,
  ymin=-0.03,
  ymax=1.05,
  ytick={0,0.5,1}
]

\nextgroupplot[
  title={(a)},
  xmin=0.60,xmax=1.05,
  xtick={0.6,0.8,1},
  xlabel={$J_R=p$},
  ylabel={$J_C=p_G$}
]

\addplot[black,solid,line width=1.6pt,
  mark=o,mark size=1.7pt,mark options={solid,fill=white}]
  table[x expr=\thisrow{J_R}/100,y expr=\thisrow{J_C}/100]
  {\LLMDataPath/appendix/N20_grpo.dat};
\addlegendentry{GRPO}

\addplot[green!60!black,dashed,line width=1.6pt,
  mark=o,mark size=1.7pt,mark options={solid,fill=white}]
  table[x expr=\thisrow{J_R}/100,y expr=\thisrow{J_C}/100]
  {\LLMDataPath/appendix/N20_raw.dat};
\addlegendentry{Raw}

\addplot[blue,solid,line width=1.6pt,
  mark=o,mark size=1.7pt,mark options={solid,fill=white}]
  table[x expr=\thisrow{J_R}/100,y expr=\thisrow{J_C}/100]
  {\LLMDataPath/appendix/N20_pac.dat};
\addlegendentry{PAC}

\draw[black,line width=1.6pt,->,>=stealth]
  (axis cs:0.897750,0.238250)
  -- (axis cs:0.913500,0.189250);
\draw[green!60!black,line width=1.6pt,->,>=stealth]
  (axis cs:0.8375625,0.5444375)
  -- (axis cs:0.8349375,0.6030625);
\draw[blue,line width=1.6pt,->,>=stealth]
  (axis cs:0.8925625,0.716875)
  -- (axis cs:0.9021875,0.778125);

\nextgroupplot[
  title={(b)},
  xmin=-0.03,xmax=1.05,
  xtick={0,0.5,1},
  xlabel={$p_H$},
  ylabel={$p_G$},
  legend style={
  at={(0.99,0.97)},
  anchor=north east
}
]
\addplot[black,solid,line width=1.6pt,
  mark=o,mark size=1.7pt,mark options={solid,fill=white}]
  table[x expr=\thisrow{p_H}/100,y expr=\thisrow{J_C}/100]
  {\LLMDataPath/appendix/N20_grpo.dat};
\addlegendentry{GRPO}

\addplot[green!60!black,dashed,line width=1.6pt,
  mark=o,mark size=1.7pt,mark options={solid,fill=white}]
  table[x expr=\thisrow{p_H}/100,y expr=\thisrow{J_C}/100]
  {\LLMDataPath/appendix/N20_raw.dat};
\addlegendentry{Raw}

\addplot[blue,solid,line width=1.6pt,
  mark=o,mark size=1.7pt,mark options={solid,fill=white}]
  table[x expr=\thisrow{p_H}/100,y expr=\thisrow{J_C}/100]
  {\LLMDataPath/appendix/N20_pac.dat};
\addlegendentry{PAC}

\draw[black,line width=1.6pt,->,>=stealth]
  (axis cs:0.659500,0.238250)
  -- (axis cs:0.724250,0.189250);
\draw[green!60!black,line width=1.6pt,->,>=stealth]
  (axis cs:0.293125,0.5444375)
  -- (axis cs:0.231875,0.6030625);
\draw[blue,line width=1.6pt,->,>=stealth]
  (axis cs:0.1756875,0.716875)
  -- (axis cs:0.1240625,0.778125);

\nextgroupplot[
  title={(c)},
  xmin=-0.03,xmax=1.05,
  xtick={0,0.5,1},
  xlabel={$p_H$},
  ylabel={$p_G$},
  legend style={
  at={(0.97,0.97)},
  anchor=north east
}
]

\addplot[black,solid,line width=1.6pt,
  mark=o,mark size=1.7pt,mark options={solid,fill=white}]
  table[x expr=\thisrow{p_H}/100,y expr=\thisrow{J_C}/100]
  {\LLMDataPath/appendix/N40_grpo.dat};
\addlegendentry{GRPO}

\addplot[green!60!black,dashed,line width=1.6pt,
  mark=o,mark size=1.7pt,mark options={solid,fill=white}]
  table[x expr=\thisrow{p_H}/100,y expr=\thisrow{J_C}/100]
  {\LLMDataPath/appendix/N40_raw.dat};
\addlegendentry{Raw}

\addplot[blue,solid,line width=1.6pt,
  mark=o,mark size=1.7pt,mark options={solid,fill=white}]
  table[x expr=\thisrow{p_H}/100,y expr=\thisrow{J_C}/100]
  {\LLMDataPath/appendix/N40_pac.dat};
\addlegendentry{PAC}

\draw[black,line width=1.6pt,->,>=stealth]
  (axis cs:0.666750,0.160000)
  -- (axis cs:0.724500,0.133750);
\draw[green!60!black,line width=1.6pt,->,>=stealth]
  (axis cs:0.3140625,0.3840625)
  -- (axis cs:0.2484375,0.4409375);
\draw[blue,line width=1.6pt,->,>=stealth]
  (axis cs:0.1625625,0.676000)
  -- (axis cs:0.1021875,0.760000);

\nextgroupplot[
  title={(d)},
  xmin=0,xmax=20,
  xtick={0,10,20},
  xlabel={Training round},
  ylabel={$p_G$}
]

\addplot[green!60!black,solid,line width=1.6pt,
  mark=o,mark size=1.7pt,mark options={solid,fill=white}]
  table[x=round,y expr=\thisrow{J_C}/100]
  {\LLMDataPath/appendix/N40_raw.dat};
\addlegendentry{$\rho=1$}

\addplot[orange!85!black,dashed,line width=2pt,mark=none]
  table[x=round,y expr=\thisrow{J_C}/100]
  {\LLMDataPath/appendix/N40_raw_50.dat};
\addlegendentry{$\rho=0.5$}

\addplot[violet,dotted,line width=2pt,mark=none]
  table[x=round,y expr=\thisrow{J_C}/100]
  {\LLMDataPath/appendix/N40_raw_25.dat};
\addlegendentry{$\rho=0.25$}

\nextgroupplot[
  title={(e)},
  xmin=0,xmax=20,
  xtick={0,10,20},
  xlabel={Training round},
  ylabel={$p_G$}
]

\addplot[blue,solid,line width=1.6pt,
  mark=o,mark size=1.7pt,mark options={solid,fill=white}]
  table[x=round,y expr=\thisrow{J_C}/100]
  {\LLMDataPath/appendix/N40_pac.dat};
\addlegendentry{$\rho=1$}

\addplot[llmGreen,dashed,line width=2pt,mark=none]
  table[x=round,y expr=\thisrow{J_C}/100]
  {\LLMDataPath/appendix/N40_pac_50.dat};
\addlegendentry{$\rho=0.5$}

\addplot[llmVermilion,dotted,line width=2pt,mark=none]
  table[x=round,y expr=\thisrow{J_C}/100]
  {\LLMDataPath/appendix/N40_pac_25.dat};
\addlegendentry{$\rho=0.25$}

\nextgroupplot[
  title={(f)},
  xmin=40,xmax=320,
  xtick={75,150,300},
  ymin=0.88,ymax=1.01,
  ytick={0.9,0.95,1},
  xlabel={Mean audits},
  ylabel={Test $p_G$},
ylabel style={
  font=\small,
  at={(axis description cs:-0.25,0.5)},
  anchor=south
},
]

\addplot[green!60!black,dashed,line width=1.6pt,
  mark=o,mark size=1.7pt,mark options={solid,fill=white}]
  table[x=audits,y expr=\thisrow{J_C}/100]
  {\LLMDataPath/test/raw_coverage.dat};
\addlegendentry{Raw audit}

\addplot[blue,solid,line width=1.6pt,
  mark=o,mark size=1.7pt,mark options={solid,fill=white}]
  table[x=audits,y expr=\thisrow{J_C}/100]
  {\LLMDataPath/test/pac_coverage.dat};
\addlegendentry{PAC}

\end{groupplot}
\end{tikzpicture}
\endgroup

\caption{
Initialization and audit ablations in the language model.
(a,b) With SFT demonstration proportions $G/H/N=0.4/0.4/0.2$,
GRPO increases verifier reward while reducing correctness.
Raw audit and PAC instead increase correctness and reduce hacks.
(c) With proportions $0.3/0.3/0.4$, both corrections again favor
correct responses, with PAC reaching higher correctness earlier.
(d,e) Correctness over training rounds for raw audit and PAC at
three audit probabilities $\rho$, using the initialization in (c).
PAC has higher mean correctness at round 5 at each probability.
(f) Final test correctness against audit count. Auditing fewer
responses retains high correctness with fewer labels, but PAC has
no final advantage over raw audit in these runs.
Curves show means over five seeds. Panels (a)--(e) use calibration
evaluations at rounds 0, 5, 10, and 20; arrows in (a)--(c) indicate
training direction. Panel (f) uses separate test prompts, with
points at $\rho \in \{0.25,0.5,1\}$ from left to right for each method.
}
\label{fig:llm-appendix}
\end{figure*}

\begin{table*}[t]
\centering
\caption{Final test outcomes after 20 rounds. Values are percentages, reported as mean $\pm$ one sample standard deviation across five training seeds. Each SFT row is one shared initialization. SFT evaluations are shared across runs, so no standard deviation
across training seeds is reported for these rows. N20 and N40 use
$G/H/N$ demonstration probabilities $40/40/20\%$ and $30/30/40\%$, respectively. Coverage ablations use N40.}
\label{tab:llm_appendix_outcomes}
\begin{tabular}{llcccc}
\toprule
Scenario & Method & Audit probability & $J_C=p_G$ & $J_R=p$ & $p_H$ \\
\midrule
N20 & SFT & --- & $31.2$ & $80.5$ & $49.2$ \\
N20 & GRPO & --- & $0.5 \pm 0.7$ & $98.8 \pm 0.4$ & $98.3 \pm 0.3$ \\
N20 & Raw audit & $1$ & $98.9 \pm 0.4$ & $99.2 \pm 0.0$ & $0.3 \pm 0.4$ \\
N20 & PAC & $1$ & $98.1 \pm 1.3$ & $99.4 \pm 0.3$ & $1.2 \pm 1.6$ \\
\midrule
N40 & SFT & --- & $30.5$ & $69.5$ & $39.1$ \\
N40 & GRPO & --- & $2.2 \pm 2.4$ & $99.4 \pm 0.7$ & $97.2 \pm 2.4$ \\
N40 & Raw audit & $1$ & $98.4 \pm 2.6$ & $99.7 \pm 0.7$ & $1.2 \pm 2.0$ \\
N40 & PAC & $1$ & $97.2 \pm 2.0$ & $98.9 \pm 1.7$ & $1.7 \pm 1.3$ \\
\midrule
N40 & Raw audit & $0.5$ & $97.2 \pm 2.0$ & $99.1 \pm 1.0$ & $1.9 \pm 2.3$ \\
N40 & PAC & $0.5$ & $93.9 \pm 4.6$ & $99.1 \pm 0.7$ & $5.2 \pm 4.4$ \\
N40 & Raw audit & $0.25$ & $96.1 \pm 1.1$ & $99.5 \pm 1.0$ & $3.4 \pm 1.8$ \\
N40 & PAC & $0.25$ & $95.2 \pm 2.1$ & $99.5 \pm 0.7$ & $4.4 \pm 2.6$ \\
\bottomrule
\end{tabular}
\end{table*}

\paragraph{Hyperparameters.} Table~\ref{tab:llm_appendix_hyperparameters}
collects the settings shared across training runs. Generation and
evaluation use the same temperature and token limit.

\begin{table}[t]
\centering
\caption{Settings for the language model experiments.}
\label{tab:llm_appendix_hyperparameters}
\begin{tabular}{ll}
\toprule
Setting & Value \\
\midrule
Model & Qwen2-0.5B \\
Input / output digits & 6 / 12 \\
Final SFT updates per scenario & 80 \\
Reinforcement learning seeds & 0--4 \\
Training rounds & 20 \\
Prompts per round / responses per prompt & 2 / 8 \\
Total training responses per run & 320 \\
Optimizer / learning rate & AdamW / $10^{-5}$ \\
Gradient norm cap / weight decay & 1 / 0 \\
KL coefficient & 0 \\
Advantage denominator offset & $10^{-4}$ \\
Temperature / maximum response tokens & 1 / 192 \\
Sampling & Full token distribution \\
Calibration rounds & 0, 5, 10, 20 \\
Calibration prompts / responses each & 8 / 4 \\
Test prompts / responses each & 32 / 4 \\
\bottomrule
\end{tabular}
\end{table}
\newpage

\end{document}